\documentclass[11pt]{article}

\usepackage[utf8]{inputenc} 
\usepackage{fullpage}
\usepackage[T1]{fontenc}    
\usepackage{url}            
\usepackage{booktabs}       
\usepackage{amsfonts}       
\usepackage{nicefrac}       
\usepackage{microtype}      
\usepackage{xcolor}         

\usepackage{amsmath}        
\usepackage{amsthm}
\usepackage{mathrsfs}
\usepackage{bbm}            
\usepackage{mathtools}
\usepackage{xcolor}
\usepackage{algorithm}
\usepackage{algorithmic}
\usepackage{thm-restate}
\usepackage{enumitem}

\usepackage[colorlinks, linkcolor=purple, filecolor = blue, citecolor = blue, urlcolor = olive]{hyperref}
\usepackage[capitalise]{cleveref}
\usepackage{natbib}

\allowdisplaybreaks

\theoremstyle{plain}
\newtheorem{theorem}{Theorem}[section]

\newtheorem{remark}[theorem]{Remark}

\theoremstyle{definition}

\newtheorem{assumption}{Assumption}

\newcommand{\defeq}{\coloneq}
\newcommand{\Reals}{\mathbb{R}} 
\newcommand{\ucb}{\mathrm{UCB}}
\newcommand{\lcb}{\mathrm{LCB}}
\newcommand{\ind}[1]{\mathbbm{1}{\{#1\}}}

\newcommand{\papertitle}{One Good Source is All You Need: Near-Optimal Regret for Bandits under Heterogeneous Noise} 

\newcommand{\algmab}{\texttt{SOAR}} 

\newcommand{\red}[1]{} 

\newcommand{\as}[1]{} 

\newcommand{\R}{{\mathbb R}}

\newcommand{\N}{{\mathbb N}}

\newcommand{\E}{{\mathbf E}}

\newcommand{\cD}{{\mathcal D}}

\newcommand{\cI}{{\mathcal I}}

\newcommand{\hsigma}{{\hat {\sigma}}}
\newcommand{\hmu}{{\hat {\mu}}}

\newcommand{\cG}{{\mathcal G}}

\newcommand{\cT}{{\mathcal T}}

\newcommand{\tO}{{\tilde O}}

\newcommand{\biggn}[1]{\bigg(#1\bigg)}

\newcommand{\biggsn}[1]{\bigg[#1\bigg]}

\newcommand{\abs}[1]{\Big|#1\Big|}
\newcommand{\algprec}{\textsc{Preprocess}}

\date{}

\title{\bfseries\papertitle}

\author{
	Amith Bhat Hosadurga Anand\\
	UIC\\
	\texttt{abhat69@uic.edu}
	\and
	Haipeng Luo\\
	USC\\
	\texttt{haipengl@usc.edu }
	\and
	Aadirupa Saha\\
	UIC\\
	\texttt{aadirupa@uic.edu}
}

\date{}

\begin{document}

\maketitle

\begin{abstract}
We study the $K$-armed Multiarmed Bandit (MAB) problem with $M$ heterogeneous data sources, each exhibiting unknown and distinct noise variances $\{\sigma_j^2\}_{j=1}^M$. The learner's objective is standard MAB regret minimization, with the additional complexity of adaptively selecting which data source to query from at each round.
We propose Source-Optimistic Adaptive Regret minimization (\algmab), a novel algorithm that quickly prunes high-variance sources using sharp variance-concentration bounds, followed by a \emph{`balanced min-max LCB-UCB approach'} that seamlessly integrates the parallel tasks of identifying the best arm and the optimal (minimum-variance) data source.
Our analysis shows that \algmab\ achieves an instance-dependent regret bound of
$
{\tO}\left({\sigma^*}^2\sum_{i=2}^K \frac{\log T}{\Delta_i} + \sqrt{K \sum_{j=1}^M \sigma_j^2}\right),
$
up to preprocessing costs depending only on problem parameters, where ${\sigma^*}^2 := \min_j \sigma_j^2$ is the minimum source variance and $\Delta_i$ denotes the suboptimality gap of the $i$-th arm.
This result is both surprising and promising as despite lacking prior knowledge of the minimum-variance source among $M$ alternatives, \algmab\ attains the optimal instance-dependent regret of standard single-source MAB with variance ${\sigma^*}^2$, while incurring only an small (and unavoidable) additive cost of $\tO(\sqrt{K \sum_{j=1}^M \sigma_j^2})$ towards the optimal (minimum variance) source identification.
Additionally, we show that \algmab\ achieves a minimax (instance-independent) regret of ${\tO}\left(\sigma^*\sqrt{KT} + \sqrt{K \sum_{j=1}^M \sigma_j^2}\right)$ for worst case problem instances.
Our theoretical bounds represent a significant improvement over some proposed baselines such as \emph{Uniform UCB} or \emph{Explore-then-Commit UCB}, which could potentially suffer regret scaling with $\sigma_{\max}^2$ in place of ${\sigma^*}^2$---a gap that can be arbitrarily large when $\sigma_{\max} \gg \sigma^*$.
Experiments on multiple synthetic problem instances and the real-world MovieLens\;25M dataset, demonstrating the superior performance of \algmab\ over all baselines.
Our work opens a new direction for adaptively leveraging heterogeneous data sources in bandit learning, extending the scope of standard MAB beyond traditional single-source settings, with applications to crowdsourcing with heterogeneous annotators, sensor selection in environmental monitoring, and multi-lab experimental design.

\end{abstract}

\section{Introduction}
Online learning and multi-armed bandits (MABs) constitute a central framework for sequential decision-making under uncertainty \citet{auer2002finite, CsabaNotes18} \as{use citep}.
In the classical stochastic MAB model, the learner repeatedly selects an arm and observes a noisy reward drawn from a fixed distribution associated with that arm.
A key implicit assumption in this literature is that feedback is generated from a single homogeneous data source.

However, in many real-world systems, observations may originate from \emph{multiple heterogeneous sources}, each exhibiting different and unknown levels of noise.
Such scenarios naturally arise in clinical trials conducted across hospitals with varying measurement reliability, recommender systems aggregating feedback from diverse user populations—for instance, movie recommendation platforms where user reviews and ratings vary widely in reliability—and online systems collecting results from multiple sensing or logging pipelines. 
In these settings, noisy feedback can significantly delay learning; therefore, successfully managing these multiple data sources while simultaneously identifying the most rewarding actions (arms) is critical for minimizing cumulative regret and maximizing overall system performance.
The motivation for studying this problem lies in its practical significance and inherent complexity. Decision-makers frequently face trade-offs between cheaper, noisy data sources and more expensive, reliable ones. 

\vspace{-5pt}
\paragraph{Related Work.} \as{I changed the citation package to natbib -- so please check the citations are reading correct}
This scenario naturally aligns with the literature on multi-fidelity learning and variance-adaptive bandits. Multi-fidelity bandits address scenarios in which learners can query information at different levels of fidelity, often at the expense of cost and accuracy. \cite{kandasamy2016multi} introduced the idea of leveraging inexpensive, low-fidelity evaluations to accelerate optimization with expensive, high-fidelity sources. Further developments (Song et al., 2019 \cite{song2019general} and Poloczek et al., 2017 \citep{poloczek2017multi}) \as{remove double texts and just use cite here} explored the use of Gaussian process models with Takeno et al., 2020  \citep{takeno2020multi} also extending to batched multi-fidelity queries. \\While related, our setting notably differs in that we focus explicitly on adaptive exploration across multiple sources with differing noise variances, rather than on explicit cost-accuracy trade-offs. \\ 

 \as{please link the table -- this is generally true for any float environment -- they should be referred from text} \as{also move it around to adjust the spaces properly. Can you also put the captions at the bottom}

Variance-adaptive bandit settings study exploration policies that explicitly account for the variance of observed rewards, with a rich literature focused on reward-dependent variance scaling. Audibert et al., 2009 \cite{audibert2009exploration} proposed UCB-V, which adjusts exploration using empirical variance; Gabillon et al., 2012 \cite{gabillon2012best}, and Cowan and Katehakis, 2018 \cite{cowan2018normal} further analyzed the benefits of variance-based sampling in best-arm identification. Howard et al., 2018 \cite{howard2018variance} proposed empirical Bernstein inequalities to tighten exploration bounds. Agrawal and Goyal, 2013 \cite{agrawal2013further} explored variance-aware Thompson Sampling. Although these works significantly improve learning under variable noise, they assume a single stream of data per arm and are not designed to reason about multiple parallel sources with heterogeneous noise sources and arm-independent structure, underscoring a clear gap in the literature.

\begin{table}[h]
  \caption{Summary of some related work in multi-armed bandits}
  \label{tab:related-work}
  \centering
  {\small
  \setlength{\tabcolsep}{4.5pt}
  \begin{tabular*}{\linewidth}{@{\extracolsep{\fill}}
    p{0.20\linewidth} p{0.4\linewidth} p{0.35\linewidth}
    @{}}
    \toprule
    \centering{\textit{Algorithm}} & \textit{Setting (incl. assumptions)} & \textit{Limitations} \\
    \midrule
    
\parbox[t]{\linewidth}{ \centering \emph{MF-UCB}\\[-2pt]{\footnotesize \citep{kandasamy2016multi}}} &
\begin{tabular}[t]{@{}l@{}} 
$\cdot$ Multi-fidelity $K$-armed bandit. \\[-1.5pt]
$\cdot$ Assumes known biases and fidelities. \\[-1.5pt]
\end{tabular} &
\begin{tabular}[t]{@{}l@{}} 
$\cdot$ Ignores variance entirely. \\[-1.5pt]
$\cdot$ Evaluated only on synthetic data.
\end{tabular} \\
\midrule

\parbox[t]{\linewidth}{ \centering \emph{MF-MI-Greedy}\\[-2pt]{\footnotesize \citep{song2019general}}} &
\begin{tabular}[t]{@{}l@{}} 
$\cdot$ Multi-fidelity Bayesian optimization. \\[-1.5pt]
$\cdot$ Joint GP model over fidelities. \\[-1.5pt]
$\cdot$ Cost-sensitive mutual information criterion.
\end{tabular} &
\begin{tabular}[t]{@{}l@{}} 
$\cdot$ Relies on GP prior assumptions. \\[-1.5pt]
$\cdot$ Computationally expensive. \\[-1.5pt]
\end{tabular} \\
\midrule



\parbox[t]{\linewidth}{ \centering \emph{UCB-V / PAC-UCB}\\[-2pt]{\footnotesize \citep{audibert2009exploration}}} &
\begin{tabular}[t]{@{}l@{}} 
$\cdot$ Stochastic MAB with bounded rewards. \\[-1.5pt]
$\cdot$ UCB involves an empirical variance. \\[-1.5pt]
$\cdot$ PAC-UCB variant for known horizon.
\end{tabular} &
\begin{tabular}[t]{@{}l@{}} 
$\cdot$ Single-source. \\[-1.5pt]
$\cdot$ Evaluated only on synthetic data.
\end{tabular} \\
\midrule 

\parbox[t]{\linewidth}{ \centering \emph{UGapE}\\[-2pt]{\footnotesize \citep{gabillon2012best}}} &
\begin{tabular}[t]{@{}l@{}} 
$\cdot$ Best-arm identification in stochastic MAB. \\[-1.5pt]
$\cdot$ Unified fixed-budget and fixed-confidence. \\[-1.5pt]
$\cdot$ Gap-based exploration with shared strategy.
\end{tabular} &
\begin{tabular}[t]{@{}l@{}} 
$\cdot$ Single-source. \\[-1.5pt]
$\cdot$ No empirical evaluation.
\end{tabular} \\
\midrule


\parbox[t]{\linewidth}{ \centering \emph{Thompson Sampling (TS)}\\[-2pt]{\footnotesize \citep{agrawal2013further}}} &
\begin{tabular}[t]{@{}l@{}} 
$\cdot$ Stochastic MAB with bounded rewards. \\[-1.5pt]
$\cdot$ Bayesian algorithm (uses prior), \\[-1.5pt] 
\  but frequentist regret guarantees.
\end{tabular} &
\begin{tabular}[t]{@{}l@{}} 
$\cdot$ Single-source. \\[-1.5pt]
$\cdot$ No empirical evaluation.
\end{tabular} \\
\bottomrule
\end{tabular*}}
\end{table}

\paragraph{Baselines.}
To bridge this gap, two intuitive baseline strategies naturally emerge but, unfortunately, fail dramatically in key worst-case scenarios. The first baseline selects data sources uniformly at random, effectively averaging the variance across all sources. In practice, this leads to severe performance degradation when all the sources, barring the optimal source exhibit extremely high variance—referred to as Worst-Case Instance 1 (WC-1). Here, the average variance is dominated by the high-variance sources, dramatically inflating the regret. \\
The second baseline employs a two-phase approach: it initially identifies the source with the lowest variance through a dedicated exploration phase before transitioning to standard MAB strategies. However, this method becomes problematic in instances where all sources exhibit nearly identical variances—Worst-Case Instance 2 (WC-2). In WC-2, the initial exploration phase becomes redundant and costly, resulting in significant regret without practical gain, since distinguishing among similar-variance sources is unnecessary.\\ A more detailed discussion of these problematic instances can be found in \cref{sec:app_baselines}.

\paragraph{Unresolved Questions.} These gaps in the baselines raise the following questions:
\begin{itemize}
    \item[(Q1)] Can a MAB algorithm adaptively leverage multiple data sources with unknown and differing noise variances, while achieving regret comparable to that of an oracle with access to the best source?
    \vspace{-5pt}
    \item[(Q2)] What regret guarantees are achievable in such settings, and under what conditions are they optimal?
\end{itemize}

\paragraph{Overview.}
In this paper, we address these questions by designing an algorithm that jointly explores arms and sources via confidence-based reasoning.
We propose \algmab\ (Source-Optimistic Adaptive Regret Minimization), which combines upper confidence bounds (UCBs) on arm rewards with lower confidence bounds (LCBs) on source variances.

\paragraph{Problem Statement (Informal).}
We consider a stochastic $K$-armed bandit problem with $M$ independent data sources.
Each arm $i \in [K]$ is associated with an unknown mean reward $\mu_i$, and each source $j \in [M]$ is associated with an unknown noise variance $\sigma_j^2$.\\
At each round, the learner selects an \emph{arm--source pair} $(i,j)$ and observes a noisy reward whose variance depends only on the source.\\
The learner has no prior knowledge of either the arm means or the source variances.
The objective is to minimize cumulative regret relative to the optimal arm while adaptively selecting sources to mitigate the effect of heteroscedastic noise on learning efficiency.

\paragraph{Our contributions.}
Our analysis shows that \algmab's joint LCB--UCB strategy achieves instance-dependent regret that matches, up to logarithmic factors, that of an oracle MAB with prior knowledge of the optimal source variance. Our main contributions can be summarized as follows:
\vspace{-5 pt}
\begin{enumerate}[leftmargin=13pt, parsep=0pt]
    \item \textbf{Problem formulation.} We introduce a stochastic MAB with multiple heterogeneous data sources and unknown noise variances (\cref{sec:problemSetting}).
    \item \textbf{Algorithmic preliminaries.} \as{Caps? check everywhere, we are capitalizing para headings it seems like} We develop the necessary variance and reward estimation machinery, including UCB and LCB constructions, algorithm parameters, and concentration guarantees that underpin our approach (\cref{sec:warmUp}).
    \item \textbf{Algorithm design.} We propose \algmab, a variance-aware algorithm that jointly selects arms and sources using an LCB--UCB mechanism, including a preprocessing phase that eliminates noisy sources (\cref{sec:alg_fin}).
    \item \textbf{Baseline strategies and limitations.} We analyze natural baseline strategies for source selection and identify instances of suboptimal regret (\cref{sec:app_baselines}).
    \item \textbf{Theoretical analysis.} We establish high-probability, instance-dependent regret guarantees for \algmab\ (\cref{sec:reg_fin}), showing oracle-optimality up to logarithmic factors (\cref{rem:regfin}) and improvement over the considered baselines (\cref{rem:regfinComp}). 
    \item \textbf{Empirical evaluation.} We validate our theory on synthetic benchmarks and a real-world MovieLens dataset, demonstrating improvements over natural baselines (\cref{sec:expts}).
\end{enumerate}

\section{Problem Setting}\label{sec:problemSetting}
In this section, we formalize the heterogeneous multi-source MAB problem by specifying the arms, sources, reward model, and regret formulation.

\noindent \textbf{Useful Notation.} 
Let $\R_+$ and $\N$ denote the set of positive reals and positive integers, respectively.
Let $[n] := \{1,2, \ldots n\}$, for any $n \in \N$.\\ 

\noindent Consider a stochastic bandit problem with $K$ arms $[K] \defeq \{1,\dots,K\}$ and $M$ data sources $[M] \defeq \{1,\dots,M\}$.
Each arm $i \in [K]$ has an unknown mean reward $\mu_i \in \Reals$, while each data source $j \in [M]$ has an unknown noise standard deviation $\sigma_j \in \Reals_+$ and a (possibly unknown) fourth central moment $\kappa_j \in \Reals$.

At each round $t \in [T]$, the learner selects an arm $i_t \in [K]$ and a source $j_t \in [M]$, then observes a reward
$X_t = \mu_{i_t} + \varepsilon_t$, where $\varepsilon_t \sim \cD(j_t)$ and $\cD(j_t)$ is a mean-zero distribution with variance $\sigma_{j}^2$ and fourth central moment $\kappa_j$. Equivalently, for any arm $i$ at round $t$, $\E[(X_{t}-\mu_{i_t})^4 \mid j_t]=\kappa_{j_t}$ since $X_{t}-\mu_{i_t}=\varepsilon_t$.

\vspace{-5pt}
\begin{assumption}
  For all arms $i\in[K]$, $\mu_i \in [0, \bar \mu]$ and for all sources $j \in [M]$, support$(\cD_{j_{t}}) \subseteq \left[\bar \eta, \bar \eta \right]$.
\label{ass:bounded}
\end{assumption}
\vspace{-5pt}
\noindent Let $i^* \defeq \arg\max_{i \in [K]} \mu_i$ denote the optimal arm  and $j^* \defeq \arg\min_{j \in [M]} \sigma_j$ denote the optimal source (without loss of generality, we assume \ $i^* = 1$ and $j^* = 1$). \\
We let $\mu^* \defeq \mu_{i^*}$ and $\sigma^* \defeq \sigma_{j^*}$ denote, respectively, the largest mean reward across arms and the smallest standard deviation across sources. 

\noindent The learner’s objective is to minimize the expected regret, defined as:
$
\text{Reg}_T \defeq \mathbb{E} \left[ \sum _{t=1}^T \big( \mu^* - \mu_{i_t} \big) \right]
$.

\vspace{-10pt}
\paragraph{Impact of Source Selection on Regret.}Although the expected regret does not explicitly depend on the selected sources $j_t$, source selection has a critical indirect effect.
Failure to identify and prioritize low-variance sources leads to noisier observations, slowing reward estimation, and inflating regret.
Moreover, a simple two-step strategy that first identifies the lowest-variance source and then runs a standard MAB algorithm on it is generally suboptimal, as detailed in the analysis of Baseline-2 in \cref{sec:app_baselines}.

\vspace{-10pt}
\section{Warm-Up: Parameter Estimation and Concentration}\label{sec:warmUp}
In the following subsections, we introduce estimators of mean and variance parameters, together with the confidence bounds for the different stages of \algmab \ which are further elaborated upon in \cref{sec:alg_fin}. Before we proceed, we define some more notation.
\vspace{-10pt}
\paragraph{Notation.}
For any $t \in [T]$, define

\vspace{-10pt}
\begin{equation}\label{eq:counts}
\begin{aligned}
n_i(t) \defeq \sum_{s=1}^t \ind{i_s = i}, 
\qquad 
m_j(t) \defeq \sum_{s=1}^t \ind{j_s = j},  \qquad
n_{ij}(t) \defeq \sum_{s=1}^t \ind{i_s = i}\ind{j_s = j}.
\end{aligned}
\end{equation}

where $n_i(t)$ denotes the number of times arm $i$ is selected up to time $t$,
$m_j(t)$ denotes the number of times source $j$ is selected up to time $t$,
and $n_{ij}(t)$ denotes the number of times the pair $(i,j)$ is selected up to time $t$.

\vspace{-5pt}
\paragraph{A brief description of \algmab.}\algmab \ proceeds in two stages: (i) a call to the preprocessing subroutine \algprec, which constructs a high-probability upper bound on the variances using its own variance estimator and associated confidence bounds, and (ii) the adaptive Upper Confidence Bound (UCB) - Lower Confidence Bound (LCB) selection phase, which relies on a separate estimation-and-confidence scheme for decision-making.
\vspace{-5pt}
\subsection{\algprec}
This subsection presents the estimators and confidence bounds used in the preprocessing subroutine \algprec, including the corresponding UCB/LCB definitions. A complete description of the overall algorithmic procedure is deferred to \cref{sec:alg_fin}.

\vspace{-10pt}
\paragraph{Parameter Estimation.}The empirical mean reward of arm $i$, and the empirical variance of source $j$
\emph{computed during $\algprec$}, are given by
\begin{equation}\label{eq:pp-mu-sigma}
\begin{aligned}
\hat \mu_i(t) \defeq \frac{1}{n_i(t)} \sum_{s \leq t : i_s = i} X_s, \qquad
\hat {\sigma}_{j,\mathrm{pre}}^2(t) \defeq \frac{1}{m_j(t)} \sum_{s \leq t : j_s = j}
\big( X_s - \hat \mu_{i_s}(t) \big)^2.
\end{aligned}
\end{equation}

\vspace{-5pt}
\paragraph{Confidence Bounds.}For each arm $i \in [K]$ and each source $j \in [M]$, we define the Upper Confidence Bound (UCB) and the Lower Confidence Bound (LCB) on the variance estimate ${\sigma}_{t,\mathrm{pre}}^2(j)$. Our bounds rely on the following concentration lemma.

\begin{restatable}[\algprec \ Variance Concentration]{lemma}{VarConcPreproc}
\label{lem:preprocVarConc}
Fix $\delta \in (0,1)$ and a sampling budget $\tau_p \in \mathbb{N}$.
Assume $\epsilon'' < \min\!\left\{ 6 \sigma_j^2,\; \tfrac{18\sigma_j^4}{\bar{\eta}^2} \right\}$ for each source $j \in [M]$, where $\epsilon''$ is the parameter appearing in Bernstein's inequality \as{i in Caps since its a name -- check everywhere} \cite{blm13,MATH281C_Lecture4} \as{use citep when references should come in parenthesis, it breaks the flow of reading ow, check everywhere}.
Then, with probability at least $\left(1-\delta/3\right)$,
\begin{equation*}\label{eq:var-conc}
\abs{{\hat{\sigma}_{j,\mathrm{pre}}}^2(\tau_p) - \sigma_j^2}
\;\le\;
8\bar{\eta}\,\sigma_j \sqrt{\frac{\log(4M/\delta)}{\tau_p}}.
\end{equation*}
\end{restatable}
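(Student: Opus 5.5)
The natural reading is that \algprec\ allocates $\tau_p$ samples to each source $j$, so $m_j(\tau_p)=\tau_p$ (spread over arms in round-robin fashion). The plan is to decompose the error of $\hat{\sigma}_{j,\mathrm{pre}}^2(\tau_p)$ into a pure-noise term and a mean-estimation term. Writing $X_s-\hat\mu_{i_s}=\varepsilon_s-(\hat\mu_{i_s}-\mu_{i_s})$ and expanding the square gives
\begin{equation*}
\hat{\sigma}_{j,\mathrm{pre}}^2(\tau_p)-\sigma_j^2
=\Big(\frac{1}{\tau_p}\sum_{s:j_s=j}\varepsilon_s^2-\sigma_j^2\Big)
-\frac{2}{\tau_p}\sum_{s:j_s=j}\varepsilon_s(\hat\mu_{i_s}-\mu_{i_s})
+\frac{1}{\tau_p}\sum_{s:j_s=j}(\hat\mu_{i_s}-\mu_{i_s})^2 .
\end{equation*}
I will bound these three terms separately and combine them with a union bound over the $M$ sources, splitting the failure budget $\delta/3$ across the concentration events. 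This splitting is what produces the $\log(4M/\delta)$ factor.

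\textbf{Main term.} The variables $\varepsilon_s^2-\sigma_j^2$ are i.i.d. and mean zero. They are bounded by $\bar\eta^2$ by \cref{ass:bounded}. Their variance is $\kappa_j-\sigma_j^4\le \bar\eta^2\sigma_j^2$, since $\varepsilon^4\le\bar\eta^2\varepsilon^2$. Bernstein's inequality then gives a deviation of order
\begin{equation*}
\bar\eta\sigma_j\sqrt{\frac{2\log(4M/\delta)}{\tau_p}}+\frac{\bar\eta^2\log(4M/\delta)}{3\tau_p}.
\end{equation*}
This is where the hypothesis on $\epsilon''$ enters. If the deviation level $\epsilon''$ satisfies $\epsilon''<18\sigma_j^4/\bar\eta^2$ (and $\epsilon''<6\sigma_j^2$), then the quadratic term $\bar\eta^2\epsilon''$ in the Bernstein denominator $2(\bar\eta^2\sigma_j^2+\bar\eta^2\epsilon''/3)$ is dominated by a constant times the variance term. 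The tail is therefore sub-Gaussian, $\exp(-c\,\tau_p\epsilon''^2/(\bar\eta^2\sigma_j^2))$. Inverting this yields a bound of the form $C\bar\eta\sigma_j\sqrt{\log(4M/\delta)/\tau_p}$ with no lower-order term.

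\textbf{Cross and mean-error terms.} Each $\hat\mu_i$ is an average of roughly $\tau_p/K$ or more bounded noise samples with variance at most $\sigma_{\max}^2$. To keep the final bound proportional to $\sigma_j$, I will not bound $\hat\mu$ through Hoeffding with $\bar\eta$ alone. Instead I will regroup the cross term arm by arm:
\begin{equation*}
\frac{2}{\tau_p}\sum_i(\hat\mu_i-\mu_i)\sum_{s:i_s=i,j_s=j}\varepsilon_s .
\end{equation*}
The inner sum is a sum of source-$j$ noise, so Bernstein/Hoeffding bounds it by $\bar\eta\sqrt{n_{ij}\log}$ or better, $\sigma_j\sqrt{n_{ij}\log}$ plus a lower-order term. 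The factor $|\hat\mu_i-\mu_i|\le\bar\eta\sqrt{\log/n_i}$ is small. Their product is of lower order, roughly $\bar\eta\sigma_j\log/\tau_p$ per arm, and the squared term is of the same order. Both are then absorbed into the leading constant (up to $8$) once $\tau_p$ exceeds $\log(4M/\delta)$ times a problem-dependent factor; the $\epsilon''$ conditions implicitly guarantee this.

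\textbf{Main obstacle.} The delicate step is making the cross and squared terms scale with $\sigma_j$ rather than with $\bar\eta$ or $\sigma_{\max}$, because $\hat\mu_i$ mixes noise from all sources. An alternative is a cruder argument: bound $(\hat\mu_i-\mu_i)^2$ directly and invoke the $\epsilon''<6\sigma_j^2$ condition to claim that these terms are at most a constant fraction of the main deviation. I would try that first, then fix the constants, say $\sqrt2+$ slack $\le 8$.
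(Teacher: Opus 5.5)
Your architecture matches the paper's. You isolate the pure-noise average $\frac{1}{\tau_p}\sum_k\varepsilon_k^2-\sigma_j^2$, apply Bernstein with variance proxy $\bar\eta^2\sigma_j^2$, use $\epsilon''<6\sigma_j^2$ to suppress the range term, and argue that the mean-estimation error is lower order. Your exact three-term expansion is actually preferable to the paper's step, which applies $(a+b)^2\le 2a^2+2b^2$ inside the absolute value. That step controls only the upward deviation, and it compares $\frac{2}{\tau_p}\sum_k(X_k-\mu_i)^2\approx 2\sigma_j^2$ with $\sigma_j^2$.

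The genuine gap is the sampling model. \algprec\ (Algorithm~\ref{alg:preprocess}) does not spread pulls over arms: it queries every source $\tau_p$ times on one fixed arm $i_0$. In your round-robin model, the step ``both are absorbed into the constant $8$; the $\epsilon''$ conditions implicitly guarantee this'' is false. Write $L$ for the logarithmic factor. Your own per-arm estimate makes the cross term, summed over arms, of order $K\bar\eta\sigma_j L/\tau_p$. This is not slack: estimating $K$ separate means creates a genuine bias (for $M=1$ it is exactly $-K\sigma_j^2/\tau_p$). The $\epsilon''$ hypotheses only force $\tau_p$ to be at least of order $\bar\eta^2L/\sigma_j^2$ and say nothing about $K$. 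For a concrete failure, take $M=1$, noise $\pm\bar\eta$ with equal probability (so $\sigma_j=\bar\eta$), and two pulls per arm. Then $\frac{1}{\tau_p}\sum_k\varepsilon_k^2=\sigma_j^2$ exactly, yet $\hat\sigma_{j,\mathrm{pre}}^2=\frac{1}{2\tau_p}\sum_i(\varepsilon_{i1}-\varepsilon_{i2})^2$ concentrates at $\sigma_j^2/2$. The claimed bound therefore fails once $\tau_p$ exceeds a few hundred times $\log(4M/\delta)$, even though both $\epsilon''$ conditions hold.

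The repair is to use the actual scheme, and then your ``main obstacle'' disappears. There are two readings of $\hat\mu_{i_0}$.

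\textbf{Source-$j$ mean.} If $\hat\mu_{i_0}$ is the mean of source $j$'s own $\tau_p$ samples, as the paper's proof takes it, the cross and squared terms combine exactly into $-(\hat\mu_{i_0}-\mu_{i_0})^2$. Bernstein with variance $\sigma_j^2$, together with the second $\epsilon''$ condition, gives $(\hat\mu_{i_0}-\mu_{i_0})^2=O(\sigma_j^2L/\tau_p)$; this is the paper's term (II). It is at most $\bar\eta\sigma_j\sqrt{L/\tau_p}$ once $\tau_p\ge 4\sigma_j^2L/\bar\eta^2$, and $\tau_p\ge 4L$ suffices since $\sigma_j\le\bar\eta$.

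\textbf{Pooled mean.} If instead $\hat\mu_{i_0}$ pools all $M\tau_p$ pulls of $i_0$, your cruder route already suffices. Hoeffding gives $|\hat\mu_{i_0}-\mu_{i_0}|\le\bar\eta\sqrt{2L/(M\tau_p)}$ and $|\frac{1}{\tau_p}\sum_k\varepsilon_k|\le\bar\eta\sqrt{2L/\tau_p}$, so both extra terms are $O(\bar\eta^2L/\tau_p)$. The hypothesis $\epsilon''<6\sigma_j^2$, read at deviation level $\epsilon''\asymp\bar\eta\sigma_j\sqrt{L/\tau_p}$, says precisely that $\bar\eta\sqrt{L/\tau_p}\le c\,\sigma_j$. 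That converts $O(\bar\eta^2L/\tau_p)$ into $O(\bar\eta\sigma_j\sqrt{L/\tau_p})$, so nothing needs to scale with $\sigma_j$ intrinsically.

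Finally, splitting $\delta/3$ over $M$ sources and two two-sided events gives $\log(12M/\delta)$, not $\log(4M/\delta)$; this is also what the paper's proof actually ends with. To reach the stated form, use $\log(12M/\delta)\le 2\log(4M/\delta)$ and absorb the factor $\sqrt2$ into the constant.
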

\begin{proof}
The proof is deferred to \cref{pf:preprocVarConc}.
\end{proof}

\noindent Using \cref{lem:preprocVarConc} we can derive the following equations:
\begin{equation}\label{eq:pp_ucb_var}
\ucb_{\tau_p}^{\sigma,\mathrm{pre}}(j) \defeq \hat{\sigma}_{j, \mathrm{pre}}^2(\tau_p)
+ 8\bar{\eta}^2
\sqrt{\frac{\log(12M/\delta)}{\tau_p}},
\end{equation}
\begin{equation}\label{eq:pp_lcb_var}
\lcb_{\tau_p}^{\sigma,\mathrm{pre}}(j) \defeq \max\Biggl\{
\hat{\sigma}_{j,\mathrm{pre}}^2(\tau_p)
- 8\bar{\eta}^2
\sqrt{\frac{\log(12M/\delta)}{\tau_p}},
\,0
\Biggr\}.
\end{equation}


\vspace{-5pt}
\subsection{The Adaptive UCB-LCB Selection Phase of \algmab.} 
This subsection presents the estimators and confidence bounds used after stage (i). In short, stage (i) prunes the variance landscape by eliminating high-variance sources, ensuring that every surviving source $j$ satisfies $\sigma_j^2 \le {\sigma^*}^2 + {c^*}^2,$
where $\sigma^*$ denotes the variance of the optimal source (as defined in \cref{sec:problemSetting}) and $c^*$ is a user-chosen algorithm tolerance parameter (of the same order as $\sigma_j$). Let $S_{\mathcal{G}}$ denote the set of remaining sources, and let its cardinality be denoted by $\tilde{M} \defeq |S_{\mathcal{G}}|$. Clearly $S_{\cG} \subseteq [M].$ A complete description of the overall algorithmic procedure is deferred to \cref{sec:alg_fin}.

\begin{remark}\label{rem:cstar}
Intuitively, ${c^*}^2$ serves as a variance floor: Sources with $\sigma_j^2 < {c^*}^2$ are treated as having effective variance ${c^*}^2$ (handled as a separate case in our analysis in \cref{thm:regfintwo}), while our main regret analysis focuses on the regime $\exists j \in S_{\cG}$ such that $\sigma_j^2 \ge {c^*}^2$. This design avoids issues arising from extremely low-variance sources.
\end{remark}

\vspace{-5pt}
\paragraph{Parameter Estimators.} $\algprec$ (\cref{alg:preprocess}) queries sources using a single fixed arm, rather than multiple arms as in the adaptive UCB--LCB phase. This motivates a separate set of \textbf{unbiased pooled} \as{pooled as in?} estimators:
\begin{equation}\label{eq:mu-sigma}
\begin{aligned}
\hat \mu_i(t) &\defeq \frac{1}{n_i(t)} \sum_{s \leq t : i_s = i} X_s, 
\\
\hat\mu_{ij}(t) &\equiv \hat\mu_i(\tau_{i,j}(t))
\defeq \frac{1}{n_{ij}(t)}\sum_{\ell \in \tau_{i,j}(t)}X_\ell,
\\
\hat\sigma_{ij}^{2}(t)  &\defeq \frac{1}{n_{ij}(t) - 1} \sum_{\ell \in \tau_{i,j}(t)}
\bigl( X_\ell - \hat\mu_{ij}(t)\bigr)^2,
\\ 
\hat\sigma_{j}^2(t)
&\defeq \frac{\sum_{i \in [K]} (n_{ij}(t) - 1)\hat\sigma_{ij}^{2}(t)}{\sum_{i = 1}^{K} n_{ij}(t) - K}
= \frac{\sum_{i \in [K]} \sum_{\ell \in \tau_{i,j}(t)}
\bigl( X_\ell - \hat\mu_{ij}(t)\bigr)^2}{m_j(t) - K}.
\end{aligned}
\end{equation}
Here, $\tau_{i,j}(t) \defeq \{\ell \le t \mid j_\ell = j,\; i_\ell = i\}$ for $j \in S_{\mathcal{G}}$ and $i \in [K]$ denotes the index set of time steps up to $t$ in which arm $i$ is pulled in combination with source $j$, and its cardinality is $|\tau_{i,j}(t)| = n_{ij}(t)$.

\paragraph{A Brief Description of the Estimators.}
\begin{itemize}[leftmargin=12pt]
    \item $\hat{\mu}_{ij}(t)$ is the empirical mean reward of arm $i$ when queried specifically through source $j$.
    \item $\hat{\sigma}_{ij}^2(t)$ is the corresponding unbiased sample variance estimate for the pair $(i,j)$ computed from the same restricted observations.
    \item $\hat{\sigma}_j^2(t)$ is the pooled (unbiased) variance estimate of source $j$, obtained by aggregating the sample variances $\hat{\sigma}_{ij}^2(t)$ across all arms and normalizing by the total degrees of freedom.
\end{itemize}

\vspace{-5pt}
\paragraph{Confidence Bound on the Variance Estimate.} Recall that for each source $j \in [M]$, the fourth central moment is denoted by $\kappa_j$ \as{recall from where? If you have not defined it before, please define it in math notation}. We now define  \begin{equation}\label{eq:qdefn}
Q_j(t)=
\begin{cases}
\max \{ \kappa_j, \nu \} & \text{if } \kappa_j  \text{ is  known}\\
\max \{ \bar \eta^2 \hat \sigma_j^2(t), \nu \} & \text{if } \kappa_j  \text{ is  unknown}
\end{cases}
\end{equation}
where $\nu \in \mathbb{R}_+$ is a user-defined algorithm parameter chosen to be on the same scale as the fourth central moments $\{\kappa_j\}_{j \in [M]}$, and serves a role similar to ${c^*}^2$ (see \cref{rem:cstar}).\\
Armed with this knowledge we now define the Lower Confidence Bound (LCB) for each source $j \in S_{\cG}$ on the variance estimate $\hat {\sigma}^2_{j}(t)$. Our bound relies on the following concentration lemma.

\begin{restatable}[Source Variance Concentration]{lemma}{sigconc}\label{lem:sigConc}
Assume there exists a source $ j \in S_{\cG} \subseteq [M]$ such that $\sigma_j^2 \ge {c^*}^2$ where ${c^*} \in \mathbb{R}_+$ is a user-defined parameter of the order of $\sigma_j$. Recall $Q_j(t)$ as defined in \cref{eq:qdefn} where $\nu \in \mathbb{R}_+$ is a user-defined parameter chosen on the scale of fourth central moment $\kappa_j$.
If the source $j$, is queried for atleast $m_j(t)$ iterations where
$
m_j(t) = K + \frac{4\bar{\eta}^4 \log(3MT/\delta)}{\nu},
$  
then for any $t \in [T]$, with probability at least $1-\delta/3$,
\begin{equation}
\big| \hat{\sigma}_j^2(t) - \sigma_j^2 \big|
\;\le\;
\sqrt{\frac{Q_j(t) \log(3MT/\delta)}{m_j(t)-K}} .
\end{equation}
\end{restatable}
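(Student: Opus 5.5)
We will prove the bound in \cref{lem:sigConc} by writing the pooled estimator $\hat\sigma_j^2(t)$ as an average of $m_j(t)-K$ approximately independent terms and applying Bernstein's inequality, as in \cref{lem:preprocVarConc}, then taking a union bound over sources and rounds.

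\emph{Step 1 (reduction to noise variables).} For each pair $(i,j)$ we have $X_\ell-\hat\mu_{ij}(t)=\varepsilon_\ell-\bar\varepsilon_{ij}(t)$, where $\bar\varepsilon_{ij}(t)$ is the average noise on $\tau_{i,j}(t)$. Hence
\[
\sum_{\ell\in\tau_{i,j}(t)}\bigl(X_\ell-\hat\mu_{ij}(t)\bigr)^2=\sum_{\ell\in\tau_{i,j}(t)}\varepsilon_\ell^2-n_{ij}(t)\,\bar\varepsilon_{ij}(t)^2 .
\]
By the standard Helmert (orthogonal) transformation within each arm block, this sum of squares equals a sum of $n_{ij}(t)-1$ terms. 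Each term has mean $\sigma_j^2$, and the terms are uncorrelated with fourth-moment control of order $\kappa_j$. Pooling over the $K$ arms, $\hat\sigma_j^2(t)-\sigma_j^2$ becomes an average of $m_j(t)-K$ centered terms. Because arm and source choices depend on the past, we will instead index the noise by the $k$-th use of source $j$, which gives an i.i.d.\ stack $\varepsilon^{(j)}_1,\varepsilon^{(j)}_2,\dots$. The statement is then applied on this stack for every fixed count, with a union bound over counts $\le T$. This union bound is the source of the factor $T$ inside $\log(3MT/\delta)$.

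\emph{Step 2 (Bernstein).} The summands $\varepsilon_\ell^2-\sigma_j^2$ are bounded by $\bar\eta^2$ in absolute value and have variance $\kappa_j-\sigma_j^4\le\kappa_j\le Q_j$. Bernstein's inequality then gives, with probability $1-\delta/(3MT)$, a deviation at most
\[
\sqrt{\frac{2\kappa_j L}{m_j-K}}+\frac{2\bar\eta^2L}{3(m_j-K)},\qquad L=\log(3MT/\delta).
\]
The cross term $n_{ij}\bar\varepsilon_{ij}^2$ has expectation $\sigma_j^2$ per arm, and it is absorbed by the degrees-of-freedom correction. Its fluctuation is dominated by the same quantities.

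\emph{Step 3 (absorbing the linear term).} The assumption $m_j(t)-K\ge 4\bar\eta^4L/\nu$ gives $\bar\eta^2L/(m_j-K)\le\sqrt{\nu L/(4(m_j-K))}$. This turns the range term into a square-root term of the same form with $\nu\le Q_j$, so the two terms combine to $\sqrt{Q_jL/(m_j-K)}$ up to constants. When $\kappa_j$ is unknown, we use $\kappa_j\le\bar\eta^2\sigma_j^2$, which holds because the support lies in $[-\bar\eta,\bar\eta]$. We then replace $\sigma_j^2$ by $\hat\sigma_j^2(t)$ via an empirical-Bernstein argument: on the good event $\sigma_j^2\le\hat\sigma_j^2+\text{(deviation)}$, and solving the resulting quadratic inequality shows that using $\max\{\bar\eta^2\hat\sigma_j^2,\nu\}$ costs only a constant factor. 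The hypothesis $\sigma_j^2\ge{c^*}^2$ is used here to keep $\sigma_j^2$ comparable to its estimate.

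\emph{Main obstacle.} We expect two difficulties. The first is obtaining exactly constant $1$ in the final bound, since the Bernstein and estimator-substitution constants must fit under $\sqrt{Q_jL/(m_j-K)}$. We would try to get these constants to fit by choosing the factor $4$ in the burn-in condition and possibly slackening through $\nu$. The second is rigorously handling the data-dependent allocation of pulls across arms in the pooled estimator. For that we would use the stack argument with a union bound over all allocations $(n_{1j},\dots,n_{Kj})$, or alternatively a martingale Freedman inequality.
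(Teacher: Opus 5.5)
Your route (Bernstein on an i.i.d.\ stack of noise values, plus union bounds) differs from the paper's (Freedman's inequality applied to a claimed martingale difference sequence), but it has two genuine gaps.

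\textbf{The constant $1$ is out of reach.} The remedies you propose are not available. In Step~2 you control each $(j,t)$-event at level $\delta/(3MT)=e^{-L}$. At that level the leading Bernstein term $\sqrt{2(\kappa_j-\sigma_j^4)L/(m_j(t)-K)}$ is asymptotically sharp, by moderate deviations for the i.i.d.\ $\varepsilon_\ell^2$. Suppose $\nu\le\kappa_j$ (so $Q_j=\kappa_j$) and $\kappa_j>2\sigma_j^4$, as for truncated Gaussian noise or mass $p/2$ at $\pm\bar\eta$ with $p<1/2$. Then no argument of this type gives constant $1$. The burn-in factor $4$ is fixed by the statement, and the lemma must hold for every user choice of $\nu$, so you cannot tune either. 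Enlarging the burn-in only shrinks the range term; it never removes the $\sqrt2$. The paper does not prove constant $1$ either. Its Freedman step with optimized $\lambda^*$ ends with $2\sqrt{Q_j(t)\log(3MT/\delta)/(m_j(t)-K)}$. The burn-in $m_j(t)>K+4\bar\eta^4\log(3MT/\delta)/\nu$ is exactly the range condition $\lambda^*<1/(2\bar\eta^2)$ there. The factor $2$ is also what \eqref{eq:lcb_var} and the proof of \cref{cor:varsandwich} use, so aim for that absolute-constant version.

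\textbf{The cross term is not handled.} Steps~1--2 leave a hole exactly where the difficulty lies. For non-Gaussian noise the Helmert coordinates $Z_k$ are uncorrelated, but the $Z_k^2$ are in general correlated: $\mathrm{Cov}(Z_k^2,Z_m^2)$ is proportional to $\kappa_j-3\sigma_j^4$. So they cannot feed Bernstein. You in fact switch to $\varepsilon_\ell^2-\sigma_j^2$, which leaves the cross term $\sum_{i}\bigl(n_{ij}(t)\bar\varepsilon_{ij}(t)^2-\sigma_j^2\bigr)$. Saying it is ``absorbed by the degrees-of-freedom correction'' is not an argument, for several reasons.
\begin{itemize}
\item On the source-level stack, which entries go to which arm is decided adaptively, so $\bar\varepsilon_{ij}(t)$ is not a function of a fixed segment.
\item A union bound over count vectors $(n_{1j},\dots,n_{Kj})$ does not fix that assignment, and it would put $K\log T$ inside the logarithm anyway.
\item Per-(arm, source) stacks only give $n_{ij}\bar\varepsilon_{ij}^2=O(\sigma_j^2L'+\bar\eta^2L'^2/n_{ij})$ with $L'=\log(3KMT/\delta)$. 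That is an extra deviation of order $K\sigma_j^2L'/(m_j(t)-K)$. It falls below $\sqrt{Q_jL/(m_j(t)-K)}$ only once $m_j(t)-K$ is of order $K^2\sigma_j^4L/Q_j$, which the stated burn-in does not ensure.
\item Bernstein over all $m_j(t)$ summands also loses a factor $\sqrt{m_j(t)/(m_j(t)-K)}$.
\end{itemize}
This is not just slack. Under general adaptive allocations with Gaussian-like noise, continuing an arm only when its first two noise values have large mean in absolute value biases $\hat\sigma_j^2(t)$ upward by order $K\sigma_j^2/(m_j(t)-K)$. So you need a $K$-dependent condition, or an argument specific to the algorithm's allocation.

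The paper's proof does not settle this either. Its $D_k$ contains $\hat\mu_{i_kj}(t)$, which depends on samples after $k$, so $\{D_k\}$ is not adapted.

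Your empirical-Bernstein handling of unknown $\kappa_j$, via $\kappa_j\le\bar\eta^2\sigma_j^2$ and a quadratic inequality, is the right idea. It fills a step the paper skips: the paper simply asserts $\kappa_j\le Q_j(t)=\max\{\bar\eta^2\hat\sigma_j^2(t),\nu\}$. That step needs only the burn-in, which gives $\bar\eta^4L/(m_j(t)-K)\le\nu/4$; it does not need $\sigma_j^2\ge{c^*}^2$.
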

\begin{proof}
The proof is deferred to \cref{pf:sigConc}
\end{proof}

\noindent Using \cref{lem:sigConc} we can derive the following equation:
\begin{equation}\label{eq:lcb_var}
\lcb_t^\sigma(j) \defeq \hat \sigma_j^2(t) - 2\sqrt{\dfrac{Q_j(t) \log(3MT/\delta)}{m_j(t) - K}}
\end{equation}

\paragraph{Confidence Bound on the Mean Reward Estimate.}For each arm $i \in [K]$ and each source $j \in S_{\cG} \subseteq[M]$, we can derive an UCB on the empirical mean reward $\hat\mu_t(t)$. Before stating the UCB, we note that the bound relies on the following concentration corollary and concentration lemma.

\begin{restatable}[Variance Sandwiching]{corollary}{VarianceSandwich}
\label{cor:varsandwich}
Consider any $\delta \in(0,1)$. Assume there exists a source $ j \in S_{\cG} \subseteq [M]$ such that $\sigma_j^2 \ge {c^*}^2$ where ${c^*} \in \mathbb{R}_+$ is a user-defined parameter of the order of $\sigma_j$. If such a source $j$ is queried at least $m_j(t)$ times where $m_j(t) \geq K + \frac{16\bar\eta^4 \log(3MT/\delta)}{{c^*}^4}$ \as{use $\geq$ as you never know if the RHS is fraction, pls correct in all relevant lemma/thm statements, to make the claim statement sound and consistent}, then we have with probability $(1-\delta/3)$
\[
    \sigma_j^2 \;\le\; 2\hat\sigma_j^2(t) \;\le\; 3\sigma_j^2.
\]
\end{restatable}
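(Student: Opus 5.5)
The plan is to derive the sandwich directly from \cref{lem:sigConc}, by showing that under the stated sample-size condition the concentration radius is at most $\sigma_j^2/2$. We work on the event of \cref{lem:sigConc}, which has probability at least $1-\delta/3$:
\[
\big|\hat\sigma_j^2(t)-\sigma_j^2\big|\le r_j(t)\defeq \sqrt{\frac{Q_j(t)\log(3MT/\delta)}{m_j(t)-K}} .
\]
The sample-size requirement of that lemma, $m_j(t)\ge K+4\bar\eta^4\log(3MT/\delta)/\nu$, must also hold. We take $\nu$ of order ${c^*}^4$ (for instance $\nu\ge {c^*}^4/4$, consistent with $\nu$ being on the scale of the fourth moments). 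Then the hypothesis $m_j(t)\ge K+16\bar\eta^4\log(3MT/\delta)/{c^*}^4$ covers it.

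The key step is to bound $Q_j(t)$ by $\bar\eta^4$ up to a constant, using boundedness from \cref{ass:bounded}.
\begin{itemize}
\item If $\kappa_j$ is known, then $\kappa_j=\E[\varepsilon^4]\le\bar\eta^2\sigma_j^2\le\bar\eta^4$.
\item If $\kappa_j$ is unknown, then $\bar\eta^2\hat\sigma_j^2(t)\le\bar\eta^2\cdot c\,\bar\eta^2$, since the pooled sample variance of $[-\bar\eta,\bar\eta]$-valued noise (shifted by means in $[0,\bar\mu]$) is bounded by a constant times $\bar\eta^2$. Alternatively, on the event one has $\hat\sigma_j^2\le\sigma_j^2+r_j$, which gives a self-bounding inequality one can solve.
\end{itemize}
In both cases $\nu\lesssim\bar\eta^4$, so $Q_j(t)\le\bar\eta^4$ up to constants. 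Plugging in the sample-size condition gives
\[
r_j(t)\le\sqrt{\frac{\bar\eta^4\log(3MT/\delta)\,{c^*}^4}{16\bar\eta^4\log(3MT/\delta)}}=\frac{{c^*}^2}{4}\le\frac{\sigma_j^2}{2},
\]
where the last step uses the assumption $\sigma_j^2\ge{c^*}^2$.

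It then remains to conclude. From $|\hat\sigma_j^2-\sigma_j^2|\le\sigma_j^2/2$ we get $\tfrac12\sigma_j^2\le\hat\sigma_j^2\le\tfrac32\sigma_j^2$. Multiplying by $2$ yields $\sigma_j^2\le2\hat\sigma_j^2(t)\le3\sigma_j^2$, which holds with probability at least $1-\delta/3$.

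The main obstacle is the step bounding $Q_j(t)$ by $\bar\eta^4$ with a clean enough constant that the factor $16$ in the hypothesis suffices. This is most delicate in the unknown-$\kappa_j$ case, because $Q_j$ depends on the random $\hat\sigma_j^2(t)$. There I would first try the crude deterministic bound on $\hat\sigma_j^2$ from bounded support. If the constants do not close, I would instead use the self-bounding route: on the event, $r\le\sqrt{\bar\eta^2(\sigma_j^2+r)L}$ with $L=\log(3MT/\delta)/(m_j-K)$, which is a quadratic inequality in $r$ that yields $r\le\sigma_j^2/2$ under the stated $m_j$.
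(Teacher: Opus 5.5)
Your proposal is correct and follows the same route as the paper: invoke \cref{lem:sigConc}, bound $Q_j(t)$ by $\bar\eta^4$ using Assumption 1, use $\sigma_j^4\ge{c^*}^4$ so the hypothesis on $m_j(t)$ makes the concentration radius at most $\sigma_j^2/2$, then rearrange. You are more careful than the paper in two places: it simply asserts $Q_j(t)\le\bar\eta^4$ and ignores the lemma's own $\nu$-dependent sample-size requirement, whereas you state the needed conditions on $\nu$ and handle the random $Q_j(t)$ in the unknown-$\kappa_j$ case, where your self-bounding quadratic does close under the stated $m_j(t)$.
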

\begin{proof}
The proof is deferred to \cref{pf:swcorr}
\end{proof}

\begin{restatable}[Mean Reward Concentration]{lemma}{muconc}\label{lem:muConc}
Consider any $\delta \in(0,1)$. Assume there exists a source $ j \in S_{\cG} \subseteq [M]$ such that $\sigma_j^2 \ge {c^*}^2$ and $\lcb_{\tau_p}^{\sigma,\mathrm{pre}}(j) \geq \frac{{c^*}^2}{2}$,  where ${c^*} \in \mathbb{R}_+$ is a user-defined parameter of the order of $\sigma_j$.
If an arm $i \in K$, is queried for at least $\alpha$ iterations with this source $j$ where 
$
\alpha = \frac{\bar\eta^2 \log(3KT/\delta)}{{c^*}^2}
$
then for any $t \in [T]$, we have, with probability at least $1-\delta/3$,
\begin{equation}
| \mu_i - \hat \mu_i(t) | 
         \leq \frac{2 \sqrt{ \log(3KT/\delta)  \sum_{j = 1}^M n_{ij}(t)\sigma_j^2 }}{n_i(t)} .
\end{equation}
\end{restatable}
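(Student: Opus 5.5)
The plan is to treat $\hat\mu_i(t)-\mu_i$ as a normalized sum of independent, heteroscedastic, bounded noise terms and apply Bernstein's inequality. We then absorb the range term into the variance term using the lower-bound condition $n_{ij}(t)\ge\alpha$ on the good source $j$.

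\textbf{Step 1: write the error as a noise sum.} For any $t$,
\[
n_i(t)\bigl(\hat\mu_i(t)-\mu_i\bigr)=\sum_{s\le t:\, i_s=i}\varepsilon_s ,
\]
where each $\varepsilon_s$ is mean zero given the past and the chosen source $j_s$, has conditional variance $\sigma_{j_s}^2$, and satisfies $|\varepsilon_s|\le\bar\eta$ by \cref{ass:bounded}. Since the selections are adaptive, we view this as a martingale. The predictable quadratic variation is $V_i(t)=\sum_{j}n_{ij}(t)\sigma_j^2$. We then apply Freedman's inequality, the martingale form of Bernstein's inequality.

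\textbf{Step 2: take a union bound.} We union bound over arms $i\in[K]$ and over the possible values of $n_i(t)\le T$, i.e.\ over times, to handle the random counts. This is the source of the $\log(3KT/\delta)$ term. A peeling argument over the range of $V_i$ may be needed to make the bound hold with the realized $V_i(t)$ rather than a fixed variance level; it contributes only lower-order logarithmic factors, which we fold into the constant $2$. With probability $1-\delta/3$, this yields
\[
\Bigl|\sum_{s\le t:\, i_s=i}\varepsilon_s\Bigr|\;\le\;\sqrt{2V_i(t)L}+\tfrac{2}{3}\bar\eta L,
\qquad L=\log(3KT/\delta).
\]

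\textbf{Step 3: absorb the range term.} This step uses the hypotheses on source $j$, and it is the main obstacle: making the linear term $\bar\eta L$ dominated by $\sqrt{V_i(t)L}$. Since $\sigma_j^2\ge {c^*}^2$ and $n_{ij}(t)\ge\alpha=\bar\eta^2L/{c^*}^2$, we get
\[
V_i(t)\;\ge\; n_{ij}(t)\sigma_j^2\;\ge\;\bar\eta^2 L ,
\]
so $\bar\eta L\le\sqrt{V_i(t)L}$. The condition $\lcb^{\sigma,\mathrm{pre}}_{\tau_p}(j)\ge {c^*}^2/2$ only ensures, via \cref{lem:preprocVarConc}, that such a source survives and is used. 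Alternatively, it can supply $\sigma_j^2\ge {c^*}^2/2$ if we only assume the LCB, at the cost of a constant factor. Combining,
\[
\sqrt{2V_iL}+\tfrac{2}{3}\sqrt{V_iL}\;\le\;(\sqrt2+\tfrac23)\sqrt{V_iL}\;\le\;2.1\sqrt{V_iL}.
\]
The stated constant $2$ therefore requires slightly tighter bookkeeping. One option is the sharper Bernstein form $\sqrt{2VL}+\bar\eta L/3$, which gives $\sqrt2+\tfrac13<2$. Dividing by $n_i(t)$ gives the claim.
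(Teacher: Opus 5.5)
Your proposal follows essentially the same route as the paper's proof. Both apply Freedman's inequality to the noise martingale $\{X_s-\mu_i\}_{s\le t:\,i_s=i}$ with predictable variation $\sum_j n_{ij}(t)\sigma_j^2$ and take a union bound over the $K$ arms and $T$ rounds. Both also use $n_{ij}(t)\ge\alpha$ and $\sigma_j^2\ge {c^*}^2$ to get $\sum_j n_{ij}(t)\sigma_j^2\ge\bar\eta^2\log(3KT/\delta)$, which is exactly the paper's constraint $\lambda^*<1/\bar\eta$ restated as your absorption of the range term.

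You are more careful than the paper in flagging that this variation is random; the paper simply plugs in the data-dependent $\lambda^*$. However, the peeling cost cannot literally be folded into the constant $2$: it adds a $\log\log T$ term inside the logarithm and inflates the leading constant, so both arguments give the stated bound only up to such factors.
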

\begin{proof}
The proof is deferred to \cref{pf:muConc}
\end{proof}

\noindent Using \cref{lem:muConc} and \cref{cor:varsandwich} we obtain the following:
\begin{equation}\label{eq:ucb_mu1}
\ucb_t^\mu(i) \defeq \hat \mu_i(t) + \frac{  2\sqrt{2\log(3KT/\delta) \sum_{j=1}^M n_{ij}(t) \hat \sigma_j^2(t)}}{n_i(t)}
\end{equation}

The UCB and LCB quantities defined above enjoy standard high-probability coverage guarantees. Although these results are direct consequences of existing concentration inequalities, we record them as \cref{cor:ucb} and~\cref{cor:varlcb} in \cref{pf:coverage} for completeness.

\section{Main Algorithm: Source-Optimistic Adaptive Regret minimization (\algmab)} 

\as{We need to merge Sec 3 and 4, Sec 3 can not be justified in isolation at this point}

\label{sec:alg_fin}
In this section, we present our proposed algorithm, \algmab, outlining its preprocessing and \emph{adaptive min-max LCB–UCB design}, and establish its regret guarantees in comparison to natural baselines.\\
 \textbf{Notations.} We define three additional quantities that will be used in the algorithm's analysis. 
For each arm $i \in [K]$, let $\Delta_i \defeq \mu^* - \mu_i$ denote the suboptimality gap. 
For each source $j \in [M]$, let $\Delta_j^\sigma \defeq \sigma_j - \sigma^*$ and $\Delta_j^{\sigma^2} \defeq \sigma_j^2 - \sigma^{*2}$ denote its excess standard deviation and excess variance relative to the most reliable source, respectively. \as{be careful with any extra whitespace}
\\
\paragraph{Some Baselines and Limitations.} 
We highlight two natural baselines for the heterogeneous multi-source bandit problem. 
The first, \emph{Baseline-1: Uniform Source MAB}, selects sources uniformly at random, leading to regret that scales with the average variance and failing in \emph{Worst-Case instance 1 (WC-1)}, where \emph{the variance of multiple, similar, high variance sources dominates performance.}\\ 
The second, \emph{Baseline-2: Two-Phase (Source-then-Arm) MAB}, first attempts to identify the lowest-variance source by fixing a single arm and running a best-arm identification procedure over the sources, before applying a standard MAB on the identified ``best source''. This strategy fails in \emph{Worst-Case instance 2 (WC-2)}, where \emph{all sources have nearly identical variances, resulting in unnecessary exploration cost.}

Detailed analyses of these baselines and their limitations are provided in \cref{sec:app_baselines}.
These insights motivate the design of our proposed algorithm, \algmab, which jointly balances arm and source exploration; further theoretical comparisons in \cref{rem:regfinComp} clarify why \algmab\ achieves strictly improved regret guarantees.



\subsection{Proposed Algorithm: \algmab} 
\label{sec:desc_algfin}

\as{what is Sec 3.2 doing? Why are we stating half of the Alg in Sec 3 and then again in Sec 4 -- seems very confusing and so much repetition.}

\vspace{-5 pt}
We now present our proposed algorithm, \algmab, \as{where is the full name?} which consists of two key components: (i) a preprocessing phase that eliminates ``bad'' or high variance sources, and (ii) an adaptive LCB–UCB procedure for joint source–arm selection.

\red{Todo: Alg desc requires some major revision and pieces should be threaded together}

\paragraph{Preprocessing to Remove ``Bad'' Sources:} 
\algprec \space is a preprocessing subroutine that eliminates high-variance sources. Each source is queried a fixed number (specifically, the runtime budget $\tau_p$) of times on the same arm, variance confidence intervals (LCB/UCB) are constructed, and a source is removed if its LCB exceeds the minimum UCB across all sources. Intuitively, sources with variance much larger than the minimum (specifically $\sigma_j^2  >  {\sigma^*}^2 + {c^*}^2$) only add noise without reducing regret. \algprec \ ensures all surviving sources satisfy $\sigma_j^2 - \sigma_*^2 < {c^*}^2$. The importance of the parameter $c^*$ is discussed in \cref{rem:cstar} and later on in \cref{rem:regfin}. This preprocessing simplifies the variance landscape, leading to tighter regret bounds and more efficient exploration. We state a key result of our analysis. 


\begin{algorithm}[h]
\caption{\algprec: Source Pruning} 
\label{alg:preprocess}
\begin{algorithmic}[1]
\STATE \textbf{Input:} Arm set: $[K]$, Feedback Sources: $[M]$, Confidence parameter: $\delta \in (0,1)$, Runtime budget: $\tau_p \in \N_+$.
\STATE \textbf{Init:} $S_{\cG} \leftarrow [M]$. Fix any arm $i_0 \in [K]$
\FOR{$j \in [M]$}
  \STATE Query source $j$ for Arm-$i_0$, $\tau_p$ times
\ENDFOR
\STATE Compute $\ucb^{\sigma,\mathrm{pre}}_{\tau_p}(j)$ and $\lcb^{\sigma,\mathrm{pre}}_{\tau_p}(j)$ for all $j \in [M]$ using \ref{eq:pp_ucb_var} and \ref{eq:pp_lcb_var}
\STATE $m \gets \min_{j \in [M]} \ucb^{\sigma, \mathrm{pre}}_{\tau_p}(j)$
\FOR{$j \in [M]$ such that $\lcb^{\sigma,\mathrm{pre}}_{\tau_p}(j) > m$} \STATE $S_{\cG} \leftarrow S_{\cG} \setminus \{j\}$ \quad \textit{// Eliminate variance source}\ENDFOR
\STATE Return $S_{\cG}$ ~~~~ \quad \textit{// Pruned set of sources}
\end{algorithmic}
\end{algorithm}

\as{remove the comments from pseudocode or format them better using texttt and right indent, etc}

\begin{restatable}[Stopping Condition of \algprec ]{theorem}{HighVarianceElimination}
\label{thm:taup_bound}
Consider any $\delta \in (0,1)$. If \algprec \ is run with runtime budget of at least $\tau_p$, where $\tau_p = \frac{1024 \bar\eta^4 \log(12M/\delta)}{{c^*}^4}$, where $c^* \in \mathbb{R}_+$ is a user defined algorithm parameter of the order of $\sigma_j$, then any source $j \in [M]$ with variance $\sigma_j^2 > {\sigma^*}^2 + {c^*}^2$,  will be eliminated with probability $\left(1-\delta/3\right)$.
\end{restatable}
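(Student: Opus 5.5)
The plan is to work on the good event $\mathcal{E}$ of \cref{lem:preprocVarConc}, applied with confidence $\delta/3$ through a union bound over the $M$ sources. This is exactly why the width in \cref{eq:pp_ucb_var,eq:pp_lcb_var} carries $\log(12M/\delta)$ instead of $\log(4M/\delta)$. On $\mathcal{E}$, every source satisfies
\[
\bigl|\hat\sigma^2_{j,\mathrm{pre}}(\tau_p)-\sigma_j^2\bigr| \le 8\bar\eta\,\sigma_j\sqrt{\log(12M/\delta)/\tau_p}.
\]
Since the support of $\cD(j)$ lies in $[-\bar\eta,\bar\eta]$, we have $\sigma_j\le\bar\eta$, so this deviation is at most the width $w\defeq 8\bar\eta^2\sqrt{\log(12M/\delta)/\tau_p}$ used in the confidence bounds. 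Consequently, on $\mathcal{E}$, $\lcb^{\sigma,\mathrm{pre}}_{\tau_p}(j)\le\sigma_j^2\le\ucb^{\sigma,\mathrm{pre}}_{\tau_p}(j)$ for all $j$, and $\mathcal{E}$ holds with probability at least $1-\delta/3$.

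Next I would chain these bounds to show that a bad source is eliminated. Fix $j$ with $\sigma_j^2>{\sigma^*}^2+{c^*}^2$. Let $m$ be the minimum UCB computed in \cref{alg:preprocess}. Because the minimum is at most the value at the optimal source,
\[
m\le\ucb^{\sigma,\mathrm{pre}}_{\tau_p}(j^*)\le{\sigma^*}^2+2w,
\]
using $\hat\sigma^2_{j^*,\mathrm{pre}}\le{\sigma^*}^2+w$. On the other side, $\lcb^{\sigma,\mathrm{pre}}_{\tau_p}(j)\ge\hat\sigma^2_{j,\mathrm{pre}}-w\ge\sigma_j^2-2w>{\sigma^*}^2+{c^*}^2-2w$. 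So elimination, i.e.\ $\lcb^{\sigma,\mathrm{pre}}_{\tau_p}(j)>m$, follows once ${c^*}^2\ge4w$.

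It remains to check this numerically. Plugging in $\tau_p=1024\bar\eta^4\log(12M/\delta)/{c^*}^4$ gives $\sqrt{\log(12M/\delta)/\tau_p}={c^*}^2/(32\bar\eta^2)$. Hence $w={c^*}^2/4$, and $4w={c^*}^2$ exactly. This gives only a non-strict inequality, but the strict hypothesis $\sigma_j^2>{\sigma^*}^2+{c^*}^2$ supplies the needed slack. Any larger budget $\tau_p$ only shrinks $w$, which covers the phrase ``at least $\tau_p$''.

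The main obstacle is bookkeeping rather than ideas. First, the deviation bound in \cref{lem:preprocVarConc} scales with $\bar\eta\sigma_j$, whereas the confidence width uses $\bar\eta^2$; the bridge $\sigma_j\le\bar\eta$ handles this. Second, the lemma is stated with $\log(4M/\delta)$ at level $\delta/3$, so I must check that re-invoking it at level $\delta/3$ produces $\log(12M/\delta)$, consistent with the stated $1-\delta/3$. Third, the lemma requires its Bernstein-parameter condition on $\epsilon''$; I would verify that the $\epsilon''$ implied by this choice of $\tau_p$ satisfies $\epsilon''<\min\{6\sigma_j^2,18\sigma_j^4/\bar\eta^2\}$, or else assume it as the lemma does. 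The max with $0$ in the LCB is harmless, since it can only increase the LCB.
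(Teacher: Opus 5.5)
Your proposal is correct and follows essentially the same route as the paper. Both arguments chain $\lcb^{\sigma,\mathrm{pre}}_{\tau_p}(j)\ge\sigma_j^2-2w$ with $m\le\ucb^{\sigma,\mathrm{pre}}_{\tau_p}(j^*)\le{\sigma^*}^2+2w$ on the concentration event, so that a source can survive only if $\Delta_j^{\sigma^2}\le 4w$, and then substitute $\tau_p$, with $32^2=1024$ arising exactly as you computed. The differences are cosmetic: the paper argues by contrapositive using the lemma's $\sigma_j$-dependent width and applies $\sigma_j\le\bar\eta$ only at the final squaring step, whereas you work directly with the algorithm's $\bar\eta^2$ width, which also handles the boundary case $\tau_p = 1024\bar\eta^4\log(12M/\delta)/{c^*}^4$ more cleanly than the paper's concluding strict inequality.
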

\begin{proof}
The proof is provided in \cref{pf:elimCondn}.
\end{proof}

\begin{algorithm}[h]
\caption{\algmab \as{this is an abbreviation, where is the full name?}} 
\label{alg:soar}
\begin{algorithmic}[1]
\STATE \textbf{Input:} Arm set: $[K]$, Feedback Sources: $[M]$, Confidence parameter: $\delta \in (0,1)$, Exploration parameters: $ \alpha, \beta \in \N_+$, Preprocessing budget: $\tau_p \in \N_+$, Algorithm parameters: $c^*, \nu, \gamma \in \mathbb{R}_+$.
\vspace{0.5em}
\STATE Calculate $\tau_p$ from \cref{thm:taup_bound}.
\STATE $S_{\cG} \leftarrow$ \algprec$([M],[K],\delta/3,\tau_p)$  \textit{// $S_\cG$ is set of pruned sources.}
\STATE $\tilde M$ = \big|$S_{\cG}$\big| 

\vspace{0.5em}
\STATE \textbf{Initial Exploration:}
\STATE Identify a source $\bar{j} \in S_{\mathcal{G}}$ such that $\mathrm{LCB}^{\sigma,\mathrm{pre}}_{\tau_p}(\bar{j}) \ge \frac{{c^*}^2}{2}$.
\STATE For each arm $i \in [K]$, query arm $i$ using source $\bar{j}$ for $\alpha$ (as defined in \cref{eq:alphaConstraint}) rounds.
\STATE Fix any arm $i_0 \in [K]$. For each surviving source $j \in S_{\mathcal{G}}$, query arm $i_0$ using source $j$ for $\beta$ (as defined in \cref{eq:betaConstraint}) rounds.
 
\STATE At $t = M\tau_p + \tilde M\beta + K\alpha$, compute $n_i(t),\ \hat\mu_t(i),\ \ucb_t^\mu(i)$ for all $i \in [K]$, and $m_j(t), \hat\sigma_t(j),\ \lcb_t^\sigma(j)$ and $n_{ij}(t)$ for all $j \in  [S_{\cG}]$ as defined in \cref{eq:counts}, \cref{eq:lcb_var} and \cref{eq:ucb_mu1}.
\FOR{$t = M\tau_p + \tilde M\beta + K\alpha+1, \dots, T$}
  \STATE $i_t \gets \arg\max_{i \in [K]} \ucb_{t-1}^\mu(i)$, $j_t \gets \arg\min_{j \in [M]} \lcb_{t-1}^\sigma(j)$  as defined in \cref{eq:ucb_mu1} and \cref{eq:lcb_var}. 
  \STATE Pull arm $i_t$ using source $j_t$ and receive reward $X_t$ and update counts from \cref{eq:counts}. 
  \STATE \as{reduce text} Update estimators: $\hat\mu_t(i)$, $\hat\sigma_t(j)$ as defined in \cref{eq:mu-sigma}.
  \STATE Update bounds: $\ucb_t^\mu(i)$, $\lcb_t^\sigma(j)$  as defined in \cref{eq:ucb_mu1} and \cref{eq:lcb_var}.
\ENDFOR
\end{algorithmic}
\end{algorithm}


\noindent \textbf{Main Ideas of \algmab:} 
\cref{alg:soar} takes as input the number of arms $K$, number of sources $M$, a confidence parameter $\delta \in (0,1)$, exploration parameters $\alpha, \beta \in \mathbb{N}$ specifying the initial number of queries per source along with algorithm parameters $\alpha, \beta $ and $\gamma$.

\red{todo: Needs a picture showing the three major blocks in the Alg flow: Preprocessing --> Exploration ---> Max-Min LUCB}

$\gamma$ is an additional user-defined algorithm parameter chosen on the scale of $\sigma$, which appears only in the regret analysis \cref{sec:reg_fin}.

\paragraph{Initial Exploration.}
After $\algprec$ concludes, we identify a source $\bar{j}$ such that
\[
\mathrm{LCB}^{\sigma,\mathrm{pre}}_{\tau_p}(\bar{j}) \;\ge\; \frac{{c^*}^2}{2}.
\]
The existence of such a source is guaranteed under the regime assumption on $c^*$ stated in \cref{rem:cstar}.
We then query each arm $i \in [K]$ using source $\bar{j}$ at least $\alpha$ times, where
\begin{equation}\label{eq:alphaConstraint}
 \alpha \defeq \frac{\bar \eta^2 \log(3KT/\delta)}{{c^*}^2}.
\end{equation}
This ensures that the confidence bounds and estimates in \cref{lem:muConc} hold with high probability.

Similarly, we query every surviving source $j \in S_{\mathcal{G}} \subseteq [M]$ atleast $\beta$ times using any fixed arm $i_0 \in [K]$, where
\begin{equation}\label{eq:betaConstraint}
 \beta \defeq 2K
 + \frac{4 \bar{\eta}^4 \log(3MT/\delta)}{\nu}
 + \frac{16\bar \eta^4 \log(3MT/\delta)}{{c^*}^4}.
\end{equation}
This ensures that the confidence bounds and estimates in \cref{lem:sigConc} and \cref{cor:varsandwich} hold with high probability.
Overall, this initial exploration phase produces initial estimates of the arm means and source variances (together with their confidence bounds), and guarantees that each arm and surviving source is sampled sufficiently often to construct meaningful confidence intervals.\\


\noindent From round $t=M \tau_p + \tilde M\beta + K\alpha +1$ onward, the algorithm adaptively selects the arm $i_t$ with the largest upper confidence bound on mean reward and the source $j_t$ with the smallest lower confidence bound on variance, then queries the pair $(i_t,j_t)$, updates counts, and recomputes estimates and bounds. Intuitively, this UCB–LCB mechanism balances optimistic exploration of arms with cautious, variance-aware selection of sources, guiding the learner toward rewarding arms and low-noise feedback. Our analysis shows that this strategy yields regret nearly matching that of an oracle with access to the optimal low-variance source, highlighting the effectiveness of our simultaneous exploration–exploitation design.\\

\vspace{-5pt}
\section{Regret Analysis of \algmab}
\label{sec:reg_fin}
In this section, we present the main theoretical result of the paper in \cref{thm:regfin}, establishing a near-optimal regret guarantee for \algmab. As discussed in \cref{rem:regfin}, the resulting bound matches, up to logarithmic factors, the instance-dependent regret of an Oracle MAB with prior knowledge of the optimal source variance. We further compare our regret bound with that of the baseline in \cref{rem:regfinComp}, and present an additional theorem for the alternate operating regime in \cref{thm:regfintwo}.

\begin{restatable}[Main Result: Regret Analysis of \algmab]{theorem}{thmregfin}
  \label{thm:regfin}
  Consider $\delta \in (0,1)$. Assume there exists a source $ j \in S_{\cG} \subseteq [M]$ such that $\sigma_j^2 \ge {c^*}^2$ where ${c^*}, \gamma \in \mathbb{R}_+$ are  user-defined parameters of the order of $\sigma_j$. Recall $Q_j(t)$ as defined in \cref{eq:qdefn} where $\nu \in \mathbb{R}_+$ is a user-defined parameter chosen on the scale of fourth central moment $\kappa_j$.\\
  Then for any choice of preprocessing budget $\tau_p \geq \frac{1024 \cdot {\bar \eta}^4 \log(12M/\delta}{{c^*}^4}$, initial-exploration parameters $\alpha = \frac{\bar \eta^2\log(3KT/\delta)}{{c^*}^2}$, $\beta= 2K \ + \ \frac{4\bar\eta^4\log(3MT/\delta)}{\nu} + \ \frac{16\bar\eta^4\log(3MT/\delta)}{{c^*}^4}$,  the  regret of \algmab\ (\cref{alg:soar}) can be bounded by: \[
 \tilde{O}\!\Bigg(
\frac{M \bar{\eta}^4 \bar{\mu}}{{c^*}^4}
+ \frac{K \bar{\eta}^2 \bar{\mu}}{{c^*}^2}
+ KM \bar{\mu}
+ \frac{M \bar{\eta}^4 \bar{\mu}}{\nu}
+ \frac{M \bar{\eta}^4 \bar{\mu}}{{c^*}^4}
+ \sqrt{
K^2M{c^*}^2
+ \sum_{j:\,\sigma_j-\sigma^*>\gamma}
\frac{KQ_j(t)}{(\gamma+2\sigma^*)^2}
}
+ \sum_{i=2}^K\frac{(\sigma^*+\gamma)^2}{\Delta_i}
\Bigg)\]
\noindent with high probability $(1-\delta)$. \\
\algmab\ can also be shown to yield an instance-independent (worst-case) regret bound of \[\tilde O\Bigg( \frac{ M\bar \eta^4 \bar \mu}{{c^*}^4} + {\frac{K\bar \eta^2 \bar \mu}{{c^*}^2}}
+ {KM \bar \mu + \frac{M\bar \eta^4 \bar \mu}{\nu} + \frac{M \bar \eta^4 \bar \mu}{{c^*}^4}} + (\sigma^* + c^*) \sqrt{KT}\Bigg)\]  
\end{restatable}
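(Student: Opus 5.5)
We would prove \cref{thm:regfin} by first placing everything on a single good event and then splitting the regret by algorithm phase. Let $\mathcal{E}$ be the intersection of the events in \cref{thm:taup_bound} (with \cref{lem:preprocVarConc}), \cref{lem:sigConc}/\cref{cor:varsandwich}, and \cref{lem:muConc}. Each of these fails with probability at most $\delta/3$, so a union bound gives $\Pr(\mathcal{E})\ge 1-\delta$. On $\mathcal{E}$ three facts hold. Every $j\in S_{\cG}$ satisfies $\sigma_j^2\le {\sigma^*}^2+{c^*}^2$. The coverage statements \cref{cor:ucb} and \cref{cor:varlcb} hold, i.e.\ $\ucb_t^\mu(i)\ge\mu_i$ and $\lcb_t^\sigma(j)\le\sigma_j^2$. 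Finally, $\sigma_j^2\le 2\hsigma_j^2(t)\le 3\sigma_j^2$ for all surviving sources after the initial exploration.

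\textbf{Phase decomposition.} The first $M\tau_p+\tilde M\beta+K\alpha$ rounds each cost at most $\bar\mu$ regret. Substituting $\tau_p$, $\alpha$, $\beta$ and using $\tilde M\le M$ gives the additive terms $\frac{M\bar\eta^4\bar\mu}{{c^*}^4}+\frac{K\bar\eta^2\bar\mu}{{c^*}^2}+KM\bar\mu+\frac{M\bar\eta^4\bar\mu}{\nu}+\frac{M\bar\eta^4\bar\mu}{{c^*}^4}$. Here the $2K\tilde M$ part of $\beta$ produces $KM\bar\mu$. All remaining work concerns the adaptive UCB--LCB rounds.

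\textbf{Adaptive phase: arm regret.} Fix a suboptimal arm $i$ and let $t$ be the last round it is pulled. Optimism on $\mathcal{E}$ gives $\ucb^\mu_{t-1}(i)\ge\ucb^\mu_{t-1}(1)\ge\mu^*$. By \cref{lem:muConc} and the sandwich, the width of $\ucb^\mu(i)$ is at most a constant times $\sqrt{\log(3KT/\delta)\,\bar\sigma_i^2\, /\,n_i(t)}$, where $\bar\sigma_i^2:=\sum_j n_{ij}\sigma_j^2/n_i$ is the effective variance of the pulls of arm $i$. Hence $\Delta_i\lesssim\sqrt{\bar\sigma_i^2\log/n_i}$, which gives $n_i\Delta_i\lesssim \bar\sigma_i^2\log/\Delta_i$. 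The task is therefore to show that $\sum_j n_{ij}\sigma_j^2\le n_i(\sigma^*+\gamma)^2+(\text{excess})$.

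We split the pulls by source. Sources with $\Delta_j^\sigma\le\gamma$ contribute at most $(\sigma^*+\gamma)^2$ per pull. For a source with $\sigma_j-\sigma^*>\gamma$, the min-LCB rule together with $\lcb^\sigma(1)\le{\sigma^*}^2$ forces $\hsigma_j^2-2\sqrt{Q_j\log/(m_j-K)}\le{\sigma^*}^2$. Combined with \cref{lem:sigConc}, this bounds $m_j$ by $O\big(Q_j\log/(\Delta_j^{\sigma^2})^2\big)$. Since $\Delta_j^{\sigma^2}=\Delta_j^\sigma(\sigma_j+\sigma^*)\ge\gamma(\gamma+2\sigma^*)$, the bound becomes $m_j\lesssim Q_j\log/(\gamma+2\sigma^*)^2$, up to the $\gamma$ factor, which we fold into constants since $\gamma$ is on the $\sigma$ scale.

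\textbf{Combining and main obstacle.} Summing over arms yields $\sum_{i\ge2}(\sigma^*+\gamma)^2\log/\Delta_i$ plus a cross term from the bad-source pulls, $\sum_i\Delta_i\cdot(\text{excess pulls of }i)$. The main obstacle is this cross term: the extra variance from bad sources inflates the confidence width of whichever arm happens to be pulled. We expect to control it by Cauchy--Schwarz, writing the extra regret as $\sum_t \text{width}_t$ over bad-source rounds plus the early rounds where the initial exploration has $\sigma_j^2\le{\sigma^*}^2+{c^*}^2$. This gives a bound of $\sqrt{K\cdot(\#\text{bad pulls})\cdot \sigma^2_{\max,S_{\cG}}}$. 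With $\sigma^2\le{c^*}^2$-scale excess over $KM$ initial samples, we obtain $\sqrt{K^2M{c^*}^2+\sum_{j:\Delta_j^\sigma>\gamma}KQ_j/(\gamma+2\sigma^*)^2}$.

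For the instance-independent bound, we use the standard threshold argument. Arms with $\Delta_i\le\epsilon$ contribute at most $T\epsilon$, and the rest contribute $K(\sigma^*+c^*)^2\log/\epsilon$. Choosing $\epsilon=(\sigma^*+c^*)\sqrt{K\log/T}$ gives $(\sigma^*+c^*)\sqrt{KT}$ up to logs. Here we set $\gamma\le c^*$; on $S_{\cG}$ every source already has $\sigma_j\le\sigma^*+c^*$, so the square-root source term is dominated as well.
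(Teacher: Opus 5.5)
\textbf{The gap is the bad-source cross term.} Most of your skeleton matches the paper's: $\bar\mu$ per exploration round, the UCB condition giving $n_i\Delta_i\lesssim\sqrt{\log(3KT/\delta)\sum_j n_{ij}\sigma_j^2}$, the min-LCB condition giving $m_j\le K+16Q_j\log(3MT/\delta)/(\Delta_j^{\sigma^2})^2$, the split at $\gamma$, and Cauchy--Schwarz over arms. Your threshold argument for the worst-case bound is a valid substitute for the paper's Cauchy--Schwarz on $\sum_i\sqrt{n_i(T)}$.

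You charge each pull from a source with $\sigma_j-\sigma^*>\gamma$ its full variance, bounded by $\sigma^2_{\max,S_{\cG}}\le\sigma^{*2}+c^{*2}$, and multiply by a pull count whose bound carries $(\Delta_j^{\sigma^2})^{-2}\le\gamma^{-2}(\gamma+2\sigma^*)^{-2}$. ``Folding the $\gamma$ factor into constants'' is not legitimate. $\gamma$ appears explicitly in the bound, and \cref{rem:regfin} takes it small. Your claimed $m_j\lesssim Q_j\log/(\gamma+2\sigma^*)^2$ is also not dimensionally consistent: the left side is a count, the right side has units of a variance. Up to constants, your route actually gives
\[
\sqrt{K^2M(\sigma^{*2}+c^{*2})+\sum_{j:\,\sigma_j-\sigma^*>\gamma}\frac{K\,Q_j\,(\sigma^{*2}+c^{*2})\log(3MT/\delta)}{\gamma^2(\gamma+2\sigma^*)^2}}.
\]
This exceeds the stated term by a factor of order $\sigma_{\max}/\gamma$. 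It also has $\sigma^{*2}+c^{*2}$ rather than $c^{*2}$ in the $K^2M$ part, so your closing appeal to a ``$c^{*2}$-scale excess'' does not follow from your own bound. Assuming $\gamma\gtrsim\sigma_{\max}$ to rescue it would turn the leading term $(\sigma^*+\gamma)^2/\Delta_i$ into $\sigma_{\max}^2/\Delta_i$, i.e.\ Baseline-1's rate.

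\textbf{The fix: charge a bad pull only for its excess standard deviation, and route the $\sigma^*$ part into the instance-dependent term.} Write $\sigma_j=(\sigma_j-\sigma^*)+\sigma^*$. By the triangle inequality for the $n_{ij}$-weighted $\ell_2$ norm,
\[
\sqrt{\sum_{j\,\text{bad}}n_{ij}\sigma_j^2}\le\sqrt{\sum_{j\,\text{bad}}n_{ij}(\sigma_j-\sigma^*)^2}+\sigma^*\sqrt{n_i}.
\]
Handle the $\sigma^*\sqrt{n_i}$ piece, like the good-source piece $(\sigma^*+\gamma)\sqrt{n_i}$, by the self-bounding step $\ell\sigma^*\sqrt{n_i}\le\ell^2\sigma^{*2}/\Delta_i+n_i\Delta_i/4$, with $\ell$ of order $\sqrt{\log(3KT/\delta)}$, and absorb the resulting fraction of the regret into the left-hand side. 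For the excess piece, Cauchy--Schwarz over arms leaves $\sum_j m_j(T)(\sigma_j-\sigma^*)^2$. Since $\Delta_j^{\sigma^2}=(\sigma_j-\sigma^*)(\sigma_j+\sigma^*)$, the factor $(\sigma_j-\sigma^*)^2$ cancels exactly:
\[
m_j(T)(\sigma_j-\sigma^*)^2\le K(\sigma_j-\sigma^*)^2+\frac{16Q_j\log(3MT/\delta)}{(\sigma_j+\sigma^*)^2}\le K{c^*}^2+\frac{16Q_j\log(3MT/\delta)}{(\gamma+2\sigma^*)^2}.
\]
This uses $\sigma_j\le\sqrt{\sigma^{*2}+c^{*2}}$ on $S_{\cG}$ and $\sigma_j>\sigma^*+\gamma$, and it yields the stated square-root term with no $1/\gamma$. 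Charging the variance excess $\sigma_j^2-\sigma^{*2}$ instead would still leave a factor $1/\gamma$: only the squared standard-deviation gap matches both powers of $\sigma_j-\sigma^*$ in $(\Delta_j^{\sigma^2})^2$.
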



\begin{proof}[Proof Sketch.]
We outline the main steps of the regret analysis; full details are deferred to \cref{pf:main_regret}.\\
We begin by decomposing the regret into regret incurred during the (i) preprocessing and initialization phase, and 
(ii) the adaptive LCB--UCB phase.\\

The preprocessing and initial exploration phases incur
$\tilde O\!\left(
\frac{M\bar\eta^4}{c^{*4}} +
\frac{K\bar\eta^2}{c^{*2}} +
KM +
\frac{M\bar\eta^4}{\nu}
\right)$ regret by construction.\\

\as{why so many newlines? looks weird}

For the remaining rounds, we work on a high-probability event where all
mean and variance confidence bounds hold uniformly over time.\\
Under this event, a suboptimal arm $i$ can only be selected if its UCB
exceeds the optimal arm’s mean, which yields
\[
n_i(T)
\leq
\frac{\sqrt{\log (3KT/\delta) \sum_j n_{ij}(T)\sigma_j^2}}{\Delta_i}.
\]
Similarly, a non-optimal source $j$ is selected only while its LCB overlaps
the optimal source, implying that, each source must be queried at least
\[
m_j(T)
\geq
K + \frac{Q_j(T)\log (12MT/\delta)}{(\Delta^{\sigma^2}_j)^2}.
\]
times for our algorithm to start selecting non optimal sources.\\

Substituting the bounds on $m_j(T)$ into the arm-pull inequality,
we split the variance contributions into:
(i) sources significantly worse than optimal ($\sigma_j-\sigma^*>\gamma$), and 
(ii) near-optimal sources.
The first term yields
\[
\tilde O\!\left(
\sqrt{
K^2 M c^{*2}
+
\sum_{j:\sigma_j-\sigma^*>\gamma}
\frac{K Q_j(T)}{(\gamma+2\sigma^*)^2}
}
\right),
\]
while the second term contributes the instance-dependent term
\[
\tilde O\!\left(
\sum_{i=2}^K
\frac{(\sigma^*+\gamma)^2}{\Delta_i}
\right).
\]

Adding the exploration and adaptive-phase contributions yields the stated high-probability bound\\~\\

\textbf{Worst-case bound.} \as{Caps?}
For the worst-case guarantee, note that once preprocessing eliminates
overly noisy sources, the algorithm operates only on sources with
$\sigma_j \le \sigma^*+c^*$.\\
Conditioned on the high-probability event, the problem reduces to a standard stochastic MAB with noise level $(\sigma^*+c^*)$.
Invoking a worst-case bound on the resulting self-normalized regret
gives the rate $\tilde O((\sigma^*+c^*)\sqrt{KT})$ plus lower order exploration terms. Full details are deferred to \cref{pf:main_regret}.
\end{proof}

\noindent Notably, our regret analysis shows only a negligible dependence on source variances, a consequence of the LCB–UCB selection mechanism that swiftly prioritizes lower-variance sources while maintaining aggressive reward exploration. 

\begin{remark}[Implication of \cref{thm:regfin}] \qquad
\label{rem:regfin}
\begin{enumerate}[leftmargin=13pt]
\item \textbf{Dominant Term and Optimality:}
From the instance-dependent regret bound in \cref{thm:regfin}, observe that when $\bar\eta,\bar\mu = O(1)$—a standard and reasonable assumption under bounded-moment noise models— preprocessing, exploration and estimation contribute only $O(\log T)$ (or $T$-independent) regret, while the final term $\sum_{i=2}^K \frac{(\sigma^*+\gamma)^2}{\Delta_i}$ captures the instance difficulty. Consequently, as $T$ increases, the regret is asymptotically dominated by the final term. 

Additionally since $\alpha, \beta \text{ and }\gamma$ can be chosen arbitrarily small ($~O(1)$ as per discussion below), this simplifies to $\tilde O\!\left( \sum_{i=2}^K \frac{\sigma^{*2}}{\Delta_i} \right)$, which matches the instance-dependent regret of an Oracle MAB with prior knowledge of the optimal source variance $\sigma^*$. Thus, \algmab\ achieves oracle-optimal regret up to logarithmic factors despite not knowing $\sigma^*$.

\item \textbf{Role of the parameter $\gamma$.}
The parameter $\gamma>0$ is introduced to handle degenerate regimes where $\sigma^*$ is extremely small (or zero), ensuring stability of variance-dependent confidence bounds.
In typical stochastic settings, $\sigma^*$ is bounded away from zero, and $\gamma$ can be chosen as a small constant (e.g., $\gamma \approx O(10^{-p})$ for $p\in\{2,3\}$). With this choice, $\gamma$ does not affect the dominant regret term and only plays a role when $\sigma^*\to 0$.

\item \textbf{Known fourth central moments $\kappa_j$:} \as{Caps}
When the fourth central moments $\kappa_j$ are known, the quantity $Q_j(t)$ simplifies to
$Q_j(t) = \max\{\kappa_j, \nu\}$ Substituting this into \cref{thm:regfin}, the variance-separation term becomes $\sum_{j:\,\sigma_j-\sigma^*>\gamma}
\frac{\max\{\kappa_j,\nu\}}{(\gamma+2\sigma^*)^2}$.

In this case, choosing $\nu$ smaller than the smallest $\kappa_j$ does not improve the regret bound while increasing the preprocessing cost (as $\beta \propto \nu^{-1}$). On the other hand increasing $\nu$ inflates the variance separation term. Thus, when $\kappa_j$ are known, it is natural to choose $\nu \approx \min_j \kappa_j$, yielding the tightest bound.

\item \textbf{Gaussian Noise \as{why not SubGaussian directly?}:} In particular, for Gaussian rewards where $\kappa_j = 3\sigma_j^4$, the variance-separation contributes $\sqrt{
K^2M{c^*}^2
+
K\sum\limits_{j:\,\sigma_j-\sigma^*>\gamma}\frac{3\sigma_j^4}{(\gamma+2\sigma^*)^2}
}$ to the regret.
Note that this contribution is independent of $T$ (for fixed problem parameters), and is therefore asymptotically dominated by the leading instance-dependent term $\sum_{i=2}^K \frac{(\sigma^*+\gamma)^2}{\Delta_i}$. \as{add a line then what is the orderwise final regret in this case}

\item \textbf{Unknown fourth moments $\kappa_j$:} \as{Caps are all messed up!}
When $\kappa_j$ are unknown, $Q_j(t)$ is defined as $Q_j(t) = \max\{\bar\eta^2 \hat\sigma_j^2(t), \nu\}$
where $\nu>0$ acts as a floor ensuring stability of variance-dependent confidence bounds.

In this setting, the choice of $\nu$ induces an explicit tradeoff. On one hand, a larger $\nu$ increases the numerator of the regret bound through the term $\sum_{j:\,\sigma_j-\sigma^*>\gamma}
\frac{Q_j(t)}{(\gamma+2\sigma^*)^2},$ \as{watch $t$}
potentially inflating the regret when empirical variance estimates are small. On the other hand, $\nu$ appears in the denominator of the exploration parameter $\beta$ so decreasing $\nu$ increases the required exploration budget and hence the corresponding exploration budget.

Notably, this tradeoff mirrors that of the parameter $c^*$, which controls the balance between reliable variance separation and exploration cost; in practice, both $\nu$ and $c^*$ can be chosen as small constants $(\sim O(1))$ without affecting the leading-order regret.
\end{enumerate}
Taken together, these observations show that despite unknown variance and higher-order moments, \algmab\ attains oracle-optimal instance-dependent regret up to logarithmic factors.

\end{remark}

\vspace{-1pt}
\begin{remark}[Improved Regret Bound and Comparison to Baselines]
  \label{rem:regfinComp}
  
   Comparing the regret guarantee in \cref{thm:regfin} with the baselines in \cref{sec:app_baselines}, 
  we find that \algmab\ improves over both Baseline-1 (Uniform Source MAB) and Baseline-2 (Two-Phase MAB) under the worst-case instances \textbf{WC-1} and \textbf{WC-2}. \\
\as{new line + arxiv format}
  
\noindent Consequently, \algmab\ incurs an instance-dependent regret of 
\[
 \tilde{O}\!\Bigg( \frac{K \bar{\eta}^2 \bar{\mu}}{{c^*}^2}
+ KM \bar{\mu}
+ \frac{M \bar{\eta}^4 \bar{\mu}}{\nu}
+ \frac{M \bar{\eta}^4 \bar{\mu}}{{c^*}^4}
+ \sqrt{K}\sqrt{
KM{c^*}^2
+ \sum_{j:\,\sigma_j-\sigma^*>\gamma}
\frac{Q_j(t)}{(\gamma+2\sigma^*)^2}
}
+ \sum_{i=2}^K\frac{(\sigma^*+\gamma)^2}{\Delta_i}
\Bigg).
\] 
In contrast, Baseline-1 (Uniform Source MAB) suffers, under \textbf{WC-1}, an instance-dependent regret of
\[
    \tilde{O}\!\left(
        \sigma_{\max}^2 \sum \limits_{i \neq i^*} \frac{\log(MKT)}{\Delta_i^{}}
    \right),
\]
and Baseline-2 (Two-Phase MAB) incurs
\[
    \tilde{O}\!\left(
        \sum \limits_{j\neq j^*} \frac{\bar \mu}{(\Delta_j^{\sigma^2})^2}
        + {\sigma^*}^2 \sum \limits_{i \neq i^*} \frac{1}{\Delta_i}
    \right)
\]
whose first term can blow up under \textbf{WC-2}, that is, when the variance gaps $\{ \Delta_j^{\sigma^2}\}$ are very small.
\\

\noindent Moreover, \algmab\ attains a worst-case (instance-independent) regret of 
\[\tilde O\Bigg( {\frac{K\bar \eta^2 \bar \mu}{{c^*}^2}}
+ {KM \bar \mu + \frac{M\bar \eta^4 \bar \mu}{\nu} + \frac{M \bar \eta^4 \bar \mu}{{c^*}^4}} + (\sigma^* + c^*) \sqrt{KT}\Bigg)\]
compared to Baseline-1 (Uniform Source MAB), which under \textbf{WC-1} has worst-case regret
\[
    \tilde{O}\!\left(
        \sigma_{\max} \sqrt{KT}
    \right),
\]
and Baseline-2 (Two-Phase MAB), which has a worst-case regret of 
\[
    \tilde{O}\!\left(
        \sum \limits _{j\neq j^*} \frac{\bar \mu}{(\Delta_j^{\sigma^2})^2}
        + \sigma^* \sqrt{KT}
    \right).
\]\\

  Unlike Baseline-1 (Uniform Source MAB), whose regret scales with the average variance and can be dominated by multiple, similar noisy sources, \algmab\ adaptively limits the effect of high-variance sources. 
  
  Similarly, Baseline-2 (Two-Phase MAB) incurs unnecessary regret in regimes where source variances are nearly identical, as it dedicates a full phase to distinguishing them. By balancing arm and source exploration, \algmab\ avoids this inefficiency.
 A detailed analysis of these baselines is found in \cref{sec:app_baselines}. Corresponding plots verifying these claims are presented in \cref{sec:expts}.
\end{remark}

Next. we analyze the complementary regime in which all surviving sources have variance below the threshold ${c^*}^2$, and establish the corresponding regret guarantees for \algmab.

\begin{restatable}[Regret Analysis of \algmab \ in `Low-Noise Regime']{theorem}{thmregfintwo}
  \label{thm:regfintwo}
  Consider $\delta \in (0,1)$. Assume that all sources $ j \in S_{\cG} \subseteq [M]$ are such that $\sigma_j^2 < {c^*}^2$ where ${c^*} \in \mathbb{R}_+$ is a user-defined parameter of the order of $\sigma_j$. 
  Then for any choice of preprocessing budget $\tau_p \geq \frac{1024 \cdot {\bar \eta}^4 \log(12M/\delta}{{c^*}^4}$, inital exploration parameter $\alpha = \bar \eta^2 {c^*}^2 \log(3KT/\delta)$, then  the  regret of \algmab\ (\cref{alg:soar}) can be bounded by \[\tilde O\left( \frac{ M\bar \eta^4 \bar \mu}{{c^*}^4} + K\bar \eta^2 {c^*}^2 \bar \mu +  \sum_{i = 2}^K \frac{ {c^*}^{2}}{\Delta_i}  \right)\] with high probability $(1-\delta)$.\\ 
In this regime, \algmab\ can also be shown to yield an instance-independent (worst-case) regret bound of \[\tilde O\biggn{ \frac{ M\bar \eta^4 \bar \mu}{{c^*}^4} + {K \bar \eta^2 {c^*}^2 \bar \mu} + K{c^*}^2\sqrt{T} }\] 
\vspace{-1pt}
\end{restatable}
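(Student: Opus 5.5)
The plan is to mirror the proof of \cref{thm:regfin}, but the analysis becomes simpler because every surviving source has variance below ${c^*}^2$, so which source is chosen matters little. The regret splits into two pieces.
\begin{itemize}
\item[(i)] The preprocessing phase of \algprec{} costs $M\tau_p$ rounds with per-round regret at most $\bar\mu$. With $\tau_p=\frac{1024\bar\eta^4\log(12M/\delta)}{{c^*}^4}$ this is $\tilde O\!\left(\frac{M\bar\eta^4\bar\mu}{{c^*}^4}\right)$.
\item[(ii)] The initial exploration queries each arm $\alpha=\bar\eta^2{c^*}^2\log(3KT/\delta)$ times, costing $\tilde O(K\bar\eta^2{c^*}^2\bar\mu)$. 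In this regime the variance-sandwiching and $\beta$-exploration steps are not needed for the dominant term. Either I charge the $\beta$-phase to lower order, or I note that source selection can be arbitrary in $S_{\cG}$, since the bound below uses only $\sigma_j^2<{c^*}^2$.
\end{itemize}

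For the adaptive phase I work on the good event (probability $1-\delta$ after a union bound over \cref{thm:taup_bound} and the concentration lemmas). On this event $|\mu_i-\hat\mu_i(t)|\le 2\sqrt{\log(3KT/\delta)\sum_j n_{ij}(t)\sigma_j^2}/n_i(t)$. This is a martingale Bernstein/Hoeffding bound for the heteroscedastic sum, so I re-derive it as in the proof of \cref{lem:muConc}. Because every $\sigma_j^2<{c^*}^2$, I get $\sum_j n_{ij}(t)\sigma_j^2\le {c^*}^2 n_i(t)$, so the width is at most $2c^*\sqrt{\log(3KT/\delta)/n_i(t)}$.

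The algorithm's UCB uses $\hat\sigma_j^2$ rather than $\sigma_j^2$, and here \cref{cor:varsandwich} is unavailable since $\sigma_j^2<{c^*}^2$. I would therefore bound the UCB width using concentration of $\hat\sigma_j^2$ around $\sigma_j^2$ with additive error $O({c^*}^2)$. The initial exploration length $\alpha$ is chosen so that this additive slack is $O({c^*}^2)$, which gives widths of order $c^*\sqrt{\log(3KT/\delta)/n_i(t)}$ both for optimism and for the exploration term.

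The standard UCB argument then applies. If a suboptimal arm $i$ is pulled at time $t$, then $\ucb_{t-1}^\mu(i)\ge\ucb_{t-1}^\mu(1)\ge\mu^*$, which forces $\Delta_i\le O\!\left(c^*\sqrt{\log(3KT/\delta)/n_i(t)}\right)$. Hence $n_i(T)=O({c^*}^2\log(3KT/\delta)/\Delta_i^2)$, and the adaptive-phase regret is $\sum_{i\ge2}\Delta_i n_i(T)=\tilde O\!\left(\sum_{i=2}^K {c^*}^2/\Delta_i\right)$. Adding (i) and (ii) gives the instance-dependent bound.

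For the worst-case bound, I split arms by a threshold $\Delta$. Arms with $\Delta_i\le\Delta$ contribute at most $T\Delta$, and the others contribute $\tilde O(K{c^*}^2/\Delta)$. Optimizing over $\Delta$ gives $\tilde O(c^*\sqrt{KT})$, which I then upper bound by the stated $K{c^*}^2\sqrt T$ form, up to the scale conventions (treating $c^*$ and $K$ as at least constant order).

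I expect the main obstacle to be controlling the empirical UCB width without the multiplicative sandwich of \cref{cor:varsandwich}. The plan is to use an additive variance concentration, namely \cref{lem:sigConc} with $Q_j\le\max\{\bar\eta^2{c^*}^2,\nu\}$, to show $\hat\sigma_j^2(t)\le \sigma_j^2+O({c^*}^2)\le O({c^*}^2)$. This upper bound is all the pull-count argument needs. Optimism of the UCB itself would follow from the true-variance bound once $\hat\sigma_j^2$ is not much smaller than $\sigma_j^2$. Failing that, I would inflate the variance term with a ${c^*}^2$ floor, which changes only constants.
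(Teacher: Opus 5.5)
\paragraph{Gap in the primary route.} Your fallback is essentially the paper's proof, but your primary route has a hole exactly where you suspect. \Cref{lem:sigConc} only gives $\hat\sigma_j^2(t)\ge\sigma_j^2-O({c^*}^2)$. In this regime every $\sigma_j^2<{c^*}^2$, so that lower bound is vacuous. The empirical width $2\sqrt{2\log(3KT/\delta)\sum_j n_{1j}(t)\hat\sigma_j^2(t)}/n_1(t)$ can then be near zero, while $|\hat\mu_1(t)-\mu^*|$ can be as large as $2\sqrt{\log(3KT/\delta)\sum_j n_{1j}(t)\sigma_j^2}/n_1(t)$. So $\ucb^\mu_t(1)\ge\mu^*$, the first step of your pull-count argument, is never established.

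The paper does not analyze that index at all. It replaces it by $\hat\mu_i(t)+2c^*\sqrt{\log(3KT/\delta)/n_i(t)}$ (\cref{eq:ucb-mu-two}), i.e.\ it treats every surviving source as having effective standard deviation $c^*$. Optimism is then immediate from its Case~2 mean-reward concentration lemma, and $n_i(T)\le 16{c^*}^2\log(3KT/\delta)/\Delta_i^2$ follows. A mere floor $\max\{\hat\sigma_j^2,{c^*}^2\}$ would still need $\hat\sigma_j^2=O({c^*}^2)$ to get ${c^*}^2/\Delta_i$, and hence the $\beta$-phase. The pure $c^*$ index needs no variance estimates at all, so commit to that.

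\paragraph{Further corrections.} (a) You cannot re-derive the mean bound ``as in \cref{lem:muConc}''. There, the self-normalized $\lambda^*=\sqrt{\log(3KT/\delta)/\sum_j n_{ij}(t)\sigma_j^2}$ meets Freedman's constraint $\lambda^*<1/\bar\eta$ only because the source $\bar j$ with $\sigma_{\bar j}^2\ge{c^*}^2$ forces $\sum_j n_{ij}(t)\sigma_j^2\ge\alpha{c^*}^2$. No such source exists here, and with tiny $\sigma_j$ the constraint fails. Instead fix $\lambda=\sqrt{\log(3KT/\delta)/({c^*}^2 n_i(t))}$ and use $\sum_j n_{ij}(t)\sigma_j^2\le{c^*}^2n_i(t)$. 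This gives the $2c^*\sqrt{\log(3KT/\delta)/n_i(t)}$ width directly, and the constraint becomes $n_i(t)>\bar\eta^2\log(3KT/\delta)/{c^*}^2$. That is what $\alpha$ is for; the stated $\alpha=\bar\eta^2{c^*}^2\log(3KT/\delta)$ covers it only when $c^*\ge 1$.

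(b) $\alpha$ controls arm counts, not the variance slack; that slack is governed by $m_j(t)-K$, i.e.\ by $\beta$. Moreover, ``charging the $\beta$-phase to lower order'' is incompatible with the statement. The cost $\tilde M\beta\bar\mu$ contains $KM\bar\mu+M\bar\eta^4\bar\mu/\nu$, neither of which appears in the stated bound. The paper's exploration cost in this regime is only $M\tau_p\bar\mu+K\alpha\bar\mu$, which is consistent only with dropping the variance machinery.

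(c) For the worst case, your threshold argument gives $\tilde O(c^*\sqrt{KT})$. This is sharper than the paper's unproved assertion $\mathrm{Reg}_T\le{c^*}^2L^2(K-1)\sqrt{T}/6$. It implies the stated $K{c^*}^2\sqrt{T}$ only when $c^*\sqrt{K}\ge 1$, so keep that caveat explicit.
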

\begin{proof} 
\vspace{-10pt}
The proof is deferred to \cref{pf:regfin2}.
\end{proof}

\section{Experiments}
\label{sec:expts}

In this section, we first examine the behavior of our approach as the number of arms and sources varies, and then evaluate its performance against the previously introduced baselines. Arm rewards are modeled as Gaussian random variables with fixed means and source-dependent variances, and shaded regions in the plots denote $95\%$ confidence intervals.  In all setups, we observe an initial linear growth in regret, which arises from the preprocessing phase combined with the initial exploration rounds.
A more detailed description of the experimental setup is provided in \cref{sec:app_expts}.

\begin{figure}[H]
    \centering
    \begin{minipage}[t]{0.48\linewidth}
        \centering
        \includegraphics[width=\linewidth, height=4cm]{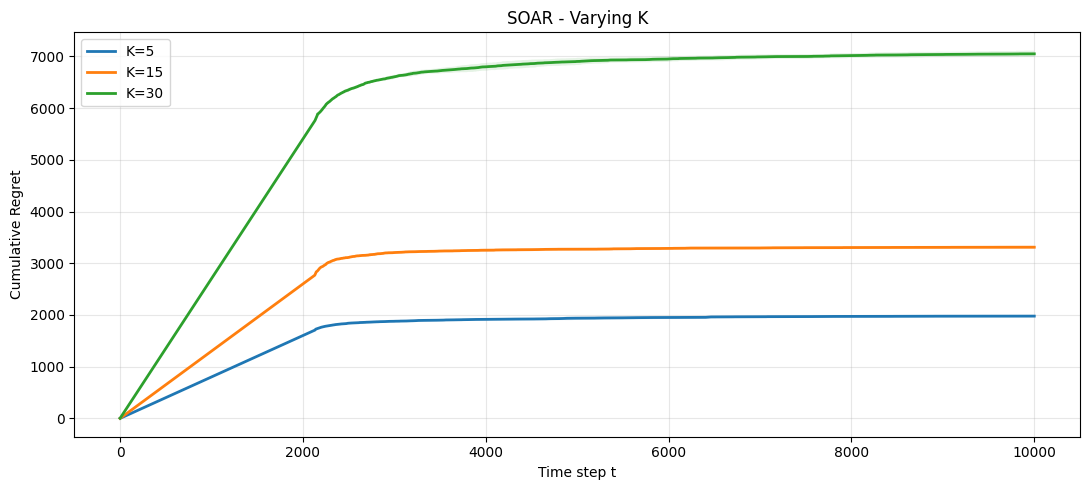}
        
        \caption{Regret of \algmab\ with varying number of arms $K \in \{5,15,30\}$}
        \label{fig:varyingK}
    \end{minipage}
    \hfill
    \begin{minipage}[t]{0.48\linewidth}
        \centering
        \includegraphics[width=\linewidth, height=4cm]{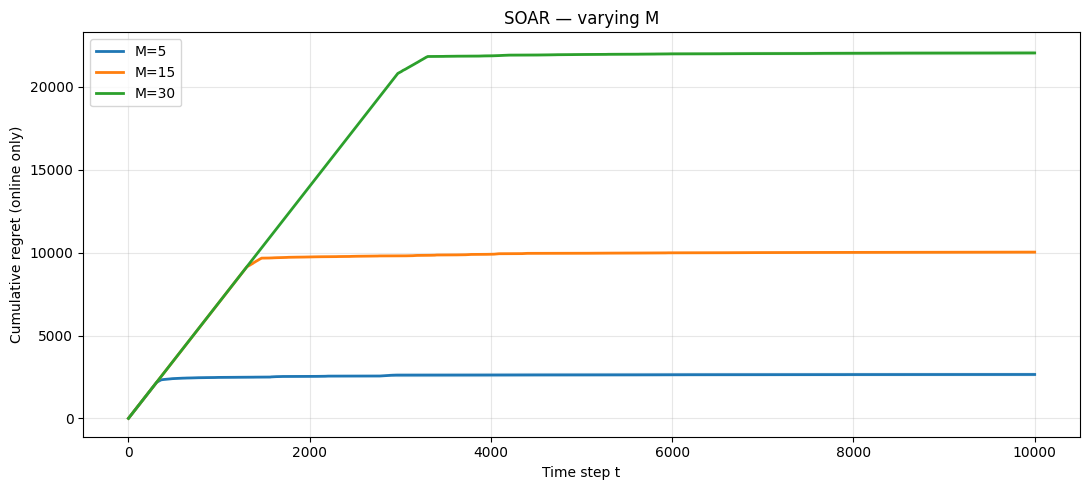}
        
        \caption{Regret of \algmab\ with varying number of sources $M \in \{5,15,30\}$}
        \label{fig:varyingM}
    \end{minipage}
\end{figure}
\begin{figure}[H]
    \centering
    \begin{minipage}[t]{0.48\linewidth}
        \centering
        \includegraphics[width=\linewidth, height = 4cm]{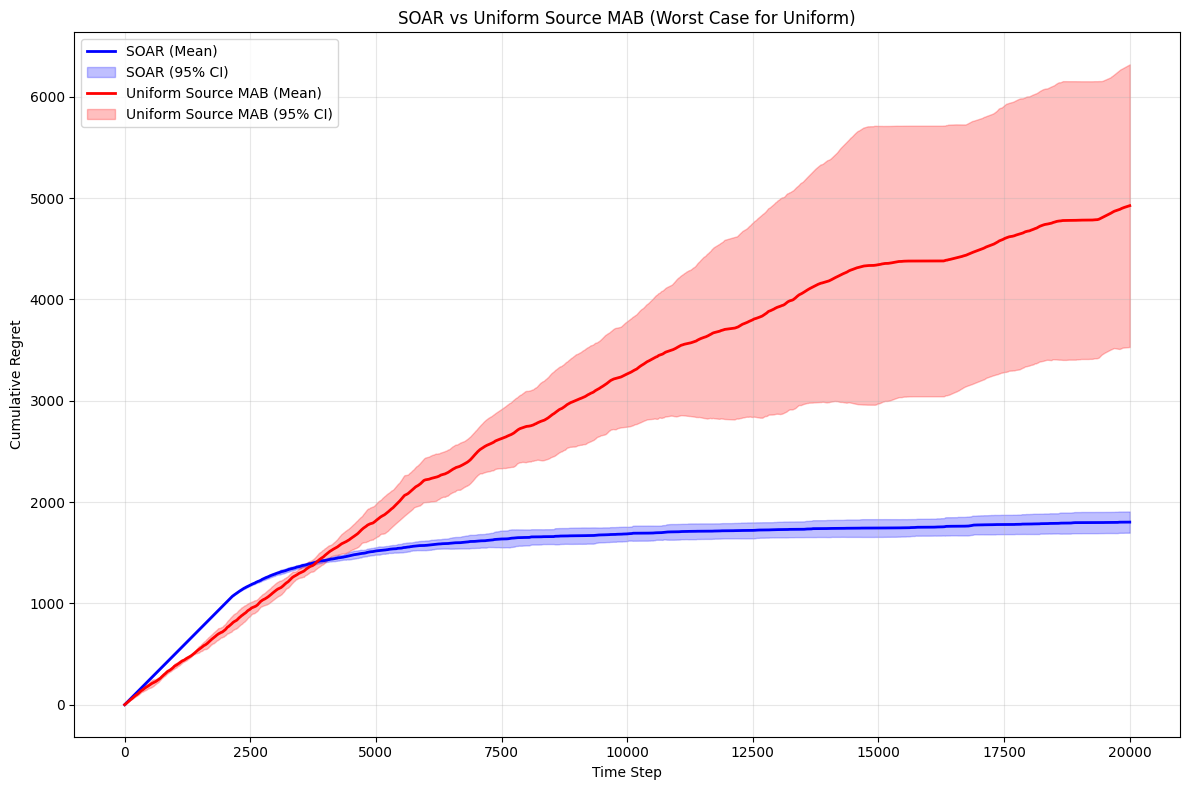}
        
        \caption{\algmab \space vs Baseline-1: WC-1 \as{say on WC1, `:' is confusing}}
        \label{fig:varyingK}
    \end{minipage}
    \hfill
    \begin{minipage}[t]{0.48\linewidth}
        \centering
        \includegraphics[width=\linewidth, height = 4cm]{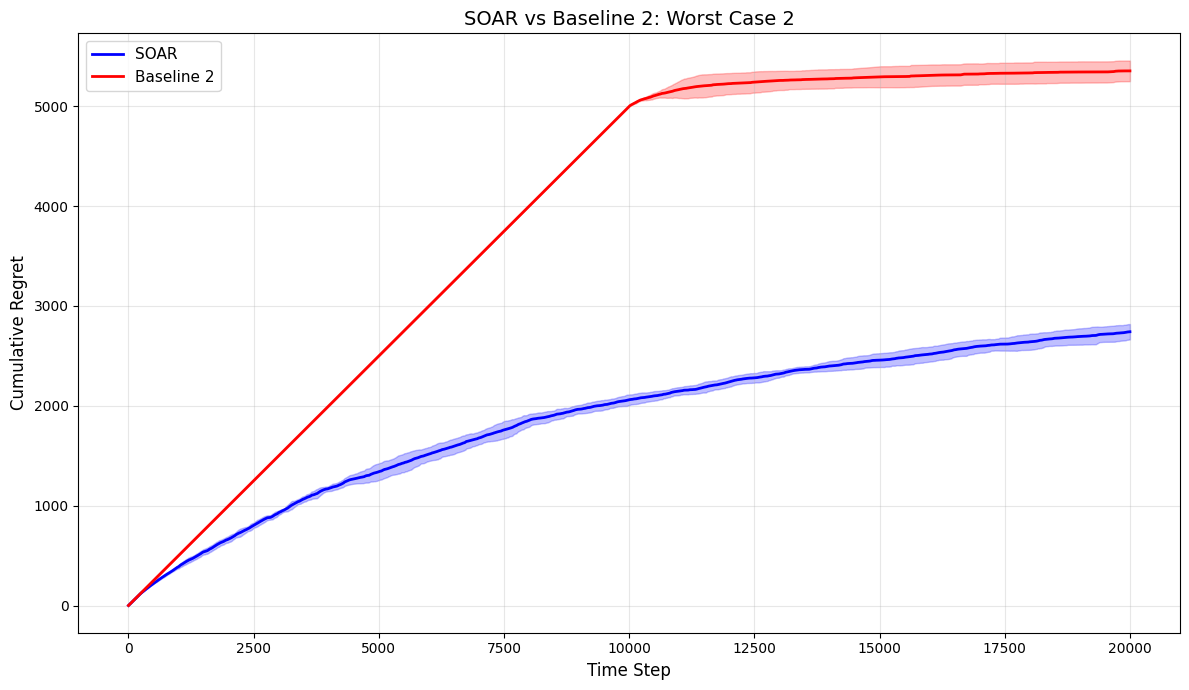}
        
        \caption{\algmab\space vs Baseline-2: WC-2}
        \label{fig:varyingM}
    \end{minipage}
\end{figure}

\as{Keep this in mind for next paper: Pls adjust the axis, legends, title in the plots to make them readable -- very hard to read now. Its ok for this paper for now}
  
\paragraph{MovieLens Panel} To illustrate source adaptivity on real ratings, we build a small fixed panel from the MovieLens\;25M dataset \cite{harper2015movielens} \as{citep}: we select 15 reviewers who rated the same 500 movies (7{,}500 ratings in total). We model each movie $i$ as an arm with an unknown mean reward $\mu_i \in [0,5]$, corresponding to its mean rating in the panel.
For each reviewer, we compute the review variance of the reviewer and take it as the source variance in our algorithm.

\as{floating white spaces look unprofessional}

\begin{figure}[H]
    \centering
    \begin{minipage}[t]{0.48\linewidth}
        \centering
        \includegraphics[width=\linewidth, height=4cm]{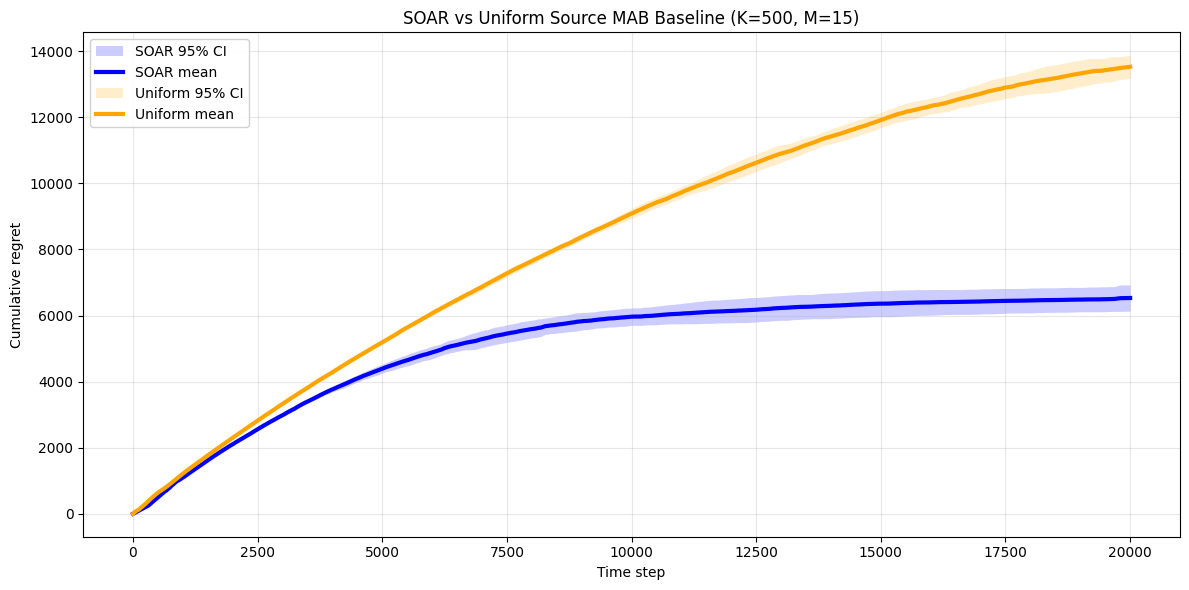}
        \caption{\algmab\ vs.\ Baseline-1 on MovieLens.}
        \label{fig:ml_b1}
    \end{minipage}\hfill
    \begin{minipage}[t]{0.48\linewidth}
        \centering
        \includegraphics[width=\linewidth, height=4cm]{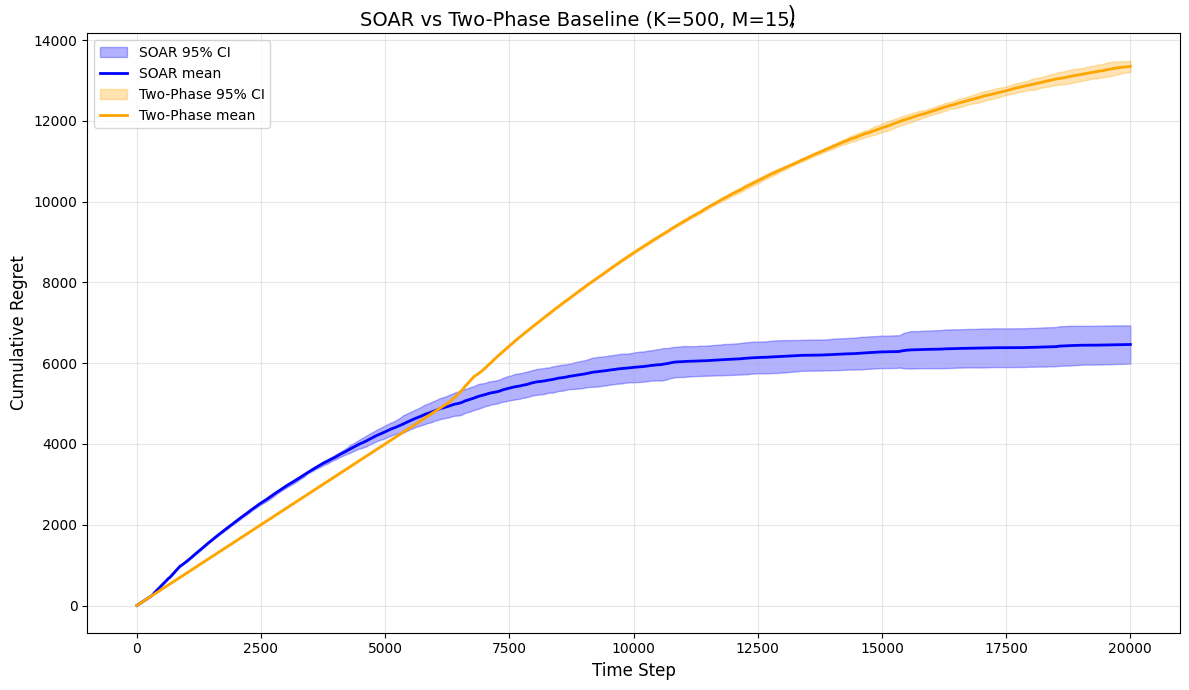}
        \caption{\algmab\ vs.\ Baseline-2 on MovieLens.}
        \label{fig:ml_b2}
    \end{minipage}
\end{figure}

\noindent
\textbf{Findings.} On the MovieLens panel, \algmab\ rapidly concentrates on the lowest-variance reviewer while converging to the highest-rated movie, achieving substantially lower cumulative regret than both baselines (Figures~\ref{fig:ml_b1}–\ref{fig:ml_b2}). This aligns with our theory, that  \algmab\ preferentially routes pulls to low-variance sources.

\section{Discussion}
\label{sec:concl}
We studied the problem of online learning with multiple data sources under heteroscedastic noise, aiming to minimize regret through adaptive source and arm selection.
Our proposed algorithm, \algmab, simultaneously explores and exploits both the data sources (with varying unknown variances) and the reward-generating arms, significantly improving over conventional baselines. 

\vspace{-10pt}
\paragraph{Future Scope.}
A compelling direction for future research is to extend our adaptive learning framework to more general contextual bandit or reinforcement learning scenarios. Additionally, developing algorithms that gracefully handle non-stationary environments, where the quality of data sources changes over time, remains an open challenge that would substantially broaden the applicability of our adaptive learning framework. Extending the framework to other feedback models, e.g. demonstration or relative feedback, might also be interesting future work.

\section*{Acknowledgement}
Thanks to Thomas Kleine Buening for some of the initial discussions towards developing the LCB-UCB idea of SOAR.


\bibliographystyle{plainnat}
\bibliography{icml_refs.bib}

\newpage 

\appendix


\section*{\centering\LARGE{Supplementary: \papertitle}}
\vspace*{1cm}

\section{Few Baselines and their Limitations}
\label{sec:app_baselines}
\as{Make it the new Sec 3. I already commented above, present Sec 3 and 4 needs to be stitched together}

\as{Start with a sentence saying, this section aims to discuss the theoretical guarantees of SOAR with existing baselines, however there have not been any prior work that addressed our problem setting, as already discussed in the Rel-work (cref the Sec). We hence try to analyze the performance of SOAR with two UCB-based baselines, which we designed as follows:}

In this section, we recall our earlier defined baseline algorithms and highlight their limitations, motivating the design of our simultaneous source-arm exploration strategy.

\subsection{Baseline-1: Uniform-UCB (UUCB)}

We start with a brief description of the algorithm and analyze its regret bound, especially for different types of problem instances.

\as{this is optional but good to have: I think calling the instances WC is overselling. We should just call them Instance-1, Instance-2, or even better $\cI_{\texttt{UUCB}}$, etc. Since these instances are not really hard, just that the baselines are really poor, we should not give the impression that we are testing the baselines on `very hard instances' adversarially, but rather convey how they easily fail even on basic, naturally occurring problem instances. Use these wordings if appropriate in the remarks}

\vspace{-10pt}
\paragraph{Algorithm Description: Uniform-UCB (UUCB).}
This baseline selects a data source uniformly at random at each round $t$ and uses the corresponding observations to update reward estimates and apply standard multi-armed bandit (MAB) routines. Under this uniform selection strategy, each arm $i \in [K]$ experiences ``\emph{an effective variance}'' equal to the average variance across all sources:
$
    \tilde \sigma^2 \defeq \frac{1}{M}\sum_{j = 1}^M \sigma_j^2.
$

\as{please a small pseudocode, it will look nice it you can wrap them in a minipage environment: LHS: Alg descriptopn and RHS: Pseudocode, but never mind if that takes time}

\paragraph{Regret Analysis of UUCB:} Following standard upper confidence bound (UCB)-based analyses for MAB algorithms~\cite{CsabaNotes18} \as{this should be citep -- understand the difference of cite and citep and use appropriately}, the regret for this uniform-source MAB algorithm can be bounded as
$
    O\left(\tilde \sigma \sqrt{KT}\right),
$
or, more specifically, using instance-dependent regret bounds as
$
    O\left(\tilde \sigma^2 \sum \limits_{i \neq i^*} \frac{\log(MKT)}{\Delta_{i}^{ }}\right).
$
However, this uniform averaging approach can be gravely suboptimal in certain instances. \\

Consider a worst-case scenario (call this \emph{worst-case instance 1 (WC-1)}) where all sources except the optimal source have a similar, significantly high variance say $\sigma_{\text{max}}^2$. 
In this case, the effective variance $\tilde \sigma^2$ is dominated by the multiple high variance sources:
$
    \tilde \sigma^2 = \frac{({M-1})\sigma_{\text{max}}^2}M \approx \sigma_\text{max}^2.
$\\
Consequently, the instance dependent regret bound deteriorates to $O\left(\sigma_{\text{max}}^2 \sum \limits_{i\neq i^*} \frac{\log(MKT)}{\Delta^{}_i} \right)$ while the instance independent bound becomes $O(\sigma_{\text{max}} \sqrt{KT})$. 
Thus, the learner incurs undesirably high regret, as the estimation accuracy is effectively governed by the extremely noisy non-optimal sources.

\subsection{Baseline-2: Explore-then-Commit UCB (ETC-UCB)}

\paragraph{Algorithm Description: Two-Phase (Source-then-Arm) MAB.}
In this baseline, the learner first attempts to identify the data source with the smallest variance by fixing a single arm and running a best-arm identification (BAI) algorithm across the sources. After this initial phase, the learner commits exclusively to the identified ``good'' variance source and runs a standard MAB algorithm on the $K$ arms, querying arm rewards solely from this selected source. 

\as{Add a similar pseudocode}

\paragraph{Regret Analysis of ETC-UCB:} 
To analyze the regret incurred by this two-phase strategy, we note that the first phase (source identification) incurs a regret of approximately
$
    O\left(\sum\limits_{j \neq j^*} \frac{\bar \mu\log(MKT)}{\max\{\epsilon^2,(\Delta_j^{\sigma^2 })^2\}}\right),
$
where $\epsilon \in \mathbb{R}_+$ is a user-specified error tolerance parameter determining the desired precision (PAC-optimality) in identifying the minimal-variance source. Importantly, during each round of the first phase, the algorithm potentially incurs $O(1)$ \as{this is not true as we removed bdd by 1 assumption} regret (which in turn is bounded by $\bar \mu$) since it continues to query a fixed arm and does not make progress in identifying the optimal reward arm $i^* \in [K]$.\\

In the second phase (reward-arm identification and exploitation), the algorithm's regret can be bounded as
$
    O\left((\sigma^*+\epsilon)^2 \sum\limits_{i \neq i^*} \frac{\log(MKT)}{(\Delta_{i})}\right),
$
leading to an overall regret (summing two phases) of:
\[
    O\left(\sum_{j \neq j^*} \frac{\bar \mu \log(MKT)}{\max\{\epsilon^2,(\Delta_j^{\sigma^2})^2\}} + (\sigma^*+\epsilon)^2 \sum_{i \neq i^*} \frac{\log(MKT)}{\Delta_{i}}\right).
\]

We remark that $\epsilon$ is a tunable parameter whose choice significantly impacts the performance of this approach. However, it is hard for the learner to optimize the value of $\epsilon$ without the knowledge of the variance gaps $\{\Delta^{\sigma^2}_j\}_{j \in [M]}$. \\

Consider now the worst-case scenario for this two-phase algorithm (call this \emph{worst-case instance 2 (WC-2)}). This occurs when all data sources have roughly identical variances 
\[
    \sigma_1^2 \approx \sigma_2^2 \approx \dots \approx \sigma_M^2.
\]
In this regime all variance gaps satisfy \(\Delta_j^{\sigma^2} \approx 0\), so setting \(\epsilon \to 0\) could be particularly
disastrous --- the factor \(1/\max\{\epsilon^2,(\Delta_j^{\sigma^2})^2\}\) in our regret bound effectively blows up, and the
algorithm is forced to keep sampling all sources in an attempt to detect non-existent variance differences --- finally leading to an
arbitrarily poor bound.

In such scenarios, the learner could have selected any data source (or even randomly selected sources at each round, as in the uniform source baseline described above), since there is no practical benefit in spending effort to distinguish among equally good sources. Yet, the two-phase algorithm will unnecessarily incur substantial regret in the initial source-identification phase while trying to pinpoint the optimal (lowest-variance) source, accumulating a high regret of roughly
\[
    O\left(\sum_{j \neq j^*} \frac{\bar \mu \log(MKT)}{\max\{\epsilon^2,(\Delta_j^{\sigma^2})^2\}}\right).
\]

Here, note that $\Delta_j^{\sigma^2} \approx 0$ for all sources $j$, which exacerbates the incurred regret. Moreover, since the learner does not initially know the minimal source variance $\sigma^*$, choosing an appropriate $\epsilon$ to balance this trade-off and minimize regret becomes challenging, leading to additional practical issues.

This dependence on $\epsilon$ becomes clearer when we look at two extreme choices. Suppose the learner naively fixes $\epsilon = c_1 > 0$ independently of the instance, and consider a WC-2 instance in which all source variances are nearly identical, so that $\Delta_j^{\sigma^2} \approx 0$ for all
$j \neq j^*$. Then, for each such source, $\max\{\epsilon^2, (\Delta_j^{\sigma^2})^2\}\approx \epsilon^2 = c_1^2,$
and the corresponding contribution to the regret satisfies
\[
  \sum_{j\neq j^*} \frac{\bar\mu \log(MKT)}{\max\{\epsilon^2,(\Delta_j^{\sigma^2})^2\}}
  \approx \,\sum_{j\neq j^*}\frac{ \bar\mu\log(MKT)}{c_1^2},
\]
which can be made arbitrarily large if $c_1$ is very small. Conversely, if the learner picks $\epsilon$ overly large, then the term
\[
  (\sigma^*+\epsilon)^2 \sum_{i\neq i^*} \frac{\log(MKT)}{\Delta_i},
\]
is dominated by $\epsilon^2$, and the regret again scales poorly. Thus any fixed, instance-independent choice $\epsilon = c_1$ can be far from optimal across different problem instances.

These analyses reveal crucial limitations in both approaches, highlighting that an optimal strategy must \emph{simultaneously} balance exploration and exploitation across data sources and reward arms. A well-designed algorithm must dynamically and adaptively explore low-variance sources, while simultaneously identifying the best-performing arms as quickly as possible to minimize overall regret. 

\newpage
\section{Proofs Related to \algprec}
\subsection{Proof of \algprec \ Variance Concentration Lemma}\label{pf:preprocVarConc}
\VarConcPreproc*
\begin{proof}
\as{Pls write some introductory lines before starting the math abruptly}

\begin{align*}
 \abs{\hat \sigma_{j, \mathrm{pre}}^2(\tau_p) - \sigma_j^2} & = \abs{\frac{1}{\tau_p} \sum_{k=1}^{\tau_p} \big( X_k - \hat \mu_{i}(\tau_p) \big)^2 - \sigma_j^2},
 \\
 & = \abs{ \frac{1}{\tau_p} \sum_{k=1}^{\tau_p} \big( X_k - \mu_i + \mu_i -  \hat \mu_{i}(\tau_p) \big)^2 - \sigma_j^2},
 \\
 & \leq \abs{{\frac{2}{\tau_p} \sum_{k=1}^{\tau_p}  \big( X_k - \mu_i)^2}  + {\frac{2}{\tau_p} \sum_{k=1}^{\tau_p} (\mu_i -  \hat \mu_{i}(\tau_p) \big)^2} - \sigma_j^2},
 \\
  & \leq \underbrace{ \abs{ \frac{2}{\tau_p} \sum_{k=1}^{\tau_p}  \big( X_k - \mu_{i})^2 - \sigma_j^2 }}_{\text{(I)} \leq \epsilon/2}  + \abs{\underbrace{\frac{2}{\tau_p} \sum_{k=1}^{\tau_p} (\mu_i -  \hat \mu_{i}(\tau_p) \big)^2}_{\text{(II)} \leq \epsilon/2}}.
\end{align*}

\paragraph{Bounding Term (I):} To bound Term I we will use Bernstein's Inequality, stated below.
\begin{restatable}[Bernstein's Inequality]{theorem}{BernsteinIneq}
\label{thm:bernstein}
\citep[Cor.~2.11]{blm13}; \citep[Theorem 1.2]{MATH281C_Lecture4}
Suppose $Z_1, \ldots, Z_n$ are independent random variables with finite variances, and suppose that 
\[
\max_{1 \leq k \leq n} |Z_k| \leq b
\]
almost surely for some constant $b > 0$. Let 
\[
V = \sum_{k=1}^n \mathbb{E}[Z_k^2].
\]
Then, for every $t \geq 0$,
\[
\Pr\!\left( \sum_{k=1}^{n} (Z_k - \mathbb{E}[Z_k]) \geq t \right) 
\;\leq\; \exp\!\left( -\frac{t^2}{2\big(V + \tfrac{1}{3}bt\big)} \right),
\]
and
\[
\Pr\!\left( \sum_{k=1}^{n} (Z_k - \mathbb{E}[Z_k]) \leq -t \right) 
\;\leq\; \exp\!\left( -\frac{t^2}{2\big(V + \tfrac{1}{3}bt\big)} \right).
\]
\end{restatable}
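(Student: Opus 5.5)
The plan is the Cramér--Chernoff method. I will work with the uncentered quantities $Z_k$ directly rather than $Y_k = Z_k - \E[Z_k]$. Centering would give $|Y_k|\le 2b$ and lose a factor $2$ in the $bt/3$ term, whereas the claim keeps $b$ and uses $V=\sum_k \E[Z_k^2]$. For $\lambda\in(0,3/b)$, Markov's inequality and independence give
\[
\Pr\Big(\sum_{k}(Z_k-\E[Z_k])\ge t\Big)\le \exp\Big(-\lambda t+\sum_{k}\log \E\big[e^{\lambda (Z_k-\E[Z_k])}\big]\Big).
\]

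\textbf{Step 1 (log-MGF bound).} Let $\phi(u)=e^u-u-1$. Using $\log x\le x-1$,
\[
\log\E\big[e^{\lambda(Z_k-\E[Z_k])}\big]=\log \E\big[e^{\lambda Z_k}\big]-\lambda \E[Z_k]\le \E\big[e^{\lambda Z_k}\big]-1-\lambda\E[Z_k]=\E[\phi(\lambda Z_k)].
\]
The map $u\mapsto \phi(u)/u^2$, extended by $1/2$ at $u=0$, is nondecreasing on $\R$; this can be checked from the power series for $u\ge 0$ and by a derivative computation for $u<0$. Since $\lambda Z_k\le \lambda b$ almost surely, we get $\phi(\lambda Z_k)\le Z_k^2\,\phi(\lambda b)/b^2$. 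Summing over $k$,
\[
\sum_k \log\E\big[e^{\lambda(Z_k-\E[Z_k])}\big]\le \frac{V\phi(\lambda b)}{b^2}.
\]
Only the one-sided bound $Z_k\le b$ is used here.

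\textbf{Step 2 (sub-gamma bound on $\phi$).} For $0\le u<3$, compare series term by term using $q!\ge 2\cdot 3^{q-2}$:
\[
\phi(u)=\sum_{q\ge2}\frac{u^q}{q!}\le \frac{u^2}{2}\sum_{q\ge 0}(u/3)^q=\frac{u^2}{2(1-u/3)}.
\]
Hence the exponent is at most $-\lambda t+\dfrac{V\lambda^2}{2(1-b\lambda/3)}$.

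\textbf{Step 3 (choice of $\lambda$).} Take $\lambda = t/(V+bt/3)$. This lies in $[0,3/b)$, and it gives $1-b\lambda/3 = V/(V+bt/3)$. Substituting,
\[
-\lambda t+\frac{V\lambda^2}{2(1-b\lambda/3)} = -\frac{t^2}{V+bt/3}+\frac{t^2}{2(V+bt/3)}=-\frac{t^2}{2(V+bt/3)},
\]
which is the upper-tail bound. For the lower tail, apply the same argument to $-Z_1,\dots,-Z_n$. These satisfy $-Z_k\le b$ and have the same $V$.

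The main obstacle is Step 1, specifically the monotonicity of $\phi(u)/u^2$ on the negative half-line. For $u>0$ it is immediate from the positive series coefficients. For $u<0$ I would show that the derivative numerator $u\phi'(u)-2\phi(u) = (u-2)e^u+u+2$ is nonpositive for $u\le 0$, equivalently that its sign is opposite to $u^3$. This holds because the expression vanishes at $0$ together with its first two derivatives, and its third derivative $(u+1)e^u$ has the needed sign near $0$. I would check this carefully, since it is exactly where the uncentered $V$ and the constant $b/3$ come from.
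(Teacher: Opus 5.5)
Your argument is correct and is the standard Cram\'er--Chernoff route to this inequality. It runs through Bennett's bound $\sum_k\log\mathbb{E}\,e^{\lambda(Z_k-\mathbb{E}Z_k)}\le V\phi(\lambda b)/b^2$, then $\phi(u)\le u^2/(2(1-u/3))$ on $[0,3)$, then the explicit choice $\lambda=t/(V+bt/3)$. The paper gives no proof of its own; it imports the statement by citation. Your Steps 2 and 3, and the reduction of the lower tail to $-Z_1,\dots,-Z_n$, are all fine.

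The one step whose sketched justification would not go through as written is the monotonicity of $g(u)=\phi(u)/u^2$ for $u<0$.

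\begin{itemize}
\item Knowing that $N(u)=(u-2)e^u+u+2$ vanishes at $0$ together with $N'$ and $N''$, with $N'''(0)=1>0$, only gives $N\le 0$ on some neighborhood of $0$.
\item Since $N'''(u)=(u+1)e^u$ is negative for $u<-1$, you cannot integrate it from $0$ to get a global sign.
\item But you need $g$ nondecreasing on all of $[-\lambda b,\lambda b]$ with $\lambda b$ up to $3$. If you really use only $Z_k\le b$, you need it on all of $(-\infty,\lambda b]$.
\end{itemize}

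Go one derivative lower instead. $N''(u)=ue^u$ has the sign of $u$ everywhere, so $N'$ attains its minimum $N'(0)=0$ at the origin, and hence $N'\ge 0$ on $\mathbb{R}$. Thus $N$ is nondecreasing with $N(0)=0$, so $N$ has the \emph{same} sign as $u^3$, not the opposite sign as you wrote. That is exactly $g'=N/u^3\ge 0$.

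A simpler fix: Taylor's formula with integral remainder gives $g(u)=\int_0^1(1-s)e^{su}\,ds$. This is visibly nondecreasing on all of $\mathbb{R}$ and removes the case split entirely.

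Finally, $\lambda=t/(V+bt/3)$ lies in $[0,3/b)$ only when $V>0$. If $V=0$, every $Z_k=0$ almost surely and the bound is trivial, so treat that case separately.
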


Now, coming back to our proof:\\
\noindent Fix a source $j \in [M]$ and arm $i \in [K]$. Consider $Z_k = (X_k - \mu_i)^2$, where  $X_k = \mu_{i} + \varepsilon$ and $\varepsilon \sim \cD(j)$ is drawn from an (unknown) underlying noise distribution $\cD(j)$ of the selected data-source $j \in [M]$.\\
From our problem setting and \cref{ass:bounded}, it is evident $\abs{Z_k} \leq {\bar \eta}^2$. Now \\
\[
V = \sum_{k=1}^{\tau_p} \mathbb{E}[Z_k^2] \leq \sum_{k=1}^{\tau_p}  {\bar \eta}^2 E[Z_k] \leq \tau_p \ {\bar \eta}^2\sigma_j^2. 
\]
Applying Bernstein's, we have with probability  $\left(1-\dfrac{\delta}{6}\right)$:

\begin{align*}
\Pr\!\left( \abs{\sum_{k=1}^{\tau_p} \biggn{(X_k - \mu_i)^2 - \sigma_j^2}} \geq \epsilon \right) 
\;&\leq\; 2\exp\!\left( -\frac{\epsilon^2}{2\big(\tau_p \ {\bar \eta}^2\sigma_j^2 + \tfrac{1}{3}{\bar \eta}^2\epsilon \big)} \right), 
\\
\equiv \Pr\!\left( \abs{\frac{1}{\tau_p}\sum_{k=1}^{\tau_p} {(X_k - \mu_i)^2} - \sigma_j^2} \geq \frac{\epsilon}{\tau_p } \right) 
\;&\leq\; 2\exp\!\left( -\frac{\epsilon^2}{2\big(\tau_p \ {\bar \eta}^2 \sigma_j^2 + \tfrac{1}{3}{\bar \eta}^2\epsilon \big)} \right),
\\
\implies \Pr\!\left( \abs{\frac{1}{\tau_p}\sum_{k=1}^{\tau_p} {(X_k - \mu_i)^2} - \sigma_j^2} \geq {\epsilon'} \right) 
\;&\leq\; 2\exp\!\left( -\frac{{\tau_p}^2{\epsilon'}^2}{2\big({\tau_p} \ {\bar \eta}^2\sigma_j^2 + \tfrac{1}{3}{\bar \eta}^2 \ {\tau_p} \ \epsilon' \big)} \right), ~~\left[\epsilon' = \frac{\epsilon}{\tau_p}\right]
\end{align*}

Replace $\epsilon'$ by $\epsilon''/2$, \as{we get:}
\[
\Pr\!\left( \abs{\frac{1}{\tau_p}\sum_{k=1}^{\tau_p} {(X_k - \mu_i)^2} - \sigma_j^2} \geq {\dfrac{\epsilon''}{2}} \right) 
\;\leq\; 2\exp\!\left( -\frac{{\tau_p} \ {\epsilon''}^2}{8\big( {\bar \eta}^2\sigma_j^2 + \tfrac{1}{6}{\bar \eta}^2\epsilon \big)} \right).
\]

Choose $\epsilon''$ such that $ \dfrac{\sigma_j^2}{\epsilon''^2} > \dfrac{1}{6\epsilon} $ i.e $\epsilon'' < 6 \sigma_j^2$, This is equivalent to $\dfrac{\delta}{6} \geq 2 \exp\!\left( -\dfrac{\tau_p \ \epsilon''^2}{16{\bar \eta}^2 \sigma_j^2} \right)$
$\implies \epsilon'' \le 4{\bar \eta}\sigma_j\sqrt{\dfrac{\log(12/\delta)}{\tau_p}}.$

\noindent Thus with $\epsilon'' < 6 \sigma_j^2$, and a union bound over all sources $j \in [M]$ we have with probability $\left(1-\dfrac{\delta}{6}\right)$,
\[
 \abs{\frac{2}{\tau_p}\sum_{k=1}^{\tau_p} {(X_k - \mu_i)^2} - \sigma_j^2} \leq 4{\bar \eta}\sigma_j\sqrt{\dfrac{\log(12M/\delta)}{\tau_p}} .
\]

\paragraph{Bounding Term (II):}
Consider $D_k = (X_k - \mu_i)^2$, where  $X_k = \mu_{i} + \varepsilon$ and $\varepsilon \sim \cD(j)$ is drawn from an (unknown) underlying noise distribution $\cD(j)$ of the selected data-source $j \in [M]$. Similar to how we bounded term(I), it is evident $\abs{D_k} \leq {\bar \eta}$. Additionally conditional variance $V = \sum^{\tau_p}_{k=1} E[D_k^2] = \tau_p\cdot\sigma_j^2$\\
Applying Bernstein's, we have with probability $\left(\delta/6\right)$:

\begin{align*}
\Pr\!\left( \dfrac{1}{\tau_p}\abs{\sum_{k=1}^{\tau_p} \biggn{X_k - \mu_i}} \geq \epsilon'/2 \right) 
\;&\leq\; 2\exp\!\left( -\frac{{{\tau_p} \ \epsilon'}^2}{8\big( \sigma_j^2 + \tfrac{1}{6}{\bar \eta}\epsilon' \big)} \right), ~~\left[ \epsilon' = \frac{\epsilon}{\tau_p}\right]
\\
\implies 
\Pr\!\left( \abs{ \hat \mu_{i}(\tau_p) - \mu_i} \geq \epsilon'/2 \right) 
\;&\leq\; 2\exp\!\left( -\frac{{{\tau_p} \ \epsilon'}^2}{8\big( \sigma_j^2 + \tfrac{1}{6}{\bar \eta}\epsilon' \big)} \right),
\\
\implies 
\Pr\!\left( \abs{ \hat \mu_{i}(\tau_p) - \mu_i}^2 \geq {\epsilon'}^2/4 \right) 
\;&\leq\; 2\exp\!\left( -\frac{{{\tau_p} \ \epsilon'}^2}{8\big( \sigma_j^2 + \tfrac{1}{6}{\bar \eta}\epsilon' \big)} \right),
\end{align*}    

Choose $\epsilon'' = {\epsilon'}^2 /2 \implies \text{\as{replace or equivalently in words}} \epsilon' = \sqrt{2\epsilon''}$, \as{we get (this is in many places, check)}
\[
\Pr\!\left( \abs{ \hat \mu_{i}(\tau_p) - \mu_i}^2 \geq {\epsilon''}/2 \right) 
\;\leq\; 2\exp\!\left( -\frac{{{\tau_p} \ \epsilon''}}{4\big( \sigma_j^2 + \tfrac{\sqrt{2}}{6}{\bar \eta}\sqrt{\epsilon''} \big)} \right).
\]

Choosing $\epsilon$ such that
$\dfrac{\sigma_j^2}{\epsilon} > \dfrac{\sqrt{2}{\bar \eta}}{6\sqrt{\epsilon}} \ \implies \text{\as{or equivalently}} \epsilon < \dfrac{18 \sigma_j^4}{{\bar \eta}^2}$ \as{this is not even a complete sentence!! Run a check to make sure nothing sounds off}.\\
Additionally we have $\epsilon'' \leq \dfrac{8}{\tau_p} \sigma_j^2 \log(12/\delta)$ from $\delta/6 \geq 2\exp\left(\dfrac{\tau_p \ \epsilon''}{4\sigma_j^2}\right)$.\\
Thus with $\epsilon'' <  \dfrac{18\sigma_j^4}{\bar \eta^2}$ and a union bound over all sources $j \in [M]$ we have with probability $\left(1-\dfrac{\delta}{6}\right)$,
\[
\Pr\!\left( \abs{ \hat \mu_{i}(\tau_p) - \mu_i}^2 \leq \dfrac{4}{\tau_p}\sigma_j^2 \log(12M/\delta) \right). 
\;
\]

\paragraph{Putting it all together: \as{why suddenly bold?}}
\begin{align*}
 \abs{\hat \sigma_{\tau_p}^2(j) - \sigma_j^2} \leq  &\abs{ \frac{2}{\tau_p} \sum_{k=1}^{\tau_p}  \big( X_k - \mu_{i})^2 - \sigma_j^2 }  + \abs{\frac{2}{\tau_p} \sum_{k=1}^{\tau_p} (\mu_i -  \hat \mu_{i}(\tau_p) \big)^2},
 \\
 \leq & \underbrace{4{\bar \eta}\sigma_j\sqrt{\dfrac{\log(12M/\delta)}{\tau_p}}}_{C} + \underbrace{\dfrac{8}{\tau_p}\sigma_j^2 \log(12M/\delta)}_D .
\end{align*} 

Now we can choose the number of times each source is queried (i.e, preprocessing budget $\tau_p$) such that $C > D$ and then bound the expression $C+D$ by $2C$. This is achieved when
$\tau_p > \dfrac{4\sigma_j^2 \log(12M/\delta)}{{\bar \eta}^2}$.\\
Now, we know from our problem setup and \cref{ass:bounded} that $\sigma_j^2 \leq \bar \eta^2$ for any $j \in [M]$. Therefore the greatest lower bound for $\tau_p$ is $\tau_p > 4\log(12M/\delta)$ which is true for any valid $M$. Additionally this constraint on $\tau_p$ is subsumed by the stopping condition stated in \cref{thm:taup_bound}.
Hence we have with probability $1-\dfrac{\delta}{3}$:
 \[
 \abs{\hat \sigma_{j, \mathrm{pre}}^2(\tau_p) - \sigma_j^2} \leq 8{\bar \eta}\sigma_j\sqrt{\dfrac{\log(12M/\delta)}{\tau_p}},
 \]
 \as{which concludes the proof. Add this to other proofs as appropriate.}
\end{proof}

\subsection{Proof of Stopping Condition of \algprec}
\HighVarianceElimination*
\label{pf:elimCondn}
\begin{proof}
For the purposes of this proof, we will be using the variance concentration defined above in \cref{lem:preprocVarConc} \as{please check punctuation thoroughly throughout the paper, this is in many places! If needed, take a printout and read the draft, correct with pen all at once and then correct them on Overleaf}\\
For the preprocessing algorithm to proceed without stopping, we need to find $\tau_p$ such that,

\as{please, don't write math like below, left indent them and connect them through $\implies$ }

\as{I think all $\sigma$ superscripts need to be $\sigma^2$?? right? we never did any conc on stds!}

\begin{align*}
\text{LCB}_{\tau_p,\mathrm{pre}}^{\sigma}(j) &\leq \text{UCB}_{\tau_p, \mathrm{pre}}^{\sigma}(j^*),
\\
\implies \hat \sigma^2_{j,\mathrm{pre}}(\tau_p) - \text{conf}^\sigma_j(\tau_p) &\leq \hat \sigma^2_{j^*,\mathrm{pre}}(\tau_p) + \text{conf}^\sigma_{j*}(\tau_p),
\\
\sigma_j^2 - 2\text{conf}^\sigma_j(\tau_p) \leq
\hat \sigma^2_{j,\mathrm{pre}}(\tau_p) - \text{conf}^\sigma_j(\tau_p) &\leq \hat \sigma^2_{j^*,\mathrm{pre}}(\tau_p) + \text{conf}^\sigma_{j*}(\tau_p) \leq {\sigma^*}^2 + 2\text{conf}^\sigma_{j*}(\tau_p),
\\
\sigma_j^2 - 2\text{conf}^\sigma_j(\tau_p) &\leq {\sigma^*}^2 + 2\text{conf}^\sigma_{j*}(\tau_p),
\\
\sigma_j^2 - {\sigma^*}^2 &\leq  2\text{conf}^\sigma_j(\tau_p) + 2\text{conf}^\sigma_{j*}(\tau_p) \leq 4\text{conf}^\sigma_j(\tau_p),
\\
\Delta^{\sigma^2}_j &\leq 4\text{conf}^\sigma_j(\tau_p),
\\
\Delta^{\sigma^2}_j &\leq 32{\bar \eta}\sigma_j\sqrt{\dfrac{\log(12M/\delta)}{\tau_p}} ~~[\text{Using \cref{lem:preprocVarConc}}].
\end{align*}
\as{$\implies$  missing}

We know that for any $j \in [M]$, we eliminate arm if $\sigma_j^2 - \sigma^2_* > {c^*}^2 \implies \Delta^{\sigma_j^2}_j > {c^*}^2$. 
\begin{align*}
{c^*}^2 &\leq 32{\bar \eta}\sigma_j\sqrt{\dfrac{\log(12M/\delta)}{\tau_p}},
\\
{c^*}^2 &\leq  \dfrac{1024 {\bar \eta}^4 \log(12M/\delta)}{\tau_p}, ~~[\sigma_j^2 \leq \bar \eta^2 ~(\text{\cref{ass:bounded}})]
\\
\tau_p &\leq \dfrac{1024 {\bar \eta}^4 \log(12M/\delta)}{{c^*}^4}.
\end{align*}
Hence for \algprec \ to proceed with the elimination of sources we need $\tau_p > \dfrac{1024 {\bar \eta}^4 \log(12M/\delta)}{{c^*}^4}$.
\end{proof}
\newpage
\section{Proofs Related to the adaptive phase of \algmab: \as{Again Caps!}}
\subsection{Proof of the reward variance concentration: \as{Caps are all messed up, check whole Supp!}}\label{pf:sigConc}
\begin{restatable}[Source Variance Concentration]{lemma}{sigconc}\label{lem:sigConc2}
Assume there exists a source $ j \in S_{\cG}$ such that $\sigma_j^2 \ge {c^*}^2$ where ${c^*} \in \mathbb{R}_+$ is a user-defined parameter chosen on the scale of $\sigma_j$. Define
\[
Q_j \defeq 
\begin{cases}
\max\{\kappa_j,\nu\}, & \text{if $\kappa_j$ is known},\\[2mm]
\max\{\bar{\eta}^2 \,\hat{\sigma}_j^2(t),\nu\}, & \text{if $\kappa_j$ is unknown},
\end{cases}
\].
Recall that $\nu \in \mathbb{R}_+$ is a user-defined parameter chosen on the scale of the fourth central moment $\kappa_j$.
If the source $j$, is queried for atleast $m_j(t)$ iterations where
$
m_j(t) = K + \dfrac{4\bar{\eta}^4 \log(3MT/\delta)}{\nu}.
$
Then for any $t \in [T]$, with probability at least $1-\delta/3$,
\begin{equation*}
\big| \hat{\sigma}_j^2(t) - \sigma_j^2 \big|
\;\le\;
\sqrt{\frac{Q_j(t) \log(3MT/\delta)}{m_j(t)-K}} .
\end{equation*}
\end{restatable}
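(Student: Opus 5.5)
The plan is to write the pooled estimator $\hat\sigma_j^2(t)$ as an average of $m_j(t)-K$ approximately independent centered squared noises, apply Bernstein's inequality (\cref{thm:bernstein}) with a variance proxy controlled by $Q_j(t)$, and union bound over sources and times. First fix $j\in S_{\cG}$ and $t$. Write $X_\ell=\mu_i+\varepsilon_\ell$ for $\ell\in\tau_{i,j}(t)$, so each per-arm sum of squares satisfies
\[
\sum_{\ell\in\tau_{i,j}(t)}\bigl(X_\ell-\hat\mu_{ij}(t)\bigr)^2=\sum_{\ell\in\tau_{i,j}(t)}\varepsilon_\ell^2-n_{ij}(t)\,\bar\varepsilon_{ij}^2,
\]
where $\bar\varepsilon_{ij}$ is the within-cell noise average. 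The arm means cancel, so $\hat\sigma_j^2(t)$ is the standard pooled unbiased variance estimator for the noise of source $j$, and it has $m_j(t)-K$ degrees of freedom. Because the noise is arm-independent, I would use a Helmert-type orthogonal decomposition within each cell. This rewrites the pooled sum of squares as $\sum_{k=1}^{m_j(t)-K}W_k^2$, where the $W_k$ are linear contrasts of the $\varepsilon_\ell$ with $\mathbb{E}[W_k^2]=\sigma_j^2$. Alternatively, I would avoid this rewrite and bound $\frac{1}{m_j(t)-K}\bigl(\sum_\ell(\varepsilon_\ell^2-\sigma_j^2)-\sum_i(n_{ij}\bar\varepsilon_{ij}^2-\sigma_j^2)\bigr)$ term by term.

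The main term is $\sum_\ell(\varepsilon_\ell^2-\sigma_j^2)$. Its summands are bounded by $\bar\eta^2$ by \cref{ass:bounded}, and each has variance $\kappa_j-\sigma_j^4\le\kappa_j\le\bar\eta^2\sigma_j^2$. Since source choices are adaptive, I would use the martingale (Freedman) form of Bernstein's inequality, or the standard argument that fixes the $k$-th sample from source $j$ on a stack of i.i.d.\ draws, and union bound over the possible counts $m\le T$ and over $j\in[M]$. This produces the $\log(3MT/\delta)$ factor. With probability $1-\delta/3$ the deviation is then at most
\[
\sqrt{\frac{2\kappa_j\log(3MT/\delta)}{m}}+\frac{2\bar\eta^2\log(3MT/\delta)}{3m}.
\]
The correction terms $n_{ij}\bar\varepsilon_{ij}^2$ are handled by the mean concentration already used in term (II) of \cref{lem:preprocVarConc}. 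Each is $O(\sigma_j^2\log)$, and summed over the $K$ arms they contribute $K\sigma_j^2$ in expectation, which is exactly absorbed by the $-K$ in the denominator. The fluctuation of this correction is lower order.

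Next I would convert the bound to the stated form. The Bernstein linear term is absorbed into the square-root term once $m_j(t)-K\ge 4\bar\eta^4\log(3MT/\delta)/\nu$, because then $\bar\eta^2\log/m\le\sqrt{\nu\log/m}/2$. This is exactly where the assumed sample size $m_j(t)=K+4\bar\eta^4\log(3MT/\delta)/\nu$ enters, and it explains why $\nu$ appears as a floor in $Q_j$. In the known-$\kappa_j$ case, $\kappa_j\le Q_j(t)$ finishes the argument, with constants adjusted.

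The unknown-$\kappa_j$ case is where I expect the main obstacle. Here one must replace $\kappa_j\le\bar\eta^2\sigma_j^2$ by $\bar\eta^2\hat\sigma_j^2(t)$, which requires $\sigma_j^2\lesssim\hat\sigma_j^2(t)$ on the same event. My plan is to solve the quadratic self-bounding inequality
\[
|\hat\sigma_j^2-\sigma_j^2|\le\sqrt{\bar\eta^2\sigma_j^2 L/m}+\text{lower order},\qquad L=\log(3MT/\delta),
\]
for $\sigma_j$ in the style of empirical Bernstein bounds. This yields $\sigma_j^2\le 2\hat\sigma_j^2+O(\bar\eta^2L/m)$, and the additive piece is dominated by $\nu$ under the sample-size condition. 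Together with the max with $\nu$ in $Q_j(t)$, this gives the claim. The constant factors in this step are the most delicate part and may require slightly inflating the constant inside the square root.
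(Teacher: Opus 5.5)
Your identity
\[
(m_j(t)-K)\bigl(\hat\sigma_j^2(t)-\sigma_j^2\bigr)=\sum_{\ell}\bigl(\varepsilon_\ell^2-\sigma_j^2\bigr)-\sum_{i=1}^K\bigl(n_{ij}(t)\,\bar\varepsilon_{ij}^2-\sigma_j^2\bigr)
\]
is correct. The claim that the second sum is ``lower order'' fails under the lemma's hypothesis, and this is a genuine gap. Write $L=\log(3MT/\delta)$.

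The per-cell mean concentration you invoke (term (II) of the preprocessing lemma, made uniform over counts) controls each $n_{ij}(t)\bar\varepsilon_{ij}^2$ only at $O(\sigma_j^2L)$. So the correction contributes $O\bigl(K\sigma_j^2L/(m_j(t)-K)\bigr)$ to $|\hat\sigma_j^2(t)-\sigma_j^2|$. This falls below $\sqrt{Q_j(t)L/(m_j(t)-K)}$ only once $m_j(t)-K$ is of order $K^2\sigma_j^4L/Q_j(t)$. The lemma assumes only $m_j(t)-K\ge 4\bar\eta^4L/\nu$.

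The first sum has the mirror-image problem. It has $m_j(t)$ increments, not $m_j(t)-K$, so Bernstein gives a deviation of order $\sqrt{\kappa_jLm_j(t)}$. Dividing by $m_j(t)-K$ then leaves an extra factor $\sqrt{m_j(t)/(m_j(t)-K)}$.

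For a concrete case, take $\nu=\bar\eta^4$ and $m_j(t)=K+4L$ with $K\gg L$. Most cells are then singletons. Each singleton contributes $\varepsilon_\ell^2-\sigma_j^2$ to both sums and cancels exactly, yet your argument pays for it twice. Your bound therefore overshoots the target by a factor growing with $K$.

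The heuristic ``$K\sigma_j^2$ in expectation, exactly absorbed by the $-K$'' does not rescue this, for two reasons:
\begin{itemize}
\item With adaptively determined counts, $\mathbb{E}[n_{ij}(t)\bar\varepsilon_{ij}^2]\neq\sigma_j^2$ in general.
\item It presumes all $K$ cells are nonempty. For a source explored only on arm $i_0$, the $-K$ over-corrects; this is a defect of the pooled estimator itself.
\end{itemize}
Your Helmert alternative does not close the gap either. For non-Gaussian noise the contrasts $W_k$ are uncorrelated but not independent, so Bernstein does not apply to $\sum_k W_k^2$.

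The paper's proof is organized to avoid this split. It treats the residuals $D_k=(X_k-\hat\mu_{i_kj}(t))^2-\sigma_j^2(n_{i_kj}(t)-1)/n_{i_kj}(t)$ as a single martingale difference sequence. Their sum is $(m_j(t)-K)(\hat\sigma_j^2(t)-\sigma_j^2)$, and their quadratic variation is at most $(m_j(t)-K)\kappa_j$ via the variance formula for the sample variance. It then applies Freedman with optimized $\lambda$ and reads off $m_j(t)>K+4\bar\eta^4L/\nu$ from the constraint $\lambda<1/(2\bar\eta^2)$.

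That step is itself loose, because $D_k$ involves the end-of-window mean $\hat\mu_{i_kj}(t)$ and so is not adapted. A rigorous version of either route therefore probably needs one of two things:
\begin{itemize}
\item a stronger hypothesis making $m_j(t)-K$ large relative to $K$ (with your cell-by-cell bounds, at least of order $K+K^2\sigma_j^4L/Q_j(t)$), or
\item joint control of the $K$ cell corrections under adaptive counts.
\end{itemize}

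Your unknown-$\kappa_j$ step improves on the paper. The paper simply writes $\kappa_j\le Q_j(t)$ and optimizes $\lambda$ with the data-dependent $Q_j(t)$. You instead prove the bound with the deterministic proxy $\bar\eta^2\sigma_j^2$ and then self-bound $\sigma_j^2\le 2\hat\sigma_j^2(t)+O\bigl(\bar\eta^2L/(m_j(t)-K)\bigr)$. That is the right way to justify replacing $\kappa_j$ by $\max\{\bar\eta^2\hat\sigma_j^2(t),\nu\}$.

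Your caveat about constants is also consistent with the paper. Its proof ends with $2\sqrt{Q_j(t)L/(m_j(t)-K)}$, which is the form used in the variance LCB and in the sandwiching corollary, not the constant-one bound in the statement.
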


\begin{proof}
Suppose $T_k(j) = \{k\le t: j_k = j\}$ denote the number of times  a source $j \in S_{\cG}$ was selected upto time $t$. 
    Consider the sequence of random variables defined by:
    \[
        D_k = \left(X_k - \hmu_{i_kj}(t)\right)^2 - \dfrac{\sigma_j^2(n_{i_kj}(t)-1)}{n_{i_kj}(t)}  \quad \forall k\in T_t(j).  \text{\as{note rendering issues, macros are missing?!}}
    \]
We observe that $\{D_{k}\}_{k\in T_t(j)}$ forms a Martingale Difference Sequence (MDS) \as{ref??} with respect to the filtration sequence $\mathcal{F}_{k} = \sigma \left(\{i_s\}_{s = 1}^{k}\right)$ \as{punctuation}


\paragraph{Verifying the MDS property:}
Note that $\mathbb{E}\left[ D_k|\mathcal{F}_{k}\right] = 0$. This follows from the property that \[\mathbb{E}\left[(X_i - \hmu_{i_kj}(t))^2|\mathcal{F}_{k}\right] = \dfrac{\sigma_j^2 (n_{i_kj}(t) - 1)}{(n_{i_kj}(t))}.\]
Specifically:
\begin{align*}
    \mathbb{E}\left[ D_k|\mathcal{F}_{k}\right] &= \mathbb{E}\left[ ((X_k-\hmu_{i_kj}(t))^2|\mathcal{F}_{k})^2\right] - \dfrac{\sigma_j^2(n_{i_kj}(t)-1)}{n_{i_kj}(t)},
    \\
    & = \dfrac{\sigma_j^2(n_{i_kj}(t)-1)}{n_{i_kj}(t)} - \dfrac{\sigma_j^2(n_{i_kj}(t)-1)}{n_{i_kj}(t)} = 0.
\end{align*}

\paragraph{Bounding $D_k$:}
Additionally we note that $D_k$ is bounded i.e $D_k \leq 2\bar \eta^2$.
This follows as:
\begin{align*}
    D_k &= \left(X_k - \hmu_{i_kj}(t)\right)^2 - \dfrac{\sigma_j^2(n_{i_kj}(t)-1)}{n_{i_kj}(t)},   \qquad k\in T_t(i)
    \\
    & \le (X_k - \mu_{i_k})^2 + (\mu_{i_k} - \hmu_{i_kj}(t))^2,\\
    & \le 2\bar \eta ^2.
\end{align*}

 Considering these properties we can now leverage standard martingale inequalities:
\begin{restatable}{freedman}{freedmanineq}
\begin{theorem}[Freedman's inequality, \cite{raban2022lecturenotes}]\label{thm:freedman}
Let $\{(D_k,\mathcal{F}_k)\}$ be a martingale difference sequence such that
\begin{enumerate}
  \item $\mathbb{E}[D_k\mid\mathcal{F}_{k-1}]=0$
  \item $D_k\le b$.
\end{enumerate}
Then for all $\lambda\in(0,1/b)$ and $\delta\in(0,1)$,
\[
\mathbb{P}\!\left(
  \sum_{k=1}^{T}D_k
  \le
  \lambda\sum_{k=1}^{T}\mathbb{E}[D_k^2\mid\mathcal{F}_{k-1}]
  +\frac{\log(1/\delta)}{\lambda}\right)
\ge 1-\delta.
\]
\end{theorem}
\end{restatable}

Now, since the martingale difference sequence (MDS) $\{D_k\}_{k \in T_t(j)}$ satisfies the conditions of \cref{thm:freedman}, we can apply Freedman's inequality. 
We begin by evaluating the partial sum $\sum_{k \in T_t(j)} D_k$, which corresponds to the left-hand side of \cref{thm:freedman}.
 
\paragraph{Summing $D_k$:}
\begin{align*}
    \sum_{k \in T_t(j)} D_k &= \sum_{i=1}^K \sum_{\ell\in\tau_{ij}(t)} \left( \left(X_{\ell} - \hmu_{i_{\ell}j}(t)\right)^2 - \dfrac{\sigma_j^2(n_{i_{\ell}j}(t)-1)}{n_{i_{\ell}j}(t)}  \right), 
    \\
    &= \sum_{i=1}^K  \left( \sum_{\ell\in\tau_{ij}(t)}\left(X_{\ell} - \hmu_{i_{\ell}j}(t)\right)^2 - \ \sigma_j^2(n_{i_{\ell}j}(t)-1)  \right),
    \\
    &= (m_j(t)-K)\left|\left( \hsigma_{j}^2(t) - \sigma_j^2\right)\right|.
\end{align*}

Our next task is to bound the predictable quadratic variation term
$\sum_{k \in T_t(j)} \E\!\left[D_k^2 \mid \mathcal{F}_{k}\right],$
which serves as the conditional variance proxy in \cref{thm:freedman}.
Note that $\E[D_k \mid \mathcal{F}_{k}] = 0$ since $\{D_k\}$ is a martingale difference sequence.\\

To achieve this we will make use of \emph{Result~3} in \citep{8bb291e6-188d-3d78-b02a-67cdbdc76b17} (p.~284), which provides a moment expression for the (unbiased) sample variance under simple random sampling. Specifically, 
\begin{equation}\label{eq:varSampVariance}
\text{Var}(S_n^2) =  \dfrac{1}{n}\left( \mu_4 - \dfrac{n-3}{n-1}\sigma^4\right)    
\end{equation}
Here, $S_n^2 \defeq \frac{1}{n-1}\sum_{i=1}^n (X_i-\bar X)^2$ denotes the unbiased sample variance computed from $n$ i.i.d.\ samples $X_1,\dots,X_n$ with mean $\mu$ and population variance $\sigma^2$.  
Moreover, $\mu_4 \defeq \mathbb{E}[(X-\mu)^4]$ denotes the fourth central moment of the underlying distribution.

Adapting the above (\cref{eq:varSampVariance}) to our setting we obtain the following expression:
\begin{equation}\label{eq:varSEbound}
 \text{Var}(\hsigma^2_{ij}(t)) = \dfrac{1}{n_{ij}(t)}\left( \kappa_j- \dfrac{n_{ij}(t)-3}{n_{ij}(t)-1} \sigma_j^4\right) \le \dfrac{\kappa_j}{n_{ij}(t)}.  
\end{equation}

\paragraph{Bounding Variation Term:}
\begin{align*}
    \sum_{k \in T_t(j)}\mathbb{E}[D_k^2|\mathcal{F}_{k-1}] &= \sum_{i=1}^K \sum_{\ell \in \cT_{ij}(t)}  \mathbb{E} \left[\left( \left(X_{\ell} - \hmu_{i_{\ell}j}(t)\right)^2 - \dfrac{\sigma_j^2(n_{i_{\ell}j}(t)-1)}{n_{i_{\ell}j}(t)}  \right)^2  \right],
    \\
    &\leq \sum_{i=1}^K \sum_{\ell \in \cT_{ij}(t)} \text{Var} \left[   \left(X_k - \hmu_{i_kj}(t)\right)^2 \right],
    \\
    &\leq \sum_{i=1}^K \text{Var}(\hsigma^2_{ij} (t)) (n_{ij}(t)-1)^2,
    \\
    &\leq \sum_{i=1}^K \dfrac{(n_{ij}(t)-1)^2 \kappa_j}{n_{ij}(t)}, ~~[\text{From  \cref{eq:varSEbound}}] 
    \\
    &\leq (m_j(t) - K)\kappa_j,
    \\
     &\leq (m_j(t) - K)Q_j(t).   
\end{align*}
where  \[
Q_j(t)=
\begin{cases}
\max \{ \kappa_j, \nu \} & \text{if } \kappa_j  \text{ is  known}\\
\max \{ \bar \eta^2 \hat \sigma_j^2(t), \nu \} & \text{if } \kappa_j  \text{ is  unknown}
\end{cases}
\]
\paragraph{Putting it all together:}
Hence, we have with high probability $1-\delta/3$
\[
(m_j(t)-K)\left|\left( \hsigma_{j}^2(t) - \sigma_j^2\right)\right| \leq \lambda (m_j(t) - K) Q_j(t) \ + \ \dfrac{\log(3/\delta)}{\lambda}. 
\]
Optimizing over $\lambda$ (i.e., minimizing the resulting bound) yields \[
\lambda^* = \sqrt{\dfrac{\log(3/\delta)}{Q_j(t)(m_j(t)-K)}}.
\]

Applying the Freedman boundary constraint over $\lambda^*$ yields:
\begin{align*}
    \lambda^* = \sqrt{\dfrac{\log(3MT/\delta)}{Q_j(t)(m_j(t)-K)}} &< \dfrac{1}{2\bar \eta^2},
    \\
    m_j(t) &> K + \dfrac{4\bar \eta^4 \log(3MT/\delta)}{Q_j(t)},
    \\
     m_j(t) &> K + \dfrac{4\bar \eta^4 \log(3MT/\delta)}{\nu}. 
\end{align*}
The last line follows from the definition of $Q_j(t)$. Applying a union bound over all sources $j \in S_{\mathcal{G}} \subseteq [M]$ and all $t \in [T]$ yields the following bound that holds with probability at least $1-\delta/3$.

\[
\left| \hsigma_{j}^2(t) - \sigma_j^2\right| \leq 2\sqrt{\dfrac{Q_j(t) \log(3MT/\delta)}{m_j(t) - K}}.
\]

For the above bound to hold, it suffices that
\begin{equation*}\label{eq:sigconc-constraint}
m_j(t) > K + \frac{4\bar{\eta}^4 \log(3MT/\delta)}{\nu}.
\end{equation*}
\end{proof}

\subsection{Proof of the Sandwiching Corollary}\label{pf:swcorr}
\VarianceSandwich*
\begin{proof}
 Recall from \cref{lem:sigConc} that:  $
    \left| \hsigma_{j}^2(t) - \sigma_j^2\right| \leq 2\sqrt{\dfrac{Q_j(t) \log(3MT/\delta)}{m_j(t) - K}}.
$\\ 
Choosing $m_j(j)$ such that $m_j(t) = K + \dfrac {16 \ Q_j(t)  \log(3MT/\delta)}{\sigma_j^4}$ we have\\
\begin{align*}
\abs{\hat \sigma_j^2(t) - \sigma_j^2} \leq \dfrac{\sigma_j^2}{2} \implies\dfrac{\sigma_j^2}{2} \leq {\hat \sigma_j^2(t) } \leq \dfrac{3 \sigma_j^2}{2},
\\
\implies \sigma_j^2 \leq {2\hat \sigma_j^2(t) } \leq 3 \sigma_j^2.
\end{align*}
Now $m_j(t) = K + \dfrac {16Q_j(t)  \log(3MT/\delta)}{\sigma_j^4}$ can be upper bounded by $K + \dfrac{16 \bar \eta^4 \log(3MT/\delta)}{{c^*}^4}$ using our regime condition ($\exists \ j \in S_{\cG} \subseteq [M]$ such that $\sigma_j^2 \ge {c^*}^2$ ) and \cref{ass:bounded}.
\end{proof}

\subsection{Proof of the mean reward concentration:}\label{pf:muConc}
\muconc*
\begin{proof}
Suppose $T_t(i) = \{s \leq t : i_s = i\}$ denotes the set of time steps at which arm $i$ was selected up to time $t$. Consider the sequence of random variables defined by
\[
D_{t,i} \defeq X_{t,i} - \mu_i, \quad \text{for all } t \in T_t(i).
\]

To establish a concentration bound, the key observation is that $\{D_{t,i}\}_{t \in T_t(i)}$ forms a martingale difference sequence with respect to the filtration 
$
\mathcal{F}_{t-1} = \sigma\left(\{X_s, i_s, j_s\}_{s=1}^{t-1}, i_t = i\right),
$
generated by all arm selections and observations prior to round $t$. Precisely note that each $D_{t,i}$ is integrable, adapted to $\mathcal{F}_{t}$, and satisfies
$
\E[D_{t,i}\mid\mathcal{F}_{t-1}] 
= \E[X_{t,i} - \mu_i \mid \mathcal{F}_{t-1}]
= \mu_i - \mu_i = 0.
$
Thus, by construction, the sequence satisfies the martingale difference property and enables us to leverage standard martingale concentration inequalities.

\freedmanineq*

We have a bound $\bar \eta$ on $D_{t,i}$, the first requirement of the theorem has already been proven. Applying Freedman's, we get

\begin{align*}
&\mathbb{P}\left(\sum_{t \in T_i(t)} D_{t,i} \leq \lambda \sum_{t \in T_i(t)} \mathbb{E}[D_{t,i}^2 | \mathcal{F}_{t-1}] + \frac{\log(3/\delta)}{\lambda}\right) \geq 1 - \dfrac{\delta}{3}
,\\
&\mathbb{P}\left(\sum_{t \in T_i(t)} D_{t,i} \leq \lambda \sum_{j=1}^M n_{ij}(t)\cdot\sigma_{j}^2 + \frac{\log(3/\delta)}{\lambda}\right) \geq 1 - \dfrac{\delta}{3}. ~~\left[\mathbb{E}[D_{t,i}^2 | \mathcal{F}_{t-1}] = \sigma_{j}^2 \right] 
\\
\end{align*}
Optimizing over $\lambda$ gives us 
$
\lambda^* = \sqrt{\dfrac{\log(3/\delta)}{\sum_{j=1}^M n_{ij}(t)\cdot\sigma_{j}^2}}.
$

\[
\implies \mathbb{P}\left(\sum_{t \in T_i(t)} D_{t,i} \leq 2\sqrt{\log \left( \frac{3}{\delta} \right) \sum_{j=1}^{M}n_{ij}(t) \sigma_{j}^2} \right) \geq 1 - \dfrac{\delta}{3}.
\]
A union bound across K arms and T time-steps gives us
\[
\implies \mathbb{P}\left(\sum_{t \in T_i(t)} D_{t,i} \leq 2\sqrt{\log \left( \frac{3KT}{\delta} \right) \sum_{j=1}^{M}n_{ij}(t) \sigma_{j}^2} \right) \geq 1 - \dfrac{\delta}{3}.
\]
Dividing both sides by $n_{i}(t)$ we get obtain concentration bound with probability $1-\dfrac{\delta}{3}$
\[
\abs{\hat{\mu}_t(i) - \mu_i} \leq \dfrac{2\sqrt{ \log(3KT/\delta) \sum_{j=1}^M n_{ij}(t)\sigma_j^2 }}{n_{i}(t)}.
\]

To conclude the proof of \cref{lem:muConc}, we must satisfy the Freedman boundary constraint. Specifically we need to make sure $n_{i}(t)$ is large enough so $\lambda^* < \dfrac{1}{\bar \eta}$
\[
\lambda^* = \sqrt{\frac{\log(3KT/\delta)}{\sum_{j=1}^n n_{ij}(t)\cdot\sigma_{j}^2}} < \dfrac{1}{\bar \eta}.
\]

\paragraph{Variance Identification (VarI):}
If a source $j \in S_{\mathcal{G}}$ satisfies $\sigma_j^2 \geq {c^*}^2$, then we can derive an empirical test to identify such sources with high probability.
Let $\tau_{\text{VarI}}$ denote the number of queries required for this identification.
Recall from the preprocessing phase that, once source $j$ has been queried at least $\tau_{\text{VarI}}$ times, we can compute the LCB on the variance estimate using \cref{eq:pp_lcb_var}. Specifically

\begin{align*}
\text{LCB}^{\sigma, \mathrm{pre}}_{\tau_{\text{VarI}}}(j) &=  \hat \sigma_{j}^2(\tau_{\text{VarI}}) - 8{\bar \eta}^2 \sqrt{\dfrac{\log(12M/\delta)}{\tau_{\text{VarI}}}},
\\
&\geq \sigma_j^2 -  16\bar\eta^2 \sqrt{\dfrac{\log(12M/\delta)}{\tau_{\text{VarI}}}}, ~~[\text{Using \cref{lem:preprocVarConc}}]
\\
&\geq {c^*}^2 - 16 \bar\eta^2 \sqrt{\dfrac{\log(12M/\delta)}{\tau_{\text{VarI}}}}, ~~[\text{As }\sigma_j^2 > {c^*}^2]
\\
&\geq {c^*}^2 - \dfrac{{c^*}^2}{2}, ~~[\text{Choosing } \tau_{\text{VarI}} = \dfrac{1024 \ \bar \eta^4 \log(12M/\delta)}{{c^*}^4}]
\\
&\geq \dfrac{{c^*}^2}{2}.
\end{align*}
Hence if  a source has $\text{LCB}^{\sigma, \mathrm{pre}}_{\tau_{\text{VarI}}}(j) \geq \dfrac{{c^*}^2}{2}$ after being queried for $\tau_{\text{VarI}}$ iterations, then we can conclude that its true variance $\sigma_j^2$, satisfies $\sigma_j^2 \ge {c^*}^2$ with high probability.\\
Finally, note that from \cref{thm:taup_bound} $\tau_P = O(\tau_{\text{VarI}})$, so $\tau_P$ subsumes $\tau_{\text{VarI}}$. Henceforth, we only use $\tau_P$.

\paragraph{Required starting conditions:}
Consider the denominator of $\lambda^*$. Let $\tilde j$ be a source that satisfies our \textbf{VarI} check. Now
\[
\sum_{j=1}^n n_{ij}(t)\cdot\sigma_{j}^2 \geq n_{i \tilde j}(t) \cdot \sigma_{\tilde j}^2 \geq \alpha {c^*}^2.
\]
where $\alpha$ is the number of times we must query an arm with source $\tilde j$.
Hence 
\begin{align*}
    \sqrt{\frac{\log(3KT/\delta)}{\alpha c^*}} &< \dfrac{1}{\bar \eta} \implies
    \alpha > \dfrac{\bar \eta^2 \log(3KT/\delta)}{{c^*}^2}.
\end{align*}
which is the required condition for the mean reward concentration bound to hold
\end{proof}

\newpage
\subsection{Validity of the UCB/LCB Constructions}\label{pf:coverage}
\begin{restatable}[Mean Reward UCB]{corollary}{mucb}
\label{cor:ucb}
Consider any $\delta \in (0,1)$. 
At any time step $t \in [T]$ and arm $i \in [K]$, with probability at least $1-\dfrac{\delta}{3}$: 
    \begin{align*}
        \mu_i \leq \ucb_t^\mu(i).
    \end{align*}
\end{restatable}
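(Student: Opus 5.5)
The plan is to combine the mean-reward concentration of \cref{lem:muConc} with the variance sandwiching of \cref{cor:varsandwich}. Both hold on a common high-probability event. Let $\mathcal{E}_\mu$ be the event of \cref{lem:muConc}: it holds uniformly over $t\in[T]$ and $i\in[K]$ thanks to the union bound inside that lemma. Let $\mathcal{E}_\sigma$ be the event of \cref{cor:varsandwich}, on which $\sigma_j^2 \le 2\hat\sigma_j^2(t)$ for every source $j \in S_{\cG}$ that has received the initial $\beta$ queries. Both lemmas need their preconditions, and the initial exploration phase of \cref{alg:soar} supplies them. Each arm is queried $\alpha$ times with the source $\bar j$ satisfying $\lcb^{\sigma,\mathrm{pre}}_{\tau_p}(\bar j)\ge {c^*}^2/2$, which meets the Freedman boundary constraint of \cref{lem:muConc}. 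Each surviving source is queried $\beta$ times, and $\beta$ dominates the thresholds in \cref{lem:sigConc} and \cref{cor:varsandwich}.

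On $\mathcal{E}_\mu\cap\mathcal{E}_\sigma$ the key chain is
\begin{align*}
\mu_i \le \hat\mu_i(t) + \frac{2\sqrt{\log(3KT/\delta)\sum_{j} n_{ij}(t)\sigma_j^2}}{n_i(t)}
\le \hat\mu_i(t) + \frac{2\sqrt{2\log(3KT/\delta)\sum_{j} n_{ij}(t)\hat\sigma_j^2(t)}}{n_i(t)} = \ucb_t^\mu(i).
\end{align*}
The first inequality is the one-sided form of \cref{lem:muConc}. The second replaces each $\sigma_j^2$ by $2\hat\sigma_j^2(t)$ term by term, which is valid because the summands are nonnegative and the square root is monotone.

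The main obstacle is that \cref{lem:muConc} only gives a one-sided Freedman bound, so the lower deviation must be obtained separately. I would do this by applying Freedman's inequality to $-D_{t,i}$, which is bounded by $\bar\eta$ in the same way.

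A second issue is that the sandwich only covers sources in $S_{\cG}$ after initial exploration. Sources eliminated by \algprec\ are never queried in the adaptive phase, and their preprocessing samples were all taken on the single fixed arm $i_0$. For those samples I would either count them with the pre-estimates or restrict the sum to $j\in S_{\cG}$. This needs care: for any $j$ with $n_{ij}(t)>0$ we need $\hat\sigma_j^2(t)$ defined and sandwiched. I would handle it by noting that every source with $n_{ij}(t)>0$ has $m_j(t)\ge\beta$, or else $\tau_p$ samples, which exceeds the corollary's threshold.

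Finally, the probability accounting. The statement asks for $1-\delta/3$, whereas the intersection of the two events gives $1-2\delta/3$ by a union bound. I would argue that the sandwich event is already included in the global event used throughout. Failing that, I would accept the constant and rescale $\delta$, noting that it only changes logarithmic factors.
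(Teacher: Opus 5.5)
Your main chain is the paper's proof: \cref{lem:muConc}, then $\sigma_j^2\le 2\hat\sigma_j^2(t)$ from \cref{cor:varsandwich} substituted term by term, then \eqref{eq:ucb_mu1}.

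Two of your side remarks are right and go beyond the paper's one-line proof.
First, the paper's Freedman step bounds $\sum_t D_{t,i}=n_i(t)(\hat\mu_i(t)-\mu_i)$ from above. That is the lower-confidence direction. Applying Freedman to $-D_{t,i}$ gives exactly the direction coverage needs, and it is the only direction needed.
Second, intersecting the two events gives only $1-2\delta/3$. Rescaling $\delta$ would change the constants inside $\ucb_t^\mu(i)$, so the honest conclusion is the weaker probability. That is all the theorem's overall $1-\delta$ budget uses anyway.

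There is a genuine gap in extending the sandwich to every source with $n_{ij}(t)>0$.
\cref{cor:varsandwich} is stated only for a source with $\sigma_j^2\ge{c^*}^2$. Its threshold $K+16\bar\eta^4\log(3MT/\delta)/{c^*}^4$ comes from requiring $m_j(t)-K\ge 16Q_j(t)\log(3MT/\delta)/\sigma_j^4$ and then using $\sigma_j^4\ge{c^*}^4$ for that particular $j$.
The regime assumption supplies only one such source. Other survivors may have $\sigma_j^2\ll{c^*}^2$, and this can include $j^*$, which receives most of the adaptive-phase pulls.
For such sources, $\beta$ samples give, via \cref{lem:sigConc}, only an additive error of order ${c^*}^2$. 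Nothing keeps $\hat\sigma_j^2(t)$ away from $0$, so $\sigma_j^2\le 2\hat\sigma_j^2(t)$ can fail. Your second inequality then breaks whenever much of $n_i(t)$ comes through such a source.

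Your fix for eliminated sources fails too. $\tau_p$ has no $\log T$ factor, so it need not exceed even the stated threshold $K+16\bar\eta^4\log(3MT/\delta)/{c^*}^4$.

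The paper's one-line proof applies the corollary in the same unjustified way.
What the lemmas do give after $\beta$ samples, with $Q_j(t)\le\bar\eta^4$ as the sandwich proof assumes, is the floored bound $\sigma_j^2\le\max\{2\hat\sigma_j^2(t),{c^*}^2\}$.
To close the argument you need one of two things:
\begin{itemize}
\item a variance floor of ${c^*}^2$ inside the bonus, as \cref{rem:cstar} suggests, or
\item the extra hypothesis that every source with $n_{ij}(t)>0$ has $\sigma_j^2\ge{c^*}^2$.
\end{itemize}
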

\begin{proof}
The proof of the above result directly follows from the mean reward concentration defined in \cref{lem:muConc}, our definition of $\ucb_t^\mu(i)$ from \cref{eq:ucb_mu1} and an application of \cref{cor:varsandwich}.
\end{proof}

\begin{restatable}[Reward Variance LCB]{corollary}{varlcb}
\label{cor:varlcb}
Consider any $\delta \in (0,1)$. 
At any time step $t \in [T]$ and source $j \in [M]$, with probability at least $1-\dfrac{\delta}{3}$: 
    \begin{align*}
        \sigma_j \geq \lcb_t^\sigma(j).
    \end{align*}
\end{restatable}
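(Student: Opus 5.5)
The plan is to read the claim directly off the two-sided concentration of \cref{lem:sigConc} (proved in \cref{pf:sigConc}), together with the definition of $\lcb_t^\sigma(j)$ in \cref{eq:lcb_var}. The statement is written with $\sigma_j$ on the left, but $\lcb_t^\sigma(j)$ is a lower confidence bound on the \emph{variance} $\hat\sigma_j^2(t)$. The natural target is therefore $\sigma_j^2 \ge \lcb_t^\sigma(j)$. I will prove that inequality, and remark that this is the form actually used in the source-selection rule $j_t \in \arg\min_j \lcb_{t-1}^\sigma(j)$.

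\textbf{Step 1: fix the good event.} Let $\mathcal{E}_\sigma$ be the event, of probability at least $1-\delta/3$, on which \cref{lem:sigConc} holds simultaneously for all $j\in S_{\cG}$ and all $t\in[T]$. The lemma's proof already includes a union bound over $M$ sources and $T$ rounds, which is where the $\log(3MT/\delta)$ factor comes from. The lemma requires the sample-size condition $m_j(t) \ge K + 4\bar\eta^4\log(3MT/\delta)/\nu$. This holds for every surviving source after the initial exploration, because each $j\in S_{\cG}$ is queried $\beta$ times and $\beta$ in \cref{eq:betaConstraint} dominates this threshold. Counts only grow afterwards, so the condition persists for all later $t$.

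\textbf{Step 2: one-sided consequence.} On $\mathcal{E}_\sigma$ the proof of \cref{lem:sigConc} gives
\[
\hat\sigma_j^2(t) - \sigma_j^2 \le 2\sqrt{\frac{Q_j(t)\log(3MT/\delta)}{m_j(t)-K}},
\]
where the final display of that proof carries the factor $2$. Rearranging yields exactly $\sigma_j^2 \ge \hat\sigma_j^2(t) - 2\sqrt{Q_j(t)\log(3MT/\delta)/(m_j(t)-K)} = \lcb_t^\sigma(j)$. This inequality holds uniformly in $t$ and $j$ with probability at least $1-\delta/3$.

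\textbf{Step 3: the main obstacle, the data-dependent $Q_j(t)$.} When $\kappa_j$ is unknown, $Q_j(t)=\max\{\bar\eta^2\hat\sigma_j^2(t),\nu\}$ is itself random, so the Freedman step with a variance proxy depending on $\hat\sigma_j^2(t)$ needs justification. My approach is to verify that this proxy dominates the true conditional variance $\kappa_j$ on the good event. Bounded support gives $\kappa_j \le \bar\eta^2\sigma_j^2$. On the event of \cref{cor:varsandwich}, $\sigma_j^2 \le 2\hat\sigma_j^2(t)$, so $\kappa_j \le 2\bar\eta^2\hat\sigma_j^2(t)$. The bound can therefore be run with the deterministic proxy $\bar\eta^2\sigma_j^2$ and then replaced by $Q_j(t)$ at the cost of an absolute constant. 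That constant is absorbed into the factor $2$ already present in the confidence width, possibly by also using $\lcb$'s factor-$2$ slack.

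If one insists on the literal standard-deviation form: on the same event, when $\lcb_t^\sigma(j)\le 0$ the claim is trivial. Otherwise it follows by taking square roots, reading $\lcb_t^\sigma(j)$ as a bound on $\sigma_j^2$.
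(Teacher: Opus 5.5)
Your Steps 1--2 are the paper's argument. The paper's proof is a one-line appeal to \cref{lem:sigConc}, the definition \cref{eq:lcb_var} and \cref{cor:varsandwich}. Rearranging the upper-deviation half of the lemma, with the factor $2$ that its proof actually delivers, gives $\sigma_j^2 \ge \lcb_t^\sigma(j)$ for all $j\in S_{\cG}$ and all $t$ after the initial exploration, on one event of probability at least $1-\delta/3$. Reading the claim as a statement about $\sigma_j^2$ is correct, and if you take the lemma as given, as the paper does, nothing more is needed. Two of your additional claims, however, are wrong.

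\textbf{The ``literal'' form.} From $\sigma_j^2\ge\lcb_t^\sigma(j)>0$, taking square roots gives $\sigma_j\ge\sqrt{\lcb_t^\sigma(j)}$, not $\sigma_j\ge\lcb_t^\sigma(j)$. The literal inequality is in fact false in general. If $\sigma_j>1$ and source $j$ keeps being queried (e.g.\ $j=j^*$), then $\lcb_t^\sigma(j)$ concentrates near $\sigma_j^2>\sigma_j$. So the squared reading is forced, and your closing paragraph should be dropped.

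\textbf{Step 3 does not close as written.} Two problems:
\begin{itemize}
\item \emph{Circularity.} Invoking \cref{cor:varsandwich} to get $\sigma_j^2\le 2\hat\sigma_j^2(t)$ is circular, because that corollary is proved by plugging the random-proxy bound of \cref{lem:sigConc} into a sample-size condition. A non-circular version must derive the sandwich from the deterministic-proxy bound. That requires $m_j(t)-K$ of order $\bar\eta^2\log(3MT/\delta)/\sigma_j^2$, hence a lower bound on $\sigma_j^2$, which is available only for sources with $\sigma_j^2\ge{c^*}^2$ and not for every $j\in S_{\cG}$.
\item \emph{The constant is not absorbed.} Freedman with proxy $\bar\eta^2\sigma_j^2$ and optimized $\lambda$ gives width $2\sqrt{\bar\eta^2\sigma_j^2\log(3MT/\delta)/(m_j(t)-K)}$. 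Using $\bar\eta^2\sigma_j^2\le 2Q_j(t)$ turns this into $2\sqrt{2}\,\sqrt{Q_j(t)\log(3MT/\delta)/(m_j(t)-K)}$, which is strictly more than what \cref{eq:lcb_var} subtracts. The factor $2$ in \cref{eq:lcb_var} is exactly the one produced by optimizing $\lambda$, the same one you used in Step 2, so there is no spare slack.
\end{itemize}
As it stands, Step 3 would only certify an LCB widened by $\sqrt{2}$. Either drop Step 3 and rely on \cref{lem:sigConc}, or enlarge the constant in the LCB and supply an independent sandwich argument.
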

\begin{proof}
The proof of the above result directly follows from the mean reward concentration defined in \cref{lem:sigConc}, our definition of $\lcb_t^\mu(i)$ from \cref{eq:lcb_var} and an application of \cref{cor:varsandwich}.
\end{proof}
\newpage
\section{Regret Analysis:}
\subsection{Proof of \cref{thm:regfin}} \label{pf:main_regret}
\thmregfin*

\as{I am afraid the bounds are quite off all in all: $Q_j(t)$ does not make sense at all, probably you need $T$}

\as{I think $\sqrt{K^2c^{*2}M}$ term is not needed in the O() notation, if we choose $c^* \leq {1}$ (assuming $\bar \mu \geq 1$ at least), right?}

\as{why the 1st and the 4th term in the regret is exactly same??!! we need to out more effort simplifying the 1st 4 terms in the O() notation, instead of leaving it for the readers to figure out the tradeoff}

\as{Again the 1st and the 4th term in the regret is exactly same in the WC regret too}

\noindent \textbf{Proof:}
\as{write a intro line, no abrupt/akward start...}

\paragraph{Exploration Regret:}
We begin by stating the regret accumulated by preprocessing and initial exploration:
\begin{enumerate}
    \item Preprocessing: Preprocessing accumulates regret of order $M\tau_p$, where
        $\tau_p \defeq \dfrac{1024 \bar \eta^4 \log(12M/\delta)}{{c^*}^4}$.  
    \item Initial exploration: The initial exploration phase accumulates regret of order $K\alpha + \tilde{M}\beta$, where $\tilde{M} \defeq |S_{\mathcal{G}}|$ and
    \[
        \alpha = \frac{\bar \eta^2 \log(3KT/\delta)}{{c^*}^2},
        \qquad
        \beta = 2K + \frac{4 \bar{\eta}^4 \log(3MT/\delta)}{\nu}
        + \frac{16\bar \eta^4 \log(3MT/\delta)}{{c^*}^4}.
    \]
\end{enumerate}

Hence, the regret incurred from exploration is:
\begin{align*}
\text{Reg}^{\text{Exp}}_T &= M\tau_p \bar \mu + K\alpha \bar \mu + M\beta \bar \mu 
\\
 &=    \frac{M \cdot 1024 \bar \eta^4 \log(12M/\delta)}{{c^*}^4} \bar \mu +  \frac{K\bar \eta^2 \log(3KT/\delta)}{{c^*}^2} \bar \mu  \space \\
 &+ M\left(2K + \frac{4 \bar{\eta}^4 \log(3MT/\delta)}{\nu}  + \frac{16 \bar \eta^4 \log(3MT/\delta)}{{c^*}^4}\right)\bar \mu.
\end{align*}

The order of $\text{Reg}^{\text{Exp}}_T$ is 
\begin{align*}
\tilde O(\text{Reg}^{\text{Exp}}_T) 
= \ \tilde O\left( \dfrac{ M\bar \eta^4 \bar \mu}{{c^*}^4} + {\frac{K\bar \eta^2 \bar \mu}{{c^*}^2}}
+ {KM \bar \mu + \frac{M\bar \eta^4 \bar \mu}{\nu} + \frac{M \bar \eta^4 \bar \mu}{{c^*}^4}} \right).
\end{align*}

We now seek to bound the regret for the remaining $T - M\tau_p - K\alpha -\tilde M \beta$ rounds of \algmab \. To achieve this we must first bound $n_i(t)$ and $m_j(t)$.
\paragraph{Bounding $n_i(t)$:}
Note that at any time $t$, arm-$i$ can not be selected if $\ucb_t^\mu(i) \leq \ucb_t^\mu(i^*)$. So arm-$i$ only gets selected at time $t$ if:
\begin{align*}
 \mu_{i^*} & \leq \ucb_t^\mu(i^*) \leq \ucb_t^\mu(i) = \hat{\mu}_i(t) + \dfrac{2\sqrt{ \log(3KT/\delta) \sum_{j=1}^{\tilde M} n_{ij}(t)\cdot2\hat\sigma_j^2(t) }}{n_i(t)}, ~~\left[ \text{ From \cref{cor:varsandwich}}\right] 
\\
\mu_{i^*} &  \leq {\mu}_i + \dfrac{4\sqrt{ \log(3KT/\delta) \sum_{j=1}^M n_{ij}(t)\cdot(3\sigma_j^2)  }}{n_i(t)}, ~~[\text{Using (\cref{lem:muConc})} ]
\\
\implies n_i(t) & \leq \dfrac{4\sqrt{3 \log(3KT/\delta) \sum_{j=1}^M n_{ij}(t)\cdot \sigma_j^2  }}{\Delta_i}. 
\end{align*}

\paragraph{Bounding $m_j(t)$:}
We also know that at any time $t$, a non optimal source $j$ gets selected if, $ \lcb_t^\sigma(j) \leq \lcb_t^\sigma(j^*)$  
\begin{align*}
&  \sigma_j^2 \leq \ucb_t^\sigma(j),
\\
\implies & \sigma_j^2 \leq  \hat \sigma_j(t)^2 + \text{conf}_t(j) ,
\\
 & \sigma_j^2 - 2\cdot \text{conf}_t(j) \leq  \hat \sigma_j(t)^2 - 2\cdot \text{conf}_t(j)+ \text{conf}_t(j),
\\
 & \sigma_j^2 - 2\cdot \text{conf}_t(j) \leq  \hat \sigma_j(t)^2 - \text{conf}_t(j) \leq \lcb_t^\sigma(j), 
\\
& \sigma_j^2 - 2\cdot\text{conf}_t(j) \leq  \lcb_t^\sigma(j) \leq \lcb_t^\sigma(j^*) \leq \sigma^{*2},
\\
& \sigma_j^2 - \sigma^{*2} \leq 2\cdot\text{conf}_t(j), 
\\
\implies & \Delta_j^{\sigma^2} \leq 2\cdot\text{conf}_t(j). 
\end{align*}
Where $\Delta_j^{\sigma^2}$ is defined as $\sigma_j^2 - {\sigma^*}^2$\\
From our variance concentration defined in {\cref{lem:sigConc}}
\[
\text{conf}_t(j) = 2\sqrt{\dfrac{Q_j(t) \log(3MT/\delta)}{m_j(t) - K}}.
\]
Hence we have
\begin{align*}
    \Delta_j^{\sigma^2} &\leq 4\sqrt{\dfrac{Q_j \log(3MT/\delta)}{m_j(t) - K}},
    \\
    m_j(t)-K &\le\dfrac{16  Q_j(t) \log(3MT/\delta)}{(\Delta_j^{\sigma^2})^2},
    \\
    m_j(t) &\le K + \dfrac{16 Q_j(t) \log(3MT/\delta)}{(\Delta_j^{\sigma^2})^2}.
\end{align*}

Therefore, for our algorithm to operate normally i.e, select the optimal source we must query each source at least:
\begin{equation} \label{eq:mjtBound}
    m_j(t) > K + \dfrac{16 Q_j(t) \log(3MT/\delta)}{(\Delta_j^{\sigma^2})^2}.
\end{equation}

\paragraph{Deriving regret for the remaining rounds:} The Regret in $T- M\tau_p - K\alpha - \tilde M\beta$ rounds of Running \algmab:
\begin{align*}
    \text{Reg}_T  = \sum_{i = 2}^K n_i(T) \Delta_i &\leq  \sum_{i=2}^K 4\sqrt{3 \log(3KT/\delta) \sum_{j=1}^M n_{ij}(t)\cdot\sigma_j^2},
    \\
    \dfrac{\text{Reg}_T}{4\sqrt{3\log(3KT/\delta)}} &{\leq} \sum_{i=2}^K\sqrt{ \sum_{j|\sigma_j- {\sigma^*} > \gamma} n_{ij}(T) \ \sigma_j^2 + \sum_{j|\sigma_j - {\sigma^*} \leq \gamma} n_{ij}(T) \ \sigma_j^2 } ~~[\text{split surviving variances across $\gamma$}],
    \\
     &{\leq} \sum_{i=2}^K \left(\sqrt{ \sum_{j|\sigma_j - {\sigma^*} > \gamma} n_{ij}(T) \ \sigma_j^2} + \sqrt{\sum_{j|\sigma_j - {\sigma^*} \leq \gamma} n_{ij}(T) \ \sigma_j^2 }\right)  ~~[\sqrt{a+b} \leq \sqrt{a} + \sqrt{b}],
    \\
      &{\leq} \sum_{i=2}^K  \left( \sqrt{ \sum_{j|\sigma_j - {\sigma^*} > \gamma} n_{ij}(T) \ (\sigma_j - \sigma^* + \sigma^*)^2} + \sqrt{\sum_{j|\sigma_j - {\sigma^*} \leq \gamma} n_{ij}(T) \ \sigma_j^2 } \right)  ~~[\pm \sigma^*], 
    \\
     &\leq \underbrace{\sum_{i=2}^K\sqrt{ \sum_{j|\sigma_j - {\sigma^*} > \gamma} n_{ij}(T) \ (\sigma_j - \sigma^*)^2}   }_{A} +  \underbrace{\sum_{i=2}^K\sqrt{ \sum_{j|\sigma_j - {\sigma^*} > \gamma} n_{ij}(T) \cdot {\sigma^*}^2}}_{B} \\
     &+ \sum_{i=2}^K\underbrace{\sqrt{\sum_{j|\sigma_j - {\sigma^*} \leq \gamma} n_{ij}(T) \ \sigma_j^2 }}_{C}.  
\end{align*}
where the last step follows since $[(a+b)^2 \le 2a^2 + 2b^2]$. We will bound term A, term B and term C separately, as shown below:

\paragraph{i) Bounding Term A:}
\begin{align*}
    \text{A} &= (K-1) \sum_{i=2}^K\dfrac{1}{K-1}\sqrt{\sum_{j|\sigma_j - {\sigma^*} > \gamma}n_{ij}(T) (\sigma_j- {\sigma^*})^2},
    \\
    &\leq \sqrt{(K-1)} \sqrt{\sum_{i=2}^K \sum_{j|\sigma_j - {\sigma^*} > \gamma} n_{ij}(T) (\sigma_j-{\sigma^*})^2} \quad ~~[\text{Using Cauchy-Schwarz inequality \citep{steele2004cauchy}}],
    \\
    &\leq \sqrt{K} \sqrt{ \sum_{j|\sigma_j - {\sigma^*} > \gamma} m_T(j) (\sigma_j-{\sigma^*})^2},
    \\
    &\le \sqrt{K} \sqrt{\sum_{j|\sigma_j - {\sigma^*} > \gamma} \left(K + \dfrac{16 \ Q_j(t) \log(3MT/\delta)}{(\Delta_j^{\sigma^2})^2} \right) (\sigma_j-{\sigma^*})^2} \quad ~~[\text{From \ref{eq:mjtBound}}],
    \\
    &\le \sqrt{K} \sqrt{\sum_{j|\sigma_j - {\sigma^*} > \gamma} K(\sigma_j-\sigma^*)^2 + \sum_{j|\sigma_j - {\sigma^*} > \gamma} \dfrac{16 \ Q_j(t) \log(3MT/\delta)}{(\sigma_j-\sigma^*)^2(\sigma_j + \sigma^*)^2}  (\sigma_j-{\sigma^*})^2},
    \\
    &\le \sqrt{K} \sqrt{\sum_{j|\sigma_j - {\sigma^*} > \gamma} K\left(\sqrt{{\sigma^*}^2 + {c^*}^2 } -\sigma^*\right)^2 + \sum_{j|\sigma_j - {\sigma^*} > \gamma}\dfrac{16 \ Q_j(t) \log(3MT/\delta)}{(\sigma_j + \sigma^*)^2}} 
    ~~\left[\sigma_j < \sqrt{{\sigma^*}^2 + {c^*}^2 } \right], 
    \\
    &\le \sqrt{K} \sqrt{\sum_{j|\sigma_j - {\sigma^*} > \gamma} K{c^*}^2  + \sum_{j|\sigma_j - {\sigma^*} > \gamma}\dfrac{16 \ Q_j(t) \log(3MT/\delta)}{(\sigma_j + \sigma^*)^2}} ~~[\sqrt{a+b} \leq \sqrt{a} + \sqrt{b}],
    \\
    &\le \sqrt{K} \sqrt{ KM{c^*}^2  + \sum_{j|\sigma_j - {\sigma^*} > \gamma}\dfrac{16 \ Q_j(t) \log(3MT/\delta)}{(\gamma + 2\sigma^*)^2}} ~~[\text{Using $\gamma$ constraint from the sum}].
    \\
\end{align*}

The final bound on term A is: $A  \le \sqrt{K} \sqrt{ KM{c^*}^2  + \sum\limits_{j|\sigma_j - {\sigma^*} > \gamma}\dfrac{16 \ Q_j(t) \log(3MT/\delta)}{(\gamma + 2\sigma^*)^2}}$

\paragraph{ii) Bounding Term B:}
\begin{align*}
B &=\sum_{i=2}^K\sqrt{ \sum_{j|\sigma_j - {\sigma^*} \leq \gamma} n_{ij}(T) \cdot {\sigma_j}^2}
 \leq \sum_{i=2}^K\sqrt{ \sum_{j=1}^M n_{ij}(T) \cdot {\sigma^*}^2},
\\
 & \leq \sigma^* \sum_{i = 2}^K \frac{\sqrt{ \sum_{j=1}^M n_{ij}(T) 2L \ \sigma^* }}{\sqrt \Delta_i}\frac{\sqrt{\Delta_i} }{\sqrt{2L\sigma^*}},   
\\ 
& \leq \sigma^* \sqrt{\sum_{i = 2}^K \frac{2L\sigma^*}{\Delta_i} } \sqrt{ \sum_{i = 2}^K \sum_{j=1}^M \frac{n_{ij}(T) \Delta_i}{2L\sigma^*} },
\\
& \leq \frac{\sigma^*}{2} \biggsn{ {\sum_{i = 2}^K \frac{2L\sigma^*}{\Delta_i} } + { \sum_{i = 2}^K \sum_{j=1}^M \frac{n_{ij}(T) \Delta_i}{2L\sigma^*} } }
= \sum_{i = 2}^K \frac{L\sigma^{*2}}{\Delta_i} + \frac{\text{Reg}_T}{4L} ~~\left[\text{where } L = 4\sqrt{6 \log (3KT/\delta)}\right].
\end{align*}

The final bound on term B is: $B \le \sum_{i = 2}^K \frac{L\sigma^{*2}}{2\Delta_i} + \frac{ \text{Reg}_T}{4L}$
where $L = 4\sqrt{6 \log (3KT/\delta)}$. 

\paragraph{Bounding Term C:}
\begin{align*}
C &=\sum_{i=2}^K\sqrt{\sum_{j|\sigma_j - {\sigma^*} \leq \gamma} n_{ij}(T) \ \sigma_j^2 } = \sum_{i=2}^K\sqrt{\sum_{j|\sigma_j - {\sigma^*} \leq \gamma} n_{ij}(T) \ ({\gamma + \sigma^*})^2 }.
\end{align*}
Proceeding in a similar fashion to how we bounded term B, we get\\
The final bound on term C is: $C \le \sum\limits_{i = 2}^K \dfrac{L(\sigma^*+\gamma)^{2}}{\Delta_i} + \dfrac{ \text{Reg}_T}{4L}$
where $L = 4\sqrt{6 \log (3KT/\delta)}$. 

\paragraph{Putting it all together:}
Hence the regret in $T - M\tau_p - K\alpha - \tilde M\beta$ rounds of \algmab:
\begin{align*}
 \text{Reg}_T &  \le 2\sqrt{K} \sqrt{ KM{c^*}^2  + \sum_{j|\sigma_j - {\sigma^*} > \gamma}\dfrac{16 \ Q_j(t) \log(3MT/\delta)}{(\gamma + 2\sigma^*)^2}} + \sum_{i = 2}^K \frac{L\sigma^{*2}}{\Delta_i}+ \sum_{i = 2}^K \frac{L(\sigma^*+\gamma)^{2}}{\Delta_i},
 \\
 &  \le 2 \sqrt{ K^2M{c^*}^2  + \sum_{j|\sigma_j - {\sigma^*} > \gamma}\dfrac{16 \ KQ_j(t) \log(3MT/\delta)}{(\gamma + 2\sigma^*)^2}} + 2\sum_{i = 2}^K \frac{L(\sigma^*+\gamma)^{2}} {\Delta_i}~~[\text{as } \gamma \in \mathbb{R}_+].
\end{align*}

\paragraph{Total Regret:}
The total regret of \algmab  \ = \  $\text{Reg}^{\text{Exp}}_T+\text{Reg}_T$
\begin{align*}
\text{Reg}^{\text{\algmab}}_T = M\tau_p \bar \mu \ + \  K\alpha \bar \mu \ + \  M\beta \bar \mu \ + &\  2 \sqrt{ K^2M{c^*}^2  + \sum_{j|\sigma_j - {\sigma^*} > \gamma}\dfrac{16 \ KQ_j(t) \log(3MT/\delta)}{(\gamma + 2\sigma^*)^2}}  \\
&+ \sum_{i = 2}^K \frac{2L(\sigma^*+\gamma)^{2}}{\Delta_i}. 
\end{align*}
\as{punctuation issue!}
where \as{recall from \cref{which section??}  that we set:}
\as{no bullet list, just in state in a line, looking akward}

\begin{enumerate}
    \item $\tau_p = \dfrac{1024 \bar \eta^4 \log(12M/\delta)}{{c^*}^4}$. \as{$\geq$}
    \item $\alpha = \dfrac{\bar \eta^2 \log(3KT/\delta)}{{c^*}^2}$. \as{are they exactly equal? verify}
    \item $\beta =   2K + \dfrac{4 \bar{\eta}^4 \log(3MT/\delta)}{\nu} + \dfrac{16 \bar \eta^4 \log(3MT/\delta)}{{c^*}^4}$.
    \item $L = 4\sqrt{6 \log (3KT/\delta)}$.
\end{enumerate}

\paragraph{Order of Regret:} \as{write a full complete line, we don't need a paragraph for this: Combining everything, finally this leads to a regret bound of ....}

\begin{align*}
\tilde O(\text{Reg}^{\text{\algmab}}_T) =& \tilde O\left( \dfrac{ M\bar \eta^4 \bar \mu}{{c^*}^4} + {\frac{K\bar \eta^2 \bar \mu}{{c^*}^2}}
+ {KM \bar \mu + \frac{M\bar \eta^4 \bar \mu}{\nu} + \frac{M \bar \eta^4 \bar \mu}{{c^*}^4}}\right) \\ 
&+ \tilde O\left(  \sqrt{ K^2M{c^*}^2  + \sum_{j|\sigma_j - {\sigma^*} > \gamma}\dfrac{16 \ KQ_j(t) }{(\gamma + 2\sigma^*)^2}} + \sum_{i = 2}^K \frac{(\sigma^*+\gamma)^{2}}{\Delta_i}\right).
\end{align*}

\paragraph{Instance Independent \as{(WC)} Regret:}
This proves the first part of \cref{thm:regfin}, which yields an instance-dependent guarantee of \algmab.
\noindent Finally, to see the worst case (instance independent) regret analysis of \algmab, note that the regret \as{in the final} $(T-M\tau_p - K\alpha - \tilde M\beta)$ rounds of \algmab\, can be alternatively bounded as:
\begin{align*}
    \text{Reg}_T & = \sum_{i = 2}^K n_i(T) \Delta_i \leq  \sum_{i=2}^K 4\sqrt{3 \log(3KT/\delta) \sum_{j=1}^M n_{ij}(T)\sigma_j^2  },
    \\
    &  \leq \sqrt{{\sigma^*}^2 + {c^*}^2} \sum_{i=2}^K 4\sqrt{ 3\log(3KT/\delta) \sum_{j=1}^M n_{ij}(T)  } ~~(\sigma_j \leq \sqrt{{\sigma^*}^2 + {c^*}^2}),
    \\
    & \leq 4 \sqrt{{\sigma^*}^2 + {c^*}^2} \sqrt{3K \log(3KT/\delta) }\sqrt{ \sum_{i=2}^K \sum_{j=1}^M n_{ij}(T)  } ~~(\text{applying Cauchy's Schwarz}),
    \\
    & \leq 4(\sigma^* + c^*) \sqrt{3KT \log(3KT/\delta) } ~~(\sqrt{a+b} \leq \sqrt{a} + \sqrt{b}),
    \\
    & = 4(\sigma^* + c^*) \sqrt{3K T \log(3KT/\delta) } \text{\as{fullstop missing}}
\end{align*}

{\as{repeated line at the end!! You also need to adjust $\delta$ --- say/ remind how we set it. This paper needs a thorough proofreading; again, take a printout if needed and read line by line. Lmk if you need a printout.}}

Therefore the order the instance independet regret is
\begin{align*}
&\tilde O(\text{Reg}_T^{\text{Exp}} + (\sigma^* + c^*) \sqrt{KT}\\
\equiv \ &\tilde O\left( \dfrac{ M\bar \eta^4 \bar \mu}{{c^*}^4} + {\frac{K\bar \eta^2 \bar \mu}{{c^*}^2}}
+ {KM \bar \mu + \frac{M\bar \eta^4 \bar \mu}{\nu} + \frac{M \bar \eta^4 \bar \mu}{{c^*}^4}} + (\sigma^* + c^*) \sqrt{KT}\right).
\end{align*} 
proving the final claim of \cref{thm:regfin}. 

\subsection{Regret Proof in `Low-Noise Regime'}
\label{pf:regfin2}

\thmregfintwo*

\begin{proof}
To analyze the regret of this regime, we begin by deriving a lemma for the mean reward concentration similar to \cref{lem:muConc}.
\begin{restatable}[Mean-Reward Concentration: Case 2]{lemma}{muConctwo}
\label{lem:muConctwo}
Consider any $\delta \in (0,1)$. If there exists no $\ j $ such that $ \sigma_j^2 \geq {c^*}^2$ then for any time step $t \in [T], i \in [K]$, and any realization of $\alpha > {\bar \eta}^2 c^* \log(3KT/\delta)$, with probability at least $1-\dfrac{\delta}{3}$: 
    \begin{align*}
        | \mu_i - \hat \mu_t(i) | 
        & \leq \frac{ 2{c^*}\sqrt{\log(3KT/\delta)}}{\sqrt{n_i(t)}}.
    \end{align*}
\end{restatable}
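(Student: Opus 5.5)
I will mirror the proof of \cref{lem:muConc}, replacing the source-dependent variance proxy with the uniform bound ${c^*}^2$ that holds in this regime. Let $T_t(i)=\{s\le t: i_s=i\}$ and define $D_{s,i}\defeq X_s-\mu_i$ for $s\in T_t(i)$. As before, $\{D_{s,i}\}$ is a martingale difference sequence with respect to $\mathcal{F}_{s-1}=\sigma(\{X_r,i_r,j_r\}_{r<s},\, i_s=i,\, j_s)$, since the noise at round $s$ has mean zero given the chosen source. By \cref{ass:bounded}, $|D_{s,i}|\le\bar\eta$. The new ingredient is that the hypothesis $\sigma_j^2<{c^*}^2$ for every surviving source gives $\E[D_{s,i}^2\mid\mathcal{F}_{s-1}]=\sigma_{j_s}^2\le {c^*}^2$. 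Hence the predictable quadratic variation is at most $n_i(t){c^*}^2$, deterministically in terms of the count.

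Next I will apply Freedman's inequality (\cref{thm:freedman}) to $\pm D_{s,i}$ with parameter $\lambda$. This gives, with probability $1-\delta'$,
\[
\Bigl|\sum_{s\in T_t(i)} D_{s,i}\Bigr| \le \lambda n_i(t){c^*}^2+\frac{\log(2/\delta')}{\lambda}.
\]
The choice $\lambda^*=\sqrt{\log(1/\delta')/(n_i(t){c^*}^2)}$ yields the bound $2c^*\sqrt{n_i(t)\log(1/\delta')}$. I will then take a union bound over $i\in[K]$ and over the possible values of $n_i(t)\le T$, with $\delta'=\delta/(3KT)$. Because the count $n_i(t)$ is random, I fix the count value before optimizing $\lambda$ and union bound over those values; this avoids using a data-dependent $\lambda$. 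Dividing by $n_i(t)$ gives the claimed $2c^*\sqrt{\log(3KT/\delta)/n_i(t)}$.

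The main obstacle is the Freedman range constraint $\lambda^*<1/\bar\eta$, which translates to $n_i(t)>\bar\eta^2\log(3KT/\delta)/{c^*}^2$. This is where the initial-exploration requirement on $\alpha$ enters, since every arm is pulled at least $\alpha$ times before the adaptive phase. The stated threshold $\alpha>\bar\eta^2c^*\log(3KT/\delta)$ does not obviously match $\bar\eta^2\log(3KT/\delta)/{c^*}^2$ unless $c^*$ is of constant order. My first attempt would use the constraint in its natural form $\alpha\ge\bar\eta^2\log(3KT/\delta)/{c^*}^2$ and note that it agrees with the stated threshold up to constants for $c^*=\Theta(1)$.

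If the stated form must be kept exactly, I would instead use the unconstrained Bernstein/Freedman form, with tail $\exp\bigl(-x^2/(2(V+bx/3))\bigr)$. In that form the extra linear term $\bar\eta\log(3KT/\delta)/n_i(t)$ is dominated by the square-root term once $n_i(t)\ge\alpha$. Unlike \cref{lem:muConc}, no \algprec\ LCB check or variance sandwiching (\cref{cor:varsandwich}) is needed here, because the variance proxy is the known constant ${c^*}^2$ rather than an estimated $\hat\sigma_j^2$.
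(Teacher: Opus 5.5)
Your main argument is the paper's proof. Both apply Freedman's inequality to $D_{s,i}=X_s-\mu_i$, bound the predictable variation by $n_i(t){c^*}^2$, take $\lambda^*=\sqrt{\log(\cdot)/(n_i(t){c^*}^2)}$, union bound over arms and times, and use the initial exploration to meet the range condition $\lambda^*<1/\bar\eta$. Fixing the count before choosing $\lambda$ is a cleaner treatment of the random $n_i(t)$ than the paper's direct substitution. One small point: invoke the hypothesis for every $j\in[M]$, as the lemma states, not only for surviving sources. The reason is that $\hat\mu_{i_0}$ also averages the preprocessing samples drawn from eliminated sources.

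Your reading of the threshold is the right one. The paper's proof writes $\lambda^*=c^*\sqrt{\log(3KT/\delta)}/\sqrt{n_i(t)}$, with $c^*$ inverted. It nonetheless concludes that each arm needs $\alpha=\bar\eta^2\log(3KT/\delta)/{c^*}^2$ pulls, which is your natural form. So the $\bar\eta^2c^*\log(3KT/\delta)$ in the statement is a slip; it is not even invariant under rescaling the rewards, whereas a sample count must be.

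Drop the fallback paragraph, though, because that step fails. With $n=n_i(t)$, the Bernstein form bounds the deviation of $\hat\mu_i$ by $c^*\sqrt{2\log(\cdot)/n}+\tfrac{2\bar\eta\log(\cdot)}{3n}$. The linear term is dominated by the square-root term only once $n$ is at least a constant times $\bar\eta^2\log(\cdot)/{c^*}^2$, which is the same condition as $\lambda^*<1/\bar\eta$. At $n=\bar\eta^2c^*\log(3KT/\delta)$ the ratio of the linear to the square-root term is $\tfrac{\sqrt2}{3}{c^*}^{-3/2}$, which is large whenever $c^*$ is below order one.

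This requirement is not an artifact of the $\lambda<1/b$ restriction in Freedman's inequality; the range term is genuinely present. For noise equal to $\pm\bar\eta$ with small probability and $0$ otherwise, a single nonzero sample shifts $\hat\mu_i$ by $\bar\eta/n$. That exceeds $2c^*\sqrt{\log(3KT/\delta)/n}$ whenever $n<\bar\eta^2/(4{c^*}^2\log(3KT/\delta))$, and the event has probability of order $n\sigma_j^2/\bar\eta^2$, which is not negligible relative to $\delta$ in general. So the lemma should be proved with the corrected threshold on $\alpha$, which is exactly what your first route delivers.
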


\begin{proof}
Suppose $T_t(i) = \{s \leq t : i_s = i\}$ denotes the set of time steps at which arm $i$ was selected up to time $t$. Consider the sequence of random variables defined by
\[
D_{t,i} \defeq X_{t,i} - \mu_i, \quad \text{for all } t \in T_t(i).
\]

To establish a concentration bound, the key observation is that $\{D_{t,i}\}_{t \in T_t(i)}$ forms a martingale difference sequence with respect to the filtration 
$
\mathcal{F}_{t-1} = \sigma\left(\{X_s, i_s, j_s\}_{s=1}^{t-1}, i_t = i\right),
$
generated by all arm selections and observations prior to round $t$. Precisely note that each $D_{t,i}$ is integrable, adapted to $\mathcal{F}_{t}$, and satisfies
$
\E[D_{t,i}\mid\mathcal{F}_{t-1}] 
= \E[X_{t,i} - \mu_i \mid \mathcal{F}_{t-1}]
= \mu_i - \mu_i = 0.
$
Thus, by construction, the sequence satisfies the martingale difference property and enables us to leverage standard martingale concentration inequalities.

\freedmanineq*

We have a bound $\bar \eta$ on $D_{t,i}$, and the first requirement of the theorem has already been proven. Applying Freedman's, we get

\begin{align*}
&\mathbb{P}\left(\sum_{t \in T_i(t)} D_{t,i} \leq \lambda \sum_{t \in T_i(t)} \mathbb{E}[D_{t,i}^2 | \mathcal{F}_{t-1}] + \frac{\log(3/\delta)}{\lambda}\right) \geq 1 - \dfrac{\delta}{3},
\\
&\mathbb{P}\left(\sum_{t \in T_i(t)} D_{t,i} \leq \lambda \sum_{j=1}^M n_{ij}(t)\cdot\sigma_{j}^2 + \frac{\log(3/\delta)}{\lambda}\right) \geq 1 - \dfrac{\delta}{3}. ~~\left[\mathbb{E}[D_{t,i}^2 | \mathcal{F}_{t-1}] = \sigma_{j}^2 \right] 
\\
\end{align*}
 Choosing $\lambda^* = \dfrac{\sqrt{\log(3/\delta)}}{\sqrt{{c^*}^2\cdot n_{i}(t)}}$. 
\begin{align*}
\mathbb{P}\left(\sum_{t \in T_i(t)} D_{t,i} \leq \dfrac{ \sqrt{\log(3/\delta)} \sum_{j=1}^M n_{ij}(t)\cdot\sigma_{j}^2}{\sqrt{{c^*}^2 \cdot n_{i}(t)}} + \dfrac{\sqrt{{c^*}^2 \cdot n_{i}(t)}}{\sqrt{\log (3/\delta)}}\cdot\log(3/\delta)\right) &\geq 1 - \dfrac{\delta}{3},
\\
\mathbb{P}\left(\sum_{t \in T_i(t)} D_{t,i} \leq \dfrac{\sqrt{\log(3/\delta)} \sum_{j=1}^M n_{ij}(t)\cdot {c^*}}{\sqrt{n_{i}(t)}} + c^*\sqrt{n_{i}(t)\cdot\log(3/\delta)}\right) &\geq 1 - \dfrac{\delta}{3}, ~~[\text{$\sigma_j^2 < c^*$}]
\\
\mathbb{P}\left(\sum_{t \in T_i(t)} D_{t,i} \leq  2c^*\sqrt{n_{i}(t) \log(3/\delta)}\right) &\geq 1 - \dfrac{\delta}{3}. 
\end{align*}
Dividing both sides by $n_{i}(t)$, union bounding over all arms and across all timesteps, we get our mean concentration bound w.p $1-\dfrac{\delta}{3}$
\[
\abs{\hat{\mu}_i(t) - \mu_i} \leq \dfrac{2c^*\sqrt{\log(3KT/\delta)}}{\sqrt{n_{i}(t)}}.
\]

Now to satisfy the Freedman constraint we need:
\begin{align*}
    \lambda^* = \dfrac{ c^* \sqrt{\log(3KT/\delta)}}{\sqrt{n_{i}(t)}} &< \dfrac{1}{\bar \eta},
    \\
    n_{t}(i) &> \bar \eta^2 {c^*}^2 \log(3KT/\delta).
\end{align*}
That is, each arm $i \in [K]$ is pulled at least
$
\alpha = \dfrac{\bar{\eta}^2 \log(3KT/\delta)}{{c^*}^2}
$
times to ensure that the mean concentration bound and the subsequently derived UCB estimate \cref{eq:ucb-mu-two} hold with high probability.\\
\end{proof}

\paragraph{UCB equation:}
The best arm can be identified via an Upper Confidence Bound estimate using the derived mean reward concentration \cref{lem:muConctwo}, specifically
\begin{equation}\label{eq:ucb-mu-two}
\ucb_t^\mu(i) \defeq \hat \mu_t(i) + \frac{  2 {c^*}\sqrt{\log(3KT/\delta)}}{\sqrt{n_t(i)}}.
\end{equation}
Note that under our regime, we may treat every surviving source as having effective variance $c^*$ and then apply a standard UCB multi-armed bandit analysis using \cref{eq:ucb-mu-two}. With that in mind, let us returning back to the proof of \cref{thm:regfintwo}, where the regret can be derived as follows:
\paragraph{Bounding $n_{i}(t)$}
Note that at any time $t$, arm-$i$ can not be selected if $\ucb_t^\mu(i) \leq \ucb_t^\mu(i^*)$. So arm-$i$ only gets selected at time $t$ if:\\
\begin{align*}
 \mu_{i^*} & \leq \ucb_t^\mu(i^*) \leq \ucb_t^\mu(i) = \hat{\mu}_t(i) + \dfrac{2{c^*}\sqrt{ \log(3KT/\delta)}}{\sqrt{n_{i}(t)}}, 
\\
\mu_{i^*} &  \leq {\mu}_t(i) + \dfrac{4{c^*}\sqrt{ \log(3KT/\delta)}}{\sqrt{n_{i}(t)}},  ~~[\text{Using \cref{lem:muConctwo}} ]
\\
\implies n_{i}(t) & \leq \dfrac{16{c^*}^2 \log(3KT/\delta)  }{\Delta_i^2}. 
\end{align*}

\paragraph{Total Regret :}
We begin by deriving the regret incurred from the exploration phase.
\begin{align*}
\text{Reg}^{\text{Exp}}_T &= M\tau_p \bar \mu + K\alpha \bar \mu  
\\
 &=   {M \left(\dfrac{1024 \bar \eta^4 \log(12M/\delta)}{{c^*}^4} \right) \bar \mu} + K\left( \bar \eta^2 {c^*}^2\log(3KT/\delta)\right) \bar \mu  \space.   
\end{align*}

The Regret in $T - K\alpha $ rounds of Running \algmab \ in this regime :
\begin{align*}
    \text{Reg}_T  = \sum_{i = 2}^K n_i(T) \Delta_i &\leq  \sum_{i=2}^K \dfrac{16{c^*}^2 \log(3KT/\delta)  }{\Delta_i}, 
    \\
    &\leq 16{c^*}^2 \log(3KT/\delta) \sum_{i=2}^K \dfrac{1}{\Delta_i},
    \\
    & \leq  \dfrac{{c^*}^2L^2}{6} \sum_{i=2}^K \dfrac{1}{\Delta_i}.
\end{align*}
where $L = 4\sqrt{\log(3KT/\delta)}$

The total regret of \algmab $(\text{Reg}^{\text{\algmab}}_T)$  \ = \  $\text{Reg}^{\text{Exp}}_T+\text{Reg}_T = M\tau_p \bar \mu \ + \  K\alpha \bar \mu \  \ +  \sum_{i = 2}^K \dfrac{ {c^*}^{2}L^2}{6\Delta_i} $ \\
where
\begin{enumerate}
    \item $\tau_p = \dfrac{1024 \bar \eta^4 \log(12M/\delta)}{{c^*}^4}$. 
    \item $\alpha = \bar \eta^2 {c^*}^2\log(3KT/\delta)$. 
\end{enumerate}

\paragraph{Order of Regret:}
\begin{align*}
\tilde O(\text{Reg}^{\text{\algmab}}_T) =& \tilde O\left( \dfrac{ M\bar \eta^4 \bar \mu}{{c^*}^4} + K\bar \eta^2 {c^*}^2 \bar \mu +  \sum_{i = 2}^K \dfrac{ {c^*}^{2}}{\Delta_i}  \right). \\ 
\end{align*}

\paragraph{Worst Case Bound:}
This proves the first part of \cref{thm:regfintwo}, which yields an instance-dependent guarantee of \algmab.
\noindent Finally, to see the worst case (instance independent) regret analysis of \algmab, note that the regret in $T-M\tau_p - K\alpha - \tilde M\beta$ rounds of \algmab\, can be alternatively bounded as: 
\begin{align*}
\text{Reg}_T \leq \dfrac{{c^*}^2L^2 (K-1) \sqrt{T}}{6}.   
\end{align*}

Therefore, the order of  the instance-independent regret is
\begin{align*}
&\tilde O(\text{Reg}_T^{\text{Exp}} +  K{c^*}^2\sqrt{T}\\
\equiv \ &\tilde O\left( \dfrac{ M\bar \eta^4 \bar \mu}{{c^*}^4} + {K \bar \eta^2 {c^*}^2 \bar \mu} + K{c^*}^2\sqrt{T}\right).
\end{align*} 
proving the final claim of \cref{thm:regfintwo}
\end{proof}
\section{Experimental Setup and Extended Results}
\label{sec:app_expts}
\
In this section, we describe the experimental setup and present our baseline comparison results along with the results obtained by varying the number of arms $K$ and the number of sources $M$.

\subsection{Variation in the Number of Arms and Sources}
\begin{figure}[H]
    \centering
    \begin{minipage}[t]{0.48\linewidth}
        \centering
        \includegraphics[width=\linewidth, height=4cm]{SOAR_IncreasingK_Preproc.png}
        \caption{Regret of \algmab\ with varying number of arms $K \in \{5,15,30\}$}
        \label{fig:varyingK}
    \end{minipage}
    \hfill
    \begin{minipage}[t]{0.48\linewidth}
        \centering
        \includegraphics[width=\linewidth, height=4cm]{SOAR_IncreasingM_Preproc.png}
        \caption{Regret of \algmab\ with varying number of sources $M \in \{5,15,30\}$}
        \label{fig:varyingM}
    \end{minipage}
\end{figure}

We evaluate \algmab\ on synthetic multi-source bandit tasks under two complementary setups.  
Arm rewards are modeled as Gaussian random variables with fixed means and source-dependent variances. 
Arm means are either drawn uniformly from $[1,10]$ (rounded to one decimal place) or fixed as $\mu = [1,5,8,6,4]$ as described below. 
Source variances are specified directly or sampled uniformly from $[1,3]$ (rounded to one decimal place). We can choose $\nu$ as per \cref{rem:regfin} and $c^* = 2$.
All runs are executed for a time horizon of $T=10{,}000$.  
In both setups, we observe an initial linear growth in regret, which stems from the preprocessing phase combined with the initial exploration rounds. 

\paragraph{Varying $K$} 
We fix the number of sources to $M=3$ with variances 
$\sigma = [5,1,10]$, and vary the number of arms $K \in \{5,15,30\}$. 
For each configuration, arm means are drawn uniformly from $[1,10]$ (rounded to one decimal place). 
This setup isolates how \algmab\ scales with the number of arms while holding source variability constant.  

\paragraph{Varying $M$} 
We fix the arms to $\mu = [1,5,8,6,4]$ and vary the number of sources $M \in \{5,15,30\}$. 
The source variances are sampled independently from $[1,3]$ (rounded to one decimal place). 
This setup isolates how \algmab\ scales with the number of sources while holding the arm structure fixed.  The initial growth in regret clearly reflects the expected scaling with $M$, consistent with our theoretical analysis.

\subsection{Baseline Comparison Results}
\label{sec:exptsBaseline}
\as{put this back to main expts section for arXiv}

For both baselines, we set arm means in the range $[0,1]$ and source variances in the range $[1,10]$. Both scenarios are evaluated over $T=20{,}000$ rounds. 
\begin{figure}[H]
    \centering
    \begin{minipage}[t]{0.48\linewidth}
        \centering
        \includegraphics[width=\linewidth, height = 4cm]{SOAR_baseline1.png}
        
        \caption{\algmab \space vs Baseline 1: WC1}
        \label{fig:varyingK}
    \end{minipage}
    \hfill
    \begin{minipage}[t]{0.48\linewidth}
        \centering
        \includegraphics[width=\linewidth, height = 4cm]{SOAR_Baseline2.png}
        
        \caption{\algmab\space vs Baseline 2: WC2}
        \label{fig:varyingM}
    \end{minipage}
\end{figure}
\paragraph{Baseline 1: Uniform Source MAB}For the uniform-source baseline (WC-1: $K=5$, $M=3$), we construct instances with multiple high-variance sources alongside a low-variance source, creating a stark variance disparity.
In this setting, \algmab\ initially incurs a higher cost due to its exploration phase, but then stabilizes and achieves substantially lower regret by adaptively prioritizing the low-variance source. 
In contrast, the uniform baseline continues to suffer from repeatedly sampling the high-variance sources throughout the horizon.
 
\paragraph{Baseline 2: Two Phase MAB}For the two-phase baseline (WC-2: $K=10$, $M=8$), we instead consider sources with gradually increasing variances, introducing only incremental differences across sources. 
Here, \algmab\ effectively handles the fine-grained variance differences by relying on continuous confidence bounds for adaptive source selection, whereas the two-phase baseline incurs significant regret due to its rigid elimination phase.

\subsection{MovieLens Panel}
We evaluate source adaptivity on a real-world ratings dataset using a fixed panel constructed from the MovieLens-25M dataset \cite{harper2015movielens}.

\begin{figure}[H]
    \centering
    \begin{minipage}[t]{0.48\linewidth}
        \centering
        \includegraphics[width=\linewidth, height=4cm]{SOAR500_b1.png}
        \caption{\algmab\ vs.\ Baseline-1 on MovieLens.}
        \label{fig:ml_b1}
    \end{minipage}\hfill
    \begin{minipage}[t]{0.48\linewidth}
        \centering
        \includegraphics[width=\linewidth, height=4cm]{SOAR500_b2.png}
        \caption{\algmab\ vs.\ Baseline-2 on MovieLens.}
        \label{fig:ml_b2}
    \end{minipage}
\end{figure}

We select $M=15$ reviewers who have each rated the same set of $K=500$ movies, yielding a total of $7{,}500$ ratings.
Each movie is treated as an arm, with its unknown mean reward $\mu_i \in [0,5]$ corresponding to the average rating within the panel. 
Each reviewer is treated as a data source, and the empirical variance of their ratings over the selected movies is used as the source variance. Hence $\bar \eta^2 = 25$. We choose $c^* = 1, \nu = 30$.  At each round, the learner selects a movie--reviewer pair and observes the corresponding rating.
Regret is computed with respect to the movie with the highest empirical mean rating in the panel.
We compare \algmab\ against two natural baselines: (i) a uniform-source baseline that samples reviewers uniformly at random, and (ii) a two-phase baseline that first identifies a low-variance reviewer via uniform exploration and then runs a standard UCB algorithm using only that reviewer.
All algorithms are run for a fixed horizon ($T = 20{,}000$), and results are averaged over multiple independent runs with shaded regions indicating $95\%$ confidence intervals.

\cref{fig:ml_b1} and \cref{fig:ml_b2} report cumulative regret as a function of time for \algmab\ and the two baselines. The figures show that \algmab\ consistently incurs lower cumulative regret than both baselines by rapidly concentrating sampling on low-variance reviewers while identifying highly rated movies. These results indicate that \algmab\ remains effective on real-world rating data, and not only on controlled synthetic benchmarks.


\end{document}